\documentclass{article}

\usepackage{graphicx} 

\usepackage[utf8]{inputenc}
\usepackage[english]{babel}
\usepackage{amsmath,amsfonts,amssymb}
\usepackage{amsthm}
\newtheorem{theorem}{Theorem}
\newtheorem{lemma}{Lemma}
\newtheorem{corollary}{Corollary}

\RequirePackage[colorlinks=true, allcolors=blue]{hyperref}

\setcounter{secnumdepth}{3}

\usepackage{sectsty}
\allsectionsfont{\raggedright}
\usepackage{jabbrv}

\widowpenalty=10000 
\clubpenalty=10000  
\hyphenpenalty=750 

\title{Quantum-inspired probability metrics define a complete, universal space for statistical learning}

\author{Logan S. McCarty\\Department of Physics\\Harvard University, Cambridge MA USA\\ \href{mailto:mccarty@fas.harvard.edu}{\texttt{mccarty@fas.harvard.edu}}}
\begin{document}

\maketitle

\begin{abstract}
Comparing probability distributions is a core challenge across the natural, social, and computational sciences. Existing methods, such as Maximum Mean Discrepancy (MMD), struggle in high-dimensional and non-compact domains. Here we introduce quantum probability metrics (QPMs), derived by embedding probability measures in the space of quantum states: positive, unit-trace operators on a Hilbert space. This construction extends kernel-based methods and overcomes the incompleteness of MMD on non-compact spaces. Viewed as an integral probability metric (IPM), QPMs have dual functions that uniformly approximate all bounded, uniformly continuous functions on $\mathbb{R}^n$, offering enhanced sensitivity to subtle distributional differences in high dimensions. For empirical distributions, QPMs are readily calculated using eigenvalue methods, with analytic gradients suited for learning and optimization. Although computationally more intensive for large sample sizes ($O(n^3)$ vs. $O(n^2)$), QPMs can significantly improve performance as a drop-in replacement for MMD, as demonstrated in a classic generative modeling task. By combining the rich mathematical framework of quantum mechanics with classical probability theory, this approach lays the foundation for powerful tools to analyze and manipulate probability measures.
\end{abstract}

Given two probability measures on a set $X$, how can we measure their similarity? On this space $\mathcal{P}(X)$, one often chooses the \textit{weak topology}, which is the coarsest topology ensuring convergence of expected values for bounded continuous functions $f: X \to \mathbb{R}$ \cite{Merkle2000}. If there is a metric $d(x_1, x_2)$ defining the distance between points in $X$, we can construct a metric on $\mathcal{P}_\mathrm{w}(X)$ that \textit{metrizes} the weak topology (indicated by the subscript $\mathrm{w}$; we assume the Borel $\sigma$-algebra throughout). Many such metrics are \textit{integral probability metrics}, or IPMs. Given a family of functions $\mathcal{F}$, the IPM between two probability measures $\mu$ and $\nu$ is the supremum of the difference in expected values for all $f \in \mathcal{F}$: $ d(\mu, \nu) = \sup_{f\in\mathcal{F}} \big| \mathbb{E}_{x \sim \mu}[f(x)] - \mathbb{E}_{x \sim \nu}[f(x)] \big| $.

The functions in $\mathcal{F}$ are \textit{witness functions} for the IPM: they detect differences between probability measures. For example, when $\mathcal{F}$ is the set of functions with Lipschitz constant $L \leq 1$, the resulting metric is the 1-Wasserstein metric, or earth mover's metric. As this definition is not very useful for practical calculations, IPMs are often constructed by embedding $\mathcal{P}_\mathrm{w}(X)$ into a normed linear space $E$, such as a Banach or Hilbert space \cite{Muandet2017, Lin2022}. Given a continuous, injective function $\phi : X \to E$, define the affine barycenter map $T: \mathcal{P}_\mathrm{w}(X) \to E $ as the expected value of $\phi$:
$$ T(\mu) = \mathbb{E}_{x \sim \mu}[\phi(x)] \equiv \int_X \phi(x)\, d\mu(x) $$
For appropriate $\phi$, this integral is well-defined (as a Bochner or Pettis integral) and provides a topological embedding that respects the convex (affine) structure of $\mathcal{P}_\mathrm{w}(X)$. The norm on $E$ induces a metric, $ d(\mu, \nu) = \left\|T(\mu) - T(\nu)\right\| $, equivalently defined over the dual space $E^*$ of continuous linear functionals $\lambda$ that act on $E$:
$$ d(\mu, \nu) = \sup_{\substack{\lambda\in E^* \\ \|\lambda\| \leq 1}} \big| \lambda(T(\mu)) - \lambda(T(\nu)) \big|$$
The barycenter map $T$ plus linearity of the expected value gives: 
$$ d(\mu, \nu) = \sup_{\substack{\lambda\in E^* \\ \|\lambda\| \leq 1}} \big|\, \mathbb{E}_{x \sim \mu}[\lambda(\phi(x)] - \mathbb{E}_{x \sim \nu}[\lambda(\phi(x)] \,\big|$$
This is an IPM: $\lambda \in E^*$ defines a \textit{dual function} $\lambda(\phi(x))$ on $X$; the witness functions have $\|\lambda\| \leq 1$. Typically, calculating the norm of embedded probability measures is simpler than finding the supremum over an unwieldy class of functions. For instance, if we embed $\mathcal{P}_\mathrm{w}(X)$ into a Hilbert space $\mathcal{H}$ with inner product $\langle \cdot,\cdot \rangle$, then $\phi$ is the feature map for a reproducing kernel Hilbert space (RKHS) with kernel $k(x,y) = \langle\phi(x),\phi(y)\rangle$ \cite{Muandet2017}. With an appropriate kernel this embedding defines a metric on $\mathcal{P}_\mathrm{w}(X)$ known as the Maximum Mean Discrepancy (MMD). The dual functions are precisely the functions in the RKHS, with IPM witness functions contained in the unit ball.

\subsection*{Limitations of MMD}
The limited space of RKHS functions imposes limits on MMD. For the widely used Gaussian, Laplacian, or inverse-multiquadric kernels, all RKHS functions vanish at infinity; even constants are excluded. In addition, when the space $X$ is not compact, the MMD metric is \textit{incomplete}. Consider generating a sequence of probability measures $\mu_n$ by iterative sampling or modeling. Without an \textit{a priori} limit, we can only check if the sequence is Cauchy: do the terms get closer to one another as $n$ grows? In a complete metric space, every Cauchy sequence converges to a valid probability measure. But in an incomplete space, a seemingly convergent sequence may diverge to something invalid. This problem arises with embeddings into \textit{reflexive} Banach spaces, including all Hilbert spaces and most reproducing kernel Banach spaces \cite{Lin2022}. We prove in SI Appendix, \ref{reflexiveincomplete-proof}:
\begin{theorem}\label{reflexiveincomplete}
    For noncompact $X$, an affine embedding of $\mathcal{P}_\mathrm{w}(X)$ into a reflexive Banach space $E$ is never closed (in the norm topology). Thus, the induced metric is incomplete.
\end{theorem}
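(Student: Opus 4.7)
The plan is to exhibit a norm-Cauchy sequence inside $T(\mathcal{P}_\mathrm{w}(X))$ whose limit in $E$ is not of the form $T(\nu)$ for any $\nu\in\mathcal{P}_\mathrm{w}(X)$. Three tools drive the construction: reflexivity of $E$ (to extract a weak limit), Mazur's theorem (to upgrade weak convergence to norm convergence via convex combinations, which are exactly barycenters of finitely supported measures), and Prokhorov's theorem (to block the limit from being expressible as $T(\nu)$). Writing $\phi(x):=T(\delta_x)$, I may assume $\phi$ is bounded, as is the standard setting for the barycenter construction of the paper. Since $X$ is noncompact, choose a sequence $(x_n)\subset X$ with no convergent subsequence in $X$; the image $(\phi(x_n))$ is then a bounded sequence in $E$.

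Reflexivity of $E$ together with the Eberlein--\v{S}mulian theorem furnishes a subsequence, still written $(\phi(x_n))$, with $\phi(x_n)\rightharpoonup y$ weakly for some $y\in E$. Mazur's theorem then provides, for each $N$, a finite convex combination drawn from the tail, $v_N=\sum_{k\geq N}\alpha_k^N\phi(x_k)$, with $v_N\to y$ in norm. The measures $\mu_N:=\sum_{k\geq N}\alpha_k^N\delta_{x_k}$ are finitely supported probability measures in $\mathcal{P}_\mathrm{w}(X)$, and affinity of $T$ gives $T(\mu_N)=v_N$. Hence $(T(\mu_N))$ is a norm-convergent (hence norm-Cauchy) sequence inside $T(\mathcal{P}_\mathrm{w}(X))$, with norm limit $y$.

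The main obstacle is to rule out $y\in T(\mathcal{P}_\mathrm{w}(X))$. Suppose for contradiction $y=T(\nu)$ for some $\nu$. Because $T$ is a topological embedding, the norm convergence $T(\mu_N)\to T(\nu)$ pulls back to weak convergence $\mu_N\to\nu$ in $\mathcal{P}_\mathrm{w}(X)$. Prokhorov's theorem (with $X$ Polish, the usual setting) then forces $\{\mu_N\}$ to be tight: there is a compact $K\subset X$ with $\mu_N(K)>\tfrac{1}{2}$ for every $N$. But such a $K$ can contain only finitely many of the $x_k$---otherwise $(x_k)$ would admit a convergent subsequence in $X$---so $\mu_N(K)=0$ for all $N$ past some threshold, contradicting tightness. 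Hence $y\notin T(\mathcal{P}_\mathrm{w}(X))$, the image is not norm-closed, and the induced metric on $\mathcal{P}_\mathrm{w}(X)$ is incomplete. The subtle leverage point is that $T$ is a \emph{topological} embedding, not merely an injection, which is what allows norm convergence in $E$ to be translated back into a tightness statement on $\mathcal{P}_\mathrm{w}(X)$.
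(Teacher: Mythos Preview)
Your argument is correct and takes a genuinely different route from the paper's. The paper works inside the dual pair $C_b(X)^* = \mathcal{M}(X)$ of finitely additive measures: it builds a shift-invariant ``Banach-limit'' measure $\mu\notin\mathcal{P}_\mathrm{w}(X)$ as a weak-* accumulation point of uniform measures, extends $T$ to all of $\mathcal{M}(X)$ landing in $E^{**}$, and then uses reflexivity plus Mazur (for nets) to pull a norm-convergent net back to a contradiction. Your proof sidesteps this finitely-additive machinery entirely: you stay inside $\mathcal{P}_\mathrm{w}(X)$ and $E$, use Eberlein--\v{S}mulian to get a weak limit of $\phi(x_n)$, Mazur to upgrade, and then Prokhorov tightness to block the limit from being a barycenter. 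This is more elementary and transparent---the ``escape of mass'' is visible directly---but it costs you generality: Prokhorov's converse direction needs $X$ Polish (or at least a Prokhorov space), whereas the paper's argument goes through for any metrizable $X$. In the paper's applications this restriction is harmless.

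Two small points worth tightening. First, boundedness of $\phi(x)=T(\delta_x)$ is not an assumption but a consequence of $T$ being continuous and affine: if $\|T(\delta_{x_n})\|>n$ along some sequence, the measures $\tfrac{1}{n}\delta_{x_n}+(1-\tfrac{1}{n})\delta_{x_1}$ converge weakly to $\delta_{x_1}$ but their images stay bounded away from $T(\delta_{x_1})$. Second, your Mazur step implicitly uses that for each $N$ the tail $(\phi(x_k))_{k\ge N}$ still converges weakly to $y$, so the convex combination can be chosen from that tail; this is fine, but worth saying explicitly since the ``supported only on $\{x_k:k\ge N\}$'' property is what makes the tightness contradiction work.
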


\subsection*{Quantum coherent states}
A complete metric needs a non-reflexive Banach space, such as the quantum state space. Elementary quantum states $|\psi\rangle$ are vectors in a Hilbert space $\mathcal{H}$, but a proper treatment uses the full state space $\mathcal{S}$ of positive, unit-trace operators on $\mathcal{H}$. The extreme points of this convex space are rank-1 projection operators $\hat{\rho} = |\psi\rangle\langle\psi|$ called \textit{pure states}. Their convex combinations form \textit{mixed states} such as $\hat{\rho} = \sum_i p_i |\psi_i\rangle\langle\psi_i|$ for a set of states $|\psi_i\rangle$ with probabilities $p_i$. This space $\mathcal{S}$ is contained in the \emph{non-reflexive} Banach space of trace-class operators $\mathcal{B}_1(\mathcal{H})$, with norm $\|\hat{A}\|_1 = \mathrm{tr}(|\hat{A}|)$, whose dual is the space of bounded operators $\mathcal{B}(\mathcal{H})$ with the operator norm. (We use Dirac notation and ``hats'' for operators; see SI Appendix, \ref{notation}, for details.)

A special class of quantum states are the \textit{coherent states} of the quantum harmonic oscillator \cite{Hall2000}. (In the math literature, this construction is known as the Fock space over $\mathbb{C}$.) Given a pair of complex numbers $w, z$, the corresponding coherent states satisfy $\langle w|z\rangle = \exp[-\frac{1}{2}(|w|^2 + |z|^2 - 2w^*z)] $. These states form a closed embedding of $\mathbb{C}$ in $\mathcal{H}$ with the map $\phi(z) = |z\rangle$, and define an RKHS with kernel $k(w,z) = \langle w|z\rangle$.

Although the coherent state kernel is not typically used for RKHS embeddings, its norm-square is precisely the Gaussian kernel on $\mathbb{C}$: $|k(w,z)|^2 = \exp(-|w-z|^2)$. This kernel arises naturally if we embed $z \in \mathbb{C}$ as a pure quantum state in $\mathcal{S}$, with embedding $\phi(z) = |z\rangle\langle z| \equiv \hat{\rho}_z$. The Hilbert-Schmidt inner product between two embedded states is $\mathrm{tr}(\hat{\rho}_\mathrm{w}\hat{\rho}_z) = \exp(-|w-z|^2)$. Extending to $\mathbb{C}^n$ gives (SI Appendix, \ref{isomorphism-proof}):
\begin{theorem}\label{isomorphism}
    Coherent states of an $n$-mode quantum harmonic oscillator, with the Hilbert-Schmidt inner product, are isometric to kernel functions in the Gaussian RKHS over $\mathbb{C}^n$.
\end{theorem}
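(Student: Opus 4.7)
The plan is to compute the Hilbert-Schmidt inner product of the embedded $n$-mode coherent states, recognize the result as the reproducing kernel of the Gaussian RKHS on $\mathbb{C}^n$, and then invoke the uniqueness of a Hilbert feature map (up to isometry) given its kernel. The argument is essentially the tensor-product lift of the single-mode calculation already sketched in the excerpt, followed by an appeal to Moore--Aronszajn.

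First I would set up the $n$-mode coherent states as tensor products $|z\rangle = |z_1\rangle \otimes \cdots \otimes |z_n\rangle$ over independent oscillator modes, so that the single-mode overlap $\langle w_i | z_i\rangle = \exp[-\tfrac{1}{2}(|w_i|^2 + |z_i|^2 - 2 w_i^* z_i)]$ multiplies across factors to give $\langle w | z\rangle = \exp[-\tfrac{1}{2}(\|w\|^2 + \|z\|^2 - 2\langle w, z\rangle)]$, with the standard Hermitian form on $\mathbb{C}^n$. Embedding $z \mapsto \hat{\rho}_z = |z\rangle\langle z|$ as a rank-one projector and using the identity $\mathrm{tr}(|a\rangle\langle a| \, |b\rangle\langle b|) = |\langle a | b\rangle|^2$, the Hilbert--Schmidt inner product becomes $\mathrm{tr}(\hat{\rho}_w \hat{\rho}_z) = |\langle w | z\rangle|^2$. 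Taking the modulus squared cancels the imaginary part of $\langle w, z\rangle$, and the exponent collapses to $-\|w - z\|^2$, yielding the Gaussian kernel $k(w, z) = \exp(-\|w - z\|^2)$ on $\mathbb{C}^n \cong \mathbb{R}^{2n}$, extending the single-mode case quoted in the excerpt.

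To finish, I would appeal to the Moore--Aronszajn theorem: a positive-definite kernel determines its RKHS uniquely, and any Hilbert-valued feature map reproducing that kernel is related to the canonical one $z \mapsto k(\cdot, z)$ by a unique linear isometry between the closed linear spans. Since both $z \mapsto \hat{\rho}_z \in \mathcal{B}_2(\mathcal{H})$ and $z \mapsto k(\cdot, z) \in \mathcal{H}_G$ realize the same kernel, the correspondence $\hat{\rho}_z \leftrightarrow k(\cdot, z)$ extends to the claimed isometry between the closed span of the embedded coherent states (in the Hilbert--Schmidt norm) and the Gaussian RKHS. There is no genuinely hard step here: the computation is algebraic, and the only points requiring care are the tensor-product factorization and the phase cancellation on taking the modulus; the substantive content of the theorem is the identification itself, which makes the entire RKHS/MMD toolkit available in the quantum state space picture developed later in the paper.
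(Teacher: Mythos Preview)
Your proposal is correct and follows essentially the same route as the paper: compute the Hilbert--Schmidt inner product of the pure coherent states, recognize $|\langle\mathbf{w}|\mathbf{z}\rangle|^2 = \exp(-\|\mathbf{w}-\mathbf{z}\|^2)$ as the Gaussian kernel, and conclude that the map $\hat{\rho}_\mathbf{z} \leftrightarrow K_\mathbf{z}(\cdot)$ is an isometry. The only cosmetic differences are that the paper builds the $n$-mode states via a multi-index Fock-basis expansion (Lemma~\ref{bargmann}) rather than a tensor product, and it verifies the inner-product identity directly from the reproducing property instead of citing Moore--Aronszajn; these are equivalent packagings of the same argument.
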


\subsection*{Quantum probability metrics} 
For pure quantum states, the ``natural'' topology is induced equivalently by the trace norm and the Hilbert-Schmidt norm. For mixed quantum states, however, we will find the trace distance to be advantageous. It is conventionally normalized as $d(\hat{\rho}, \hat{\sigma}) = \frac{1}{2}\mathrm{tr}(|\hat{\rho}-\hat{\sigma}|)$ to take values in $[0,1]$. Given eigenvalues $\lambda_i$ of the compact self-adjoint operator $(\hat{\rho}-\hat{\sigma})$, the trace distance is $\frac{1}{2} \sum_i |\lambda_i|$.

We first build a Quantum Probability Metric (QPM) for measures on $\mathbb{C}$. Embed $z \in \mathbb{C}$ using the coherent-state map $\phi(z)=|z\rangle\langle z|$. The barycenter map $T: \mathcal{P}_\mathrm{w}(\mathbb{C}) \to \mathcal{S}$ gives:
$$ T(\mu) = \int_\mathbb{C} \phi(z)\, d\mu(z)=\int_\mathbb{C} \hat{\rho}_z\, d\mu(z)=\int_\mathbb{C} |z\rangle\langle z|\, d\mu(z) \equiv \hat{\mu} $$
where the ``hat'' signals the operator embedding of the measure. In physics this is known as the $P$-representation for the measure $d\mu(z) = P(z)d^2z$ on $\mathbb{C}$; this is not a contour integral, but rather over the 2D Lebesgue measure $d\Re z\; d\Im z$. From Theorem \ref{isomorphism}, this is precisely the Gaussian kernel embedding. The MMD is the Hilbert-Schmidt distance $d(\mu, \nu) = \mathrm{tr}(|\hat{\mu}-\hat{\nu}|^2)^{1/2}$, which metrizes $\mathcal{P}_\mathrm{w}(\mathbb{C})$, albeit with an incomplete metric. The Gaussian kernel is \textit{characteristic}, which means that the barycenter map is injective \cite{Muandet2017}. This construction, which we call the \textit{Fock embedding} of $\mathcal{P}_\mathrm{w}(\mathbb{C})$, extends to probability measures on $\mathbb{C}^n$, and even to measures on a separable infinite-dimensional Hilbert space \cite{ziegel2022}.

The QPM between two measures is the trace distance $d(\mu, \nu) = \frac{1}{2}\mathrm{tr}(|\hat{\mu}-\hat{\nu}|)$, which \textit{completely metrizes} $\mathcal{P}_\mathrm{w}(\mathbb{C})$ (i.e.\@ the metric is complete). Completeness relies on some remarkable properties of the barycenter map on a closed set of pure states $\mathcal{C} \subset\mathcal{S}$: it is a continuous, closed surjection from $\mathcal{P}_\mathrm{w}(\mathcal{C})$ onto the norm-closed convex hull of $\mathcal{C}$ \cite{Shirokov2007}; if it is injective, then it is a homeomorphism onto its image. Thus, given a characteristic kernel, the barycenter map yields a closed topological embedding of $\mathcal{P}_\mathrm{w}(\mathcal{C})$ into $\mathcal{S}$, and the trace distance completely metrizes the topology on $\mathcal{P}_\mathrm{w}(\mathcal{C})$.

We can now construct a general QPM. Choose $\phi : X \to \mathcal{S}$ to map points $x \in X$ to pure states $|x\rangle\langle x| \equiv \hat{\rho}_x$ on a Hilbert space. We require that $\phi$ is a closed embedding, so its image $\mathcal{C}$ is closed and homeomorphic to $X$, and also that its kernel $k(x,y) = \mathrm{tr}(\hat{\rho}_x\hat{\rho}_y)$ is characteristic. Embedding $\mathcal{P}_\mathrm{w}(X)$ into $\mathcal{S}$ using $\phi$ and the (injective) barycenter map, the QPM is the trace distance between the embedded measures. Adding some technical details in SI Appendix, \ref{completeness-proof}, we have:
\begin{theorem}\label{completeness}
    Every QPM completely metrizes $\mathcal{P}_\mathrm{w}(X)$.
\end{theorem}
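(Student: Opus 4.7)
The plan is to transport the result that the excerpt already attributes to Shirokov on $\mathcal{P}_\mathrm{w}(\mathcal{C})$ back to $\mathcal{P}_\mathrm{w}(X)$ via the closed embedding $\phi$, and then to invoke the standard fact that a closed subset of a Banach space is complete in the induced norm metric.

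First I would set up the pushforward. Since $\phi : X \to \mathcal{C}$ is a homeomorphism (the closed-embedding hypothesis), the induced map $\phi_\ast : \mathcal{P}_\mathrm{w}(X) \to \mathcal{P}_\mathrm{w}(\mathcal{C})$ sending $\mu$ to $\phi_\ast \mu$ is a homeomorphism with respect to the weak topologies. This is a routine consequence of the change-of-variables formula $\int_\mathcal{C} f\, d(\phi_\ast\mu) = \int_X (f\circ\phi)\, d\mu$ together with the fact that $f \mapsto f\circ\phi$ is a bijection between $C_b(\mathcal{C})$ and $C_b(X)$ when $\phi$ is a homeomorphism of the underlying spaces.

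Next I would compose with the barycenter map $T : \mathcal{P}_\mathrm{w}(\mathcal{C}) \to \mathcal{S}$. By the Shirokov result quoted in the excerpt, $T$ is a continuous, closed surjection onto the norm-closed convex hull $\overline{\mathrm{conv}}(\mathcal{C}) \subset \mathcal{S}$; the characteristic-kernel hypothesis makes $T$ injective (since $\mathrm{tr}(\hat\rho_x \hat\rho_y) = \langle \phi(x),\phi(y)\rangle_\mathrm{HS}$, injectivity of the Hilbert-Schmidt barycenter map transfers to the trace-class barycenter map because pure states are rank-one and the Hilbert-Schmidt embedding and the trace-class embedding of $\mathcal{P}_\mathrm{w}(X)$ coincide as set-theoretic maps). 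A continuous, closed, injective surjection is a homeomorphism, so $T$ is a homeomorphism of $\mathcal{P}_\mathrm{w}(\mathcal{C})$ onto $\overline{\mathrm{conv}}(\mathcal{C})$. Composing, $\Phi := T \circ \phi_\ast$ is a homeomorphism of $\mathcal{P}_\mathrm{w}(X)$ onto the norm-closed set $\overline{\mathrm{conv}}(\mathcal{C}) \subset \mathcal{B}_1(\mathcal{H})$.

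Finally I would conclude completeness. The space $\mathcal{B}_1(\mathcal{H})$ is a Banach space under the trace norm, hence complete; the closed subset $\overline{\mathrm{conv}}(\mathcal{C})$ inherits a complete metric from $\|\cdot\|_1$. Pulling this metric back through the homeomorphism $\Phi$ yields a complete metric on $\mathcal{P}_\mathrm{w}(X)$, and by construction this pulled-back metric is exactly the QPM $d(\mu,\nu) = \tfrac{1}{2}\|\hat\mu - \hat\nu\|_1$. Since $\Phi$ is a homeomorphism onto its image, the metric topology it induces on $\mathcal{P}_\mathrm{w}(X)$ is the weak topology, so the QPM completely metrizes $\mathcal{P}_\mathrm{w}(X)$.

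The main obstacle is the inherited technical machinery: verifying that the barycenter integrals are well-defined as Bochner or Pettis integrals in $\mathcal{B}_1(\mathcal{H})$ on the possibly non-compact $X$, confirming that the "closedness" of the embedding $\phi$ is strong enough to make $\phi_\ast$ a closed map on the weak-topology spaces of measures (not merely continuous), and reconciling the fact that "characteristic" is usually defined in the RKHS/Hilbert-Schmidt setting with the trace-class embedding used here. Once those details are pinned down, as the excerpt says are handled in the appendix, the argument above essentially chains together the Shirokov surjection theorem, injectivity from the characteristic kernel, and closedness of the image in a Banach space to deliver completeness.
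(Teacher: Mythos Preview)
Your proposal is correct and follows essentially the same route as the paper: transport $\mathcal{P}_\mathrm{w}(X)$ to $\mathcal{P}_\mathrm{w}(\mathcal{C})$ via the pushforward $\phi_\ast$, invoke the Shirokov results to make the barycenter map a homeomorphism onto the norm-closed convex hull $\overline{\mathrm{co}}(\mathcal{C})$, and conclude completeness from closedness in the Banach space $\mathcal{B}_1(\mathcal{H})$. The paper's appendix unpacks the ``continuous, closed surjection'' claim into $\mu$-compactness (properness) plus the elementary fact that proper continuous maps between metrizable spaces are closed, but this is exactly the machinery you defer to; the technical caveats you flag (Bochner integrability, characteristic-kernel injectivity carrying over to the trace-class embedding) are the same ones the paper addresses.
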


All QPMs embed into the same state space $\mathcal{S}$, which is a universal embedding space: using \cite{Uspenskij2004, ziegel2022} in SI Appendix, \ref{universality-proof}, if $X$ is any separable, completely metrizable space---a Polish space---there is a map $\phi$ that meets the requirements of Theorem \ref{completeness}, so there is a QPM embedding into $\mathcal{S}$:
\begin{theorem}\label{universality}
    Every Polish space has a QPM.
\end{theorem}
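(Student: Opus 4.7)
The plan is to verify the hypotheses of Theorem \ref{completeness} by constructing $\phi: X \to \mathcal{S}$ as a composition of two closed embeddings: one carrying $X$ into a Hilbert space, and the infinite-mode Fock coherent-state map carrying that Hilbert space into pure quantum states.

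First, I invoke the universality result of \cite{Uspenskij2004} to obtain a closed topological embedding $\iota: X \to \mathcal{H}_0$ of the Polish space $X$ into a separable, infinite-dimensional Hilbert space $\mathcal{H}_0$. Concretely, given a complete bounded metric $d$ on $X$ and a countable dense sequence $\{x_n\}$, a suitably weighted distance-feature map built from $d(\cdot, x_n)$ realizes this embedding once its image is verified to be closed. Next, by \cite{ziegel2022}, the coherent-state construction of Theorem \ref{isomorphism} extends from $\mathbb{C}^n$ to the infinite-mode setting, producing a closed embedding $\psi: \mathcal{H}_0 \to \mathcal{S}$ with $\psi(v) = |v\rangle\langle v|$ and characteristic Gaussian kernel $k(u,v) = |\langle u|v\rangle|^2 = \exp(-\|u-v\|^2)$.

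I then set $\phi = \psi \circ \iota$. Since composition preserves ``homeomorphism onto its image'' and ``closed image in the target,'' $\phi$ is a closed embedding of $X$ into $\mathcal{S}$ whose image consists of pure states. Its pullback kernel $k_\phi(x,y) = \exp(-\|\iota(x) - \iota(y)\|^2)$ is characteristic: a change of variables shows that the barycenter of $\mu \in \mathcal{P}_\mathrm{w}(X)$ under $\phi$ equals the barycenter of the pushforward $\iota_\ast\mu$ under $\psi$, so equal $\phi$-barycenters for $\mu, \nu$ force $\iota_\ast\mu = \iota_\ast\nu$ by characteristicness of $\psi$'s kernel, and then $\mu = \nu$ because $\iota$ is a Borel isomorphism onto its closed (hence Borel) image. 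The hypotheses of Theorem \ref{completeness} now hold, giving a QPM on $\mathcal{P}_\mathrm{w}(X)$.

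The main obstacle lies in the two imported ingredients rather than in their assembly. The closed Hilbert-space embedding of a general Polish space is delicate because the standard Kuratowski-type isometric embedding lands in $\ell^\infty$ rather than a Hilbert space; obtaining closedness in a weighted $\ell^2$ version requires verifying properness, not merely continuity and injectivity. The characteristicness of the Gaussian kernel on an infinite-dimensional separable Hilbert space is more subtle still, since Fourier-analytic tools such as Bochner's theorem do not transfer directly to that setting; this is precisely the content of \cite{ziegel2022}, and it is the step on which the universality claim ultimately rests.
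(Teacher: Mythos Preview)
Your proposal is correct and follows essentially the same route as the paper: embed $X$ closedly into a separable Hilbert space via \cite{Uspenskij2004}, apply the infinite-mode Fock coherent-state map, invoke \cite{ziegel2022} for characteristicness of the Gaussian kernel, and conclude by Theorem~\ref{completeness}. The paper makes the Urysohn-space step explicit (isometric embedding of $(X,d)$ into the Urysohn universal space, then Uspenskij's homeomorphism with $\mathcal H$) rather than your direct distance-feature sketch, and it does not spell out the pushforward argument for transferring characteristicness through $\iota$, but these are presentational differences only.
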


\subsection*{Dual functions for a QPM}
Since every QPM is also an IPM, we can identify its dual functions using the duality between real Banach spaces of self-adjoint operators: $\mathcal{B}_{1,\mathrm{sa}}(\mathcal{H})$ (trace class) and  $\mathcal{B}_\mathrm{sa}(\mathcal{H})$ (bounded operators). In quantum mechanics, an operator $\hat{F}\in\mathcal{B}_\mathrm{sa}(\mathcal{H})$ is called an \textit{observable}; it is a linear functional on $\hat{\rho}\in\mathcal{B}_{1,\mathrm{sa}}(\mathcal{H})$ via the trace map, $\mathrm{tr}(\hat{\rho}\hat{F})$. Given a QPM that maps $x \in X$ to a pure state $\hat{\rho}_x = |x\rangle\langle x|$, every observable defines a real function $F(x) = \mathrm{tr}(\hat{\rho}_x \hat{F}) = \langle x|\hat{F}|x\rangle$. These are dual functions for the QPM; since $\hat{F}$ is bounded they are bounded and continuous. Embedding $\mu$ as a state $\hat{\mu} \in \mathcal{S}$, the (quantum) expectation value $\mathrm{tr}(\hat{\mu}\hat{F})$ equals the (classical) expected value $\mathbb{E}_{x\sim \mu}[F(x)]$. The IPM witness functions for a QPM have operator norm $\|\hat{F}\| \leq 1$. Even on compact domains, QPMs have a larger class of witness functions than MMD, which requires Hilbert-Schmidt norm $\|\hat{F}\|_2 \leq 1$.

On $\mathbb{R}^n$, the Fock embedding QPM has a rich class of dual functions that are not required to vanish at infinity, unlike those of MMD. In SI Appendix, \ref{uniform-approx-proof}, using the Schur test \cite{Grafakos2014}, we show that all bounded band-limited functions on $\mathbb{R}^n$ are included as dual functions; combined with \cite{Dryanov2003}, we prove:
\begin{theorem}\label{uniform-approx}
    Dual functions for the Fock (coherent state) QPM are dense in $BUC(\mathbb{R}^n)$: they can \textbf{uniformly} approximate any bounded, uniformly continuous function on $\mathbb{R}^{n}$.
\end{theorem}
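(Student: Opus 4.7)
The plan is to sandwich $BUC(\mathbb{R}^n)$ between a tractable subspace of bounded band-limited functions and the dual-function space of the Fock QPM. Writing $B^\infty_K$ for the Bernstein class of bounded functions with (distributional) Fourier support in $\{|k|\leq K\}$, the argument has two steps: first, show every $f\in B^\infty_K$ is a dual function of the Fock QPM; second, invoke the density of $\bigcup_K B^\infty_K$ in $BUC(\mathbb{R}^n)$ from \cite{Dryanov2003}. Combining them gives the claim, since the space of dual functions is a vector space and approximation is in the uniform norm.

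For the first step, the starting observation is the coherent-state identity
\[ \langle x\,|\,e^{ik\cdot\hat X}\,|\,x\rangle \;=\; e^{ik\cdot x}\,e^{-|k|^2/4}, \]
which exhibits each Weyl translation $e^{ik\cdot\hat X}$ (a unitary of operator norm $1$) as an operator whose Q-symbol is a Gaussian-damped complex exponential. Inverting the Gaussian damping, I would set
\[ \hat F \;=\; \int_{|k|\leq K}\hat f(k)\,e^{|k|^2/4}\,e^{ik\cdot\hat X}\,\frac{d^n k}{(2\pi)^{n/2}}, \]
so that $\langle x|\hat F|x\rangle = f(x)$ by construction; equivalently, $\hat F$ is multiplication by $g=\mathcal F^{-1}(\hat f\,e^{|k|^2/4})$. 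Because the Fourier support of $f$ is compact and $e^{|k|^2/4}$ is smooth, the multiplier is well-behaved and no deconvolution divergence arises. The Schur test \cite{Grafakos2014}, applied to the kernel of $\hat F$ (whether interpreted as a momentum-space convolution by $\hat g$ or as an integral operator on the Fock space with weights adapted to the band-limit), gives the operator-norm bound $\|\hat F\|\leq e^{K^2/4}\|\hat f\|_{\mathrm{TV}}$ up to a constant, finite whenever $\hat f$ is a finite Radon measure.

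For the second step, I would mollify $f_0\in BUC(\mathbb{R}^n)$ with a band-limited approximate identity $\phi_K$ whose Fourier transform is a smooth cutoff supported in $\{|k|\leq K\}$ and equal to $1$ near the origin; then $f_0*\phi_K\in B^\infty_K$, and $\|f_0*\phi_K-f_0\|_\infty\to 0$ as $K\to\infty$ by uniform continuity of $f_0$, which is the substance of \cite{Dryanov2003}. The main obstacle is extending Step 1 to cover all of $B^\infty_K$, not just the (large but proper) subclass where $\hat f$ is a finite Radon measure: in general $\hat f$ is only a tempered distribution of finite order, and the operator-valued integral defining $\hat F$ must then be interpreted via the spectral calculus of $\hat X$, or one establishes Step 1 on the friendly subclass and then extends by the uniform Schur-test bound together with weak-$*$ continuity of the map $\hat F\mapsto\langle x|\hat F|x\rangle$. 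Fortunately, the approximants $f_0*\phi_K$ arising in Step 2 already lie in (the uniform closure of) that friendly subclass, so the density argument carries through even without fully resolving the general distributional case.
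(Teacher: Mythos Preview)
Your two-step plan matches the paper's skeleton (bounded band-limited $\Rightarrow$ dual function, then density via \cite{Dryanov2003}), but Step~1 is a genuinely different route. The paper never writes $\hat F$ in terms of $\hat X$; instead it reads off the Fock-basis matrix elements $\langle n|\hat F|m\rangle$ from the power series of the entire extension of $F$, passes via the Segal--Bargmann isomorphism to the integral kernel $K_F(w,z)=\langle w|\hat F|z\rangle$ on $L^2(\mathbb{C}^n)$, and applies the Schur test \emph{there}, with the Paley--Wiener growth bound $|F|\le M\exp(\sum\tau_j|\Im z_j|)$ controlling the row and column integrals. Your approach is more elementary: $\hat F=g(\hat X)$ by spectral calculus, so boundedness reduces to $\|g\|_\infty<\infty$, and no Bargmann machinery is needed. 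The price is that it is tied to embedding $\mathbb{R}^n$ via \emph{real} coherent states (one mode per coordinate); the paper instead uses $\mathbb{R}^{2n}\cong\mathbb{C}^n$, and under that identification your $\hat F$ would be a Weyl or anti-Wick operator rather than a multiplication, so ``Schur test on the kernel'' would need a genuine pseudodifferential argument.

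There is a real gap in your handling of the distributional case. Your TV-norm bound fails when $\hat f$ is not a measure, and the proposed workaround---that the approximants $f_0*\phi_K$ lie in the uniform closure of the ``friendly subclass''---is not justified: $\widehat{f_0*\phi_K}=\hat f_0\,\hat\phi_K$ is still only a compactly supported distribution for general $f_0\in BUC$. The correct fix is to show directly that $g$ is bounded whenever $f\in B^\infty_K$: since $e^{|k|^2/4}$ restricted to $\{|k|\le K\}$ extends to a Schwartz function $m$, one has $g=f*\check m$ with $\check m\in L^1$, hence $\|g\|_\infty\le\|\check m\|_1\|f\|_\infty$. (Equivalently, expand $e^{|k|^2/4}$ as a power series and iterate Bernstein's inequality on $(-\Delta)^m f$.) With this in hand, $\|\hat F\|=\|g\|_\infty$ by functional calculus and the Schur test is actually superfluous in your argument---whereas in the paper's approach it carries the entire weight of the boundedness claim.
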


\subsection*{Replacing MMD with a QPM} 
In most cases, MMD is calculated for atomic measures (convex combinations of Dirac measures). Although the underlying RKHS is usually infinite-dimensional, the MMD between two such measures can be calculated using kernel evaluations $k(x,y)$ \cite{Muandet2017}. If their joint support contains $n$ points, MMD requires the Gram matrix for every pair of embedded vectors, with complexity $O(n^2)$. 

The analogous QPM calculation needs eigenvalues of the rank-$n$ operator $(\hat{\mu}-\hat{\nu})$, which has the form $\sum_i c_i|x_i\rangle \langle x_i|$. Its eigenvalues are equal to those of the Hermitian matrix $M = H^\dagger C H$, where we factor the Gram matrix as $G=HH^\dagger$ and define $C=\mathrm{diag}(c_i)$. Although eigenvalue computation is $O(n^3)$, this is a simple ``drop-in'' replacement for MMD, perhaps with a modified kernel (SI Appendix, \ref{mmd2qpm}). Anyone who uses MMD should try this swap; the improved metric performance is likely worth the computational cost. Eigenvalue calculations are numerically stable with analytic gradients for optimization. By comparison, the widely-used $p$-Wasserstein metrics are also $O(n^3)$ with the Hungarian algorithm, but they are difficult to differentiate, need finite $p$\textsuperscript{th} moments, and have the opposite problem of MMD: sequences that diverge in the Wasserstein metric might converge in $\mathcal{P}_\mathrm{w}(X)$.

\begin{figure}[t!]
\centering
\includegraphics[width=\linewidth]{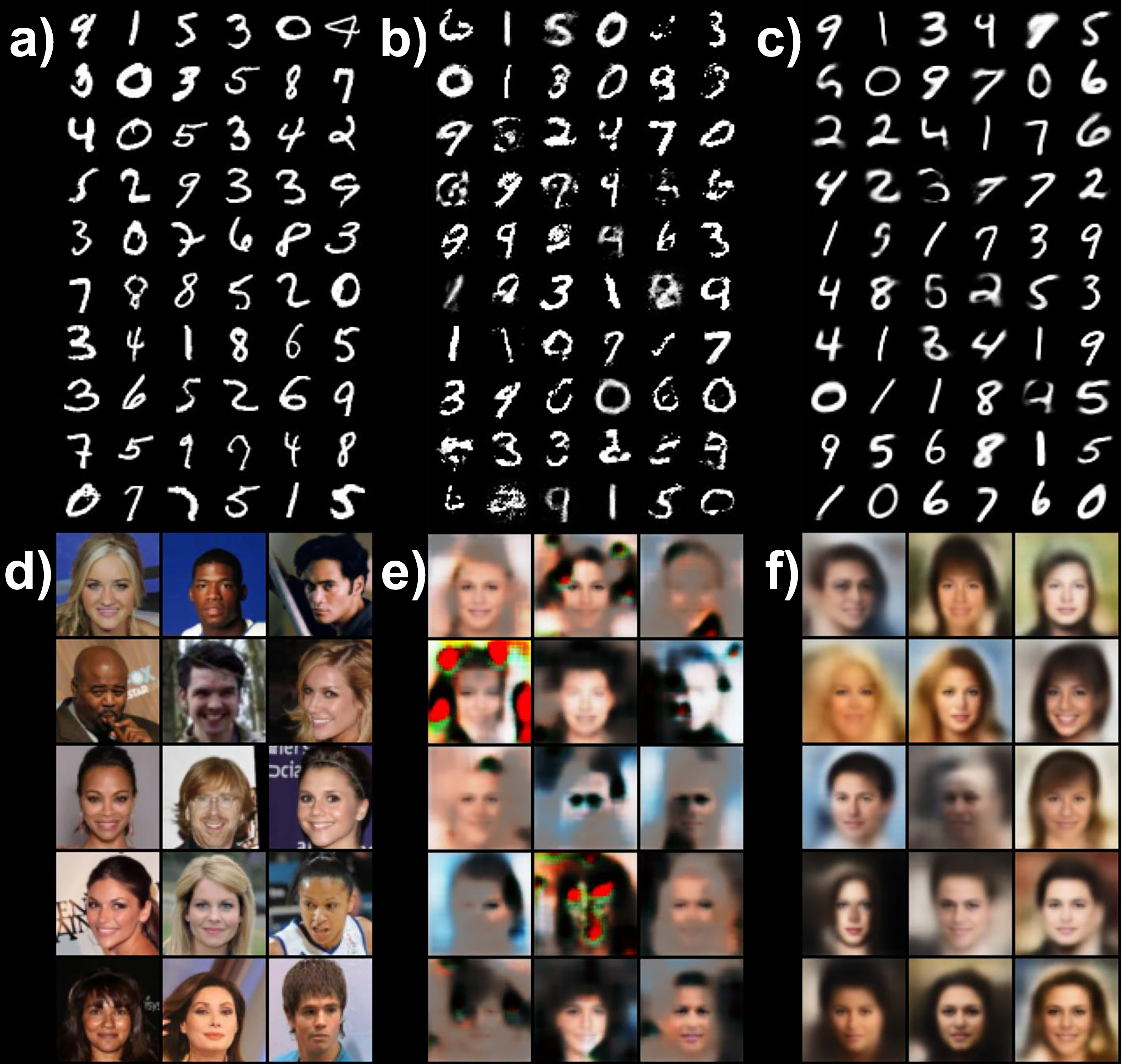}
\caption{(a) MNIST images vs.\@ those generated using (b) MMD or (c) QPM. (d) CelebA images vs.\@ those generated using (e) MMD or (f) QPM. Notably, an MMD two-sample test (same kernel as used for training) cannot distinguish (d) vs. (e). With batch sizes of 1000, MMD gives $p \approx 0.23$, while QPM gives $p < 10^{-3}$.}
\label{fig:generated}
\end{figure}

As a proof of concept, we swap MMD with QPM in a classic model, the Generative Moment Matching Network \cite{Li2015GMMN}; see SI Appendix, \ref{GMMN-application}, for details. These networks are trained to minimize the MMD (in the image space) between batches of generated images and batches drawn from the dataset. This is a great idea that gives poor results, as shown in Figure \ref{fig:generated} where we reproduce the original work on the MNIST dataset \cite{lecun1998gradient} (784 dimensions) and extend it to the CelebA-64 dataset \cite{liu2015faceattributes} (12,288 dimensions). The CelebA result illustrates why this approach fails: MMD believes that it has met its objective, as shown by an MMD two-sample test \cite{Muandet2017} that finds no statistically significant difference between the generated images and the data ($p \approx0.23$). The MMD witness functions, which vanish at infinity, simply fail in this high-dimensional case; a different architecture or training method would be unlikely to help. Replacing MMD with a QPM, keeping everything else constant, yields much better images. And QPM knows that it has not met its objective: its two-sample test gives $p < 10^{-3}$. With its richer class of witness functions on this compact, high-dimensional space, QPM easily distinguishes the generated images from the data. This experiment highlights QPM as a target for generative modeling and a discerning metric for probability distributions.

\subsection*{Future work} 
Using QPMs, we can study and manipulate probability measures with powerful tools from quantum mechanics such as unitary transformations, group representations, tensor products, entropy, information theory, and more. In addition, this construction offers an interesting independent motivation for the mathematical structure of quantum mechanics.

\subsection*{Acknowledgments}
The AI model ``GPT-5 Thinking'' from OpenAI \cite{openai2025} assisted with checking proofs in the SI Appendix.

\bibliographystyle{pnas-new}
\bibliography{qpms}

\clearpage

\section*{SI Appendix, Materials and Methods}

\subsection{Proofs}\label{proofs}
\subsubsection{Theorem \ref{reflexiveincomplete}}\label{reflexiveincomplete-proof}
\emph{For noncompact $X$, an affine embedding of $\mathcal{P}_\mathrm{w}(X)$ into a reflexive Banach space $E$ is never closed (in the norm topology). Thus, the induced metric is incomplete.}

Note that if $\mathcal{P}_\mathrm{w}(X)$ can be embedded in a Banach space, then $\mathcal{P}_\mathrm{w}(X)$ is metrizable, so $X$ is metrizable via the canonical, continuous embedding that maps $x \in X$ to its point mass $\delta_x$. We begin with two preparatory lemmas.

\begin{lemma}\label{notclosed}
    $\mathcal{P}_\mathrm{w}(X)$ is not closed for noncompact, metrizable $X$.
\end{lemma}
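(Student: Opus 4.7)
The plan is to establish non-closedness by exhibiting a sequence in $\mathcal{P}_\mathrm{w}(X)$ whose natural weak-$*$ limit in $C_b(X)^*$ is not represented by any Borel probability measure on $X$. The target ambient space is the unit ball of $C_b(X)^*$ with the weak-$*$ topology, into which $\mathcal{P}_\mathrm{w}(X)$ injects canonically by $\mu \mapsto \bigl(f \mapsto \int_X f\,d\mu\bigr)$. This embedding is continuous precisely because the weak topology on $\mathcal{P}_\mathrm{w}(X)$ is defined through these functionals, so failure of closedness in this ambient space will propagate to any downstream affine embedding used in Theorem \ref{reflexiveincomplete}.

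First I would use the noncompactness hypothesis. Metric spaces are sequentially compact iff compact, so I can pick a sequence $(x_n)\subset X$ with no accumulation point in $X$. I then consider the Dirac masses $\mu_n = \delta_{x_n}\in \mathcal{P}_\mathrm{w}(X)$; since they lie in the weak-$*$ compact unit ball (Banach--Alaoglu), a subnet converges weak-$*$ to some functional $\Lambda\in C_b(X)^*$ with $\Lambda(1)=1$ and $\Lambda\ge 0$, placing $\Lambda$ in the weak-$*$ closure of $\mathcal{P}_\mathrm{w}(X)$.

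Next I would show $\Lambda$ is not of the form $f\mapsto \int f\,d\mu$ for any $\mu\in\mathcal{P}_\mathrm{w}(X)$, so that $\Lambda$ sits outside $\mathcal{P}_\mathrm{w}(X)$. Suppose for contradiction that such a $\mu$ exists. For each $x\in X$, since $x$ is not an accumulation point of $(x_n)$, I pick a metric ball $B(x,r)$ meeting only finitely many $x_n$ and a bounded continuous Urysohn-type cutoff $f$ supported in $B(x,r)$ with $f\equiv 1$ on a smaller neighborhood $U_x$ of $x$; then $\mu(U_x)\le \int f\,d\mu = \Lambda(f) = \lim_n f(x_n) = 0$, so $\mu$ vanishes on a neighborhood of every point. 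Restricting to the separable closed subspace $S = \overline{\{x_n\}\cup\mathrm{supp}(\mu)}$, Lindelöf's theorem supplies a countable subcover of $S$ by such null neighborhoods, forcing $\mu(S) = 0$ and contradicting $\mu(X)=1$.

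The main obstacle I anticipate is this final step: converting the pointwise statement ``$\mu$ vanishes near every point'' into the global conclusion $\mu\equiv 0$ without a local-compactness hypothesis. Morally this is the fact that $\beta X \setminus X \neq \emptyset$ for noncompact $X$, but making it precise requires separability of $\mathrm{supp}(\mu)$, which is automatic only under mild regularity on $X$ (e.g., second-countable, Polish, or Radon). If the stated hypothesis of ``metrizable'' is too weak to guarantee this, a small strengthening would be needed; under the paper's broader framework where $X$ is ultimately Polish (Theorem \ref{universality}), this is not a real restriction.
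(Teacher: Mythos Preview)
Your approach is sound and takes a genuinely different route from the paper. The paper indexes its discrete closed set by $\mathbb{Z}$, takes the \emph{uniform} probability measures $\mu_n$ on the symmetric blocks $\{x_{-n},\dots,x_n\}$, and exploits the F{\o}lner property of these blocks to show that any weak-$*$ cluster point in $C_b(X)^*$ is shift-invariant; since a shift-invariant finitely additive probability on a countably infinite set cannot be countably additive, the limit lies outside $\mathcal{P}_\mathrm{w}(X)$. Your Dirac-mass construction is more elementary and sidesteps the amenability machinery entirely.

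The obstacle you flag is real as written (in a non-Lindel\"of metric space the complement of $\mathrm{supp}(\mu)$ need not be $\mu$-null, so restricting to your $S$ is not obviously legitimate), but it can be removed without strengthening the hypothesis to Polish. Rather than passing through $\mathrm{supp}(\mu)$, show directly that $\mu$ is concentrated on the closed countable set $\{x_n\}$: for any closed $C$ disjoint from $\{x_n\}$, metric Urysohn gives $f\in C_b(X)$ with $0\le f\le 1$, $f|_C\equiv 1$, $f|_{\{x_n\}}\equiv 0$, whence $\mu(C)\le\Lambda(f)=0$; inner regularity by closed sets---valid for every finite Borel measure on a metrizable space---then forces $\mu(X\setminus\{x_n\})=0$. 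Combined with your computation $\mu(\{x_k\})=0$ for each $k$, countable additivity yields $\mu(X)=0$, the desired contradiction. This keeps the lemma at its stated generality and makes your argument cleaner than the paper's.
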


\begin{proof}
    We review results from \cite{Merkle2000}, which proves this lemma slightly differently. For noncompact $X$, the dual of $C_b(X)$ (the Banach space of bounded continuous real functions) is the space of signed, finite, regular, \textit{finitely} additive measures on $X$; we denote these generalized measures as $\mathcal{M}(X)$. The probability measures $\mathcal{P}(X)$ are a proper subset of $\mathcal{M}(X)$: they are positive, \textit{countably} additive, and normalized to one. The ``weak topology'' on $\mathcal{P}_\mathrm{w}(X)$ is the subspace topology inherited from the weak-* topology on $\mathcal{M}(X)$.

    Using sequential noncompactness, the metrizable space $X$ has a countable closed subset $S$ with no accumulation points; we index it as $S = \{x_n : n \in \mathbb{Z} \}$. Since $C_b(S) \cong \ell^\infty(\mathbb{Z})$, its dual can be identified with the space of generalized measures $\mathcal{M}(S) \cong \mathcal{M}(\mathbb{Z})$, or the \textit{ba space} over $\mathbb{Z}$. The probability measures $\mathcal{P}_\mathrm{w}(S)$ correspond to the unit simplex in $\ell^1(\mathbb{Z})$, with the weak-* topology inherited from $\mathcal{M}(S)$. Note that $\mathcal{M}(S)$ is a weak-*-closed subset of $\mathcal{M}(X)$: define $I_S=\{f\in C_b(X): f|_S\equiv 0\}$, then $\mathcal M(S)=\bigcap_{f\in I_S}\{\mu\in\mathcal M(X): \mu(f)=0\}$, an intersection of weak-*-closed sets.

    Define the left shift operator $L$ on $\mathcal{M}(S)$: for any index set $I \subseteq \mathbb{Z}$, $L\mu(\{x_k : k \in I\}) = \mu(\{x_k : k+1 \in I\})$. Build a shift-invariant measure as follows: For each $n \in \mathbb{N}$, define $S_n = \{x_k : -n \leq k \leq n \}$ and let $\mu_n$ be the uniform probability measure on $S_n$. By Banach-Alaoglu, some subnet $\mu_{n_\alpha}$ converges weak-* to a measure $\mu \in \mathcal{M}(S)$. A short calculation using the F{\o}lner property of $S_n$ shows that $\mu$ is weakly shift-invariant (i.e.\@ $L\mu(f) = \mu(f)$ for $f \in C_b(S)$). By applying the shift $m$ times we see that the singleton measure $\mu(\{x_k\}) = \mu(\{x_{k+m}\})$ for any $k$, which implies $\mu(\{x\})$ is constant for all $x \in S$.

    Yet a probability measure on a countable set cannot be shift-invariant. Countable additivity forces either $\mu(S) = 0$, if $\mu(\{x\}) = 0$, or $\mu(S) = \infty$, if $\mu(\{x\}) > 0$, contradicting $\mu(S) = 1$. So $\mu$ is not countably additive (just like a Banach limit) and $\mu \notin \mathcal{P}_\mathrm{w}(S)$. This shows $\mathcal{P}_\mathrm{w}(S)$ is not closed. Now $\mathcal{P}_\mathrm{w}(S)$ = $\mathcal{P}_\mathrm{w}(X) \cap \mathcal{M}(S)$. Since $\mathcal{M}(S)$ is weak-* closed, but $\mathcal{P}_\mathrm{w}(S)$ is not, $\mathcal{P}_\mathrm{w}(X)$ is not closed.
\end{proof}

\begin{lemma}\label{extension}
    Any continuous affine map $T: \mathcal{P}_\mathrm{w}(X) \to E$ (norm topology on the Banach space $E$) has a canonical linear extension to $T : \mathcal{M}(X) \to E^{**}$ that is continuous from the weak-* topology on $\mathcal{M}(X)$ to the weak-* topology on the double dual $E^{**}$.
\end{lemma}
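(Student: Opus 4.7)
The plan is to identify the affine map $T$ with the map $\phi: X \to E$ defined by $\phi(x) = T(\delta_x)$, and then build the extension entirely from $\phi$. Since the canonical assignment $x \mapsto \delta_x$ is continuous from $X$ into $\mathcal{P}_\mathrm{w}(X)$ (already noted via metrizability at the top of the proof of Theorem \ref{reflexiveincomplete}), $\phi$ is norm-continuous. The crucial analytic input will be that $\phi$ is also \emph{bounded}; granted this, $\lambda \circ \phi \in C_b(X)$ for every $\lambda \in E^*$, and I can define the extension by
$$[T(\mu)](\lambda) = \mu(\lambda \circ \phi) = \int_X \lambda(\phi(x))\, d\mu(x)$$
for $\mu \in \mathcal{M}(X) = C_b(X)^*$. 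Linearity in $\lambda$ together with the estimate $|[T(\mu)](\lambda)| \leq \|\phi\|_\infty \|\lambda\| \|\mu\|$ put $T(\mu) \in E^{**}$ with $\|T(\mu)\|_{E^{**}} \leq \|\phi\|_\infty \|\mu\|$, giving a bounded linear map $\mathcal{M}(X) \to E^{**}$.

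The main obstacle is the boundedness of $\phi$, which I would establish by contradiction. Suppose $\|\phi(x_n)\| \to \infty$ along some sequence. If a subsequence satisfies $x_n \to x_*$ in $X$, then $\delta_{x_n} \to \delta_{x_*}$ in $\mathcal{P}_\mathrm{w}(X)$ and continuity of $T$ forces $\phi(x_n) \to \phi(x_*)$ in norm, contradicting $\|\phi(x_n)\| \to \infty$. Otherwise $\{x_n\}$ has no accumulation point in $X$; passing to a subsequence with $\|\phi(x_{n_k})\| \geq k^2$ and fixing any $x_0$, form $\nu_k = (1-1/k)\delta_{x_0} + (1/k)\delta_{x_{n_k}}$. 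For any $f \in C_b(X)$, $\nu_k(f) = (1-1/k)f(x_0) + (1/k)f(x_{n_k}) \to f(x_0)$, so $\nu_k \to \delta_{x_0}$ in $\mathcal{P}_\mathrm{w}(X)$, whence $T(\nu_k) \to \phi(x_0)$ in norm. But affinity gives $\|T(\nu_k) - (1-1/k)\phi(x_0)\| = (1/k)\|\phi(x_{n_k})\| \geq k$, a contradiction.

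With $\phi$ bounded, two loose ends remain. First, agreement of the extension with the original $T$ on $\mathcal{P}_\mathrm{w}(X)$: the identity $\lambda(T(\mu)) = \mu(\lambda \circ \phi)$ is immediate on convex combinations of Dirac measures by affinity of $T$, and extends to all $\mu \in \mathcal{P}_\mathrm{w}(X)$ because atomic measures are weak-dense in $\mathcal{P}_\mathrm{w}(X)$ (metrizable $X$) while both sides are continuous in $\mu$. Second, weak-*-to-weak-* continuity is built into the definition: $\mu_\alpha \to \mu$ weak-* in $\mathcal{M}(X)$ means $\mu_\alpha(f) \to \mu(f)$ for all $f \in C_b(X)$; applying this with $f = \lambda \circ \phi$ gives $[T(\mu_\alpha)](\lambda) \to [T(\mu)](\lambda)$ for every $\lambda \in E^*$, which is precisely weak-* convergence $T(\mu_\alpha) \to T(\mu)$ in $E^{**}$. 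Canonicality follows from the same density: any weak-*-continuous linear extension must coincide with the one above on atomic probability measures, and hence everywhere on $\mathcal{M}(X)$.
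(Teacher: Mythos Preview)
Your proposal is correct and follows essentially the same route as the paper: define $\phi(x)=T(\delta_x)$, prove boundedness of $\phi$ by a convex-combination contradiction, then set $[T(\mu)](\lambda)=\mu(\lambda\circ\phi)$ and read off linearity, membership in $E^{**}$, weak-*-to-weak-* continuity, and agreement on $\mathcal P_{\mathrm w}(X)$ by density of atomic measures. The only cosmetic difference is your two-case split in the boundedness argument (accumulation point vs.\ not), which is harmless but unnecessary---your case (b) already works verbatim for arbitrary $x_{n_k}$ since $f\in C_b(X)$ is bounded.
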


\begin{proof}
    The central idea is that $T$ is determined by its action on the point measures $\delta_x$, whose convex combinations are dense in $\mathcal{P}_\mathrm{w}(X)$ (Theorem 4.3 in \cite{Merkle2000}). Their images $T(\delta_x)$ are bounded in $E$: suppose, for contradiction, that they are unbounded. Then there is a sequence $x_n$ with $\|T(\delta_{x_n})\| > n, \forall n \in \mathbb{N}$. Construct a sequence of probability measures $\mu_n = \tfrac{1}{n}\delta_{x_n} + (1-\tfrac{1}{n})\delta_{x_1} $, which converges in $\mathcal{P}_\mathrm{w}(X)$ to $\delta_{x_1}$. Yet the image of this sequence under the affine map $T$ does not converge to $T(\delta_{x_1})$:
    \begin{align*}
        \|T(\mu_n)-T(\delta_{x_1})\| &= \|\tfrac{1}{n}T(\delta_{x_n}) + (1-\tfrac{1}{n})T(\delta_{x_1})-T(\delta_{x_1})\| \\
        &= \tfrac{1}{n}\|T(\delta_{x_n})-T(\delta_{x_1})\|  \\
        &\geq  \tfrac{1}{n}\|T(\delta_{x_n})\|-\tfrac{1}{n}\|T(\delta_{x_1})\|   \\
        &> 1-\tfrac{1}{n}\|T(\delta_{x_1})\|\xrightarrow[n\to\infty]{}1
    \end{align*}
    which contradicts continuity of $T$. In addition, $T(\delta_x)$, viewed as a function of $x$, is continuous from $X$ to $E$, as it is the composition of the continuous embedding $x \to \delta_x$ with the continuous map $T$.
    
    Now extend $T(m)$ for a generalized measure $m \in \mathcal{M}(X)$. Given a linear functional $\lambda \in E^*$, define $T(m) \in E^{**}$ by its action on $\lambda$:
    $$ T(m)[\lambda] = \int_X \lambda(T(\delta_x))\, dm(x)$$
    Note that this ``integral'' of the real-valued function $\lambda(T(\delta_x))$ over the generalized measure $m$ arises from viewing $m \in \mathcal{M}(X)$ as an element of the continuous dual of $C_b(X)$; it is not a Lebesgue integral since $m$ may not be countably additive. This expression defines a linear functional of $\lambda \in E^*$ that is also linear in $m$. We show that it is continuous in $\lambda$ (making it an element of $E^{**}$), and that it is continuous in $m$ (weak-* to weak-* topology).

    Continuity in $\lambda$ follows from boundedness via the following calculation.  Note that $|m|$ is the absolute value $(m_+ - m_-)$ of the signed measure $m$, while $\|m\|$ is its total variation norm---the dual norm for $\mathcal{M}(X) \cong C_b(X)^*$:
    \begin{align*}
        \big|T(m)[\lambda]\big|&= \left| \int_X \lambda(T(\delta_x))\, dm(x) \right| 
          \leq  \int_X \big|\lambda(T(\delta_x))\big| \, d|m|(x) \\
          &\leq  \|\lambda\| \int_X \| T(\delta_x) \| \, d|m|(x) 
          \leq \|\lambda\| \|m\| \sup_{x\in X} \| T(\delta_x) \| 
    \end{align*}
    where the final bound follows from boundedness of $T(\delta_x) \in E$ and finite total variation of $m \in \mathcal{M}(X)$. Hence $T(m) \in E^{**}$.

    Continuity in $m$ follows from the observation that $\lambda(T(\delta_x)) \in C_b(X)$: it is the composition of continuous maps, and it is bounded since $T(\delta_x)$ is bounded. Thus, given a weak-* convergent net of generalized measures $m_\alpha \to m$, we find:
    \begin{align*}
        \lim_\alpha\, T(m_\alpha)[\lambda]&= \lim_\alpha \int_X \lambda(T(\delta_x))\, dm_\alpha(x) \\
        &= \int_X \lambda(T(\delta_x))\, dm(x) = T(m)[\lambda]
    \end{align*}
    by weak-* convergence in $\mathcal{M}(X)$ for $\lambda(T(\delta_x)) \in C_b(X)$.

    Using the affine property of $T$, linearity of $\lambda$, and linearity of the integral, this extended $T$ agrees with the original $T(\mu)$ for convex combinations of point measures, which are dense in $\mathcal{P}_\mathrm{w}(X)$. By weak-* continuity, therefore, it agrees on all of $\mathcal{P}_\mathrm{w}(X)$.

    Note that this construction can also be viewed as the \textit{Dunford integral} of the $E$-valued map $T(\delta_x)$ over the generalized measure $m$; such ``integrals'' generally lie in the double dual $E^{**}$.
\end{proof}
We now return to the main theorem.
\begin{proof} (Theorem \ref{reflexiveincomplete}, SI Appendix, \ref{reflexiveincomplete-proof})
Assume we have an affine closed embedding map $T$ (a homeomorphism from the weak topology on $\mathcal{P}_\mathrm{w}(X)$ to its norm-closed image in $E$); its inverse $T^{-1}$ is also affine. Since $\mathcal{P}_\mathrm{w}(X)$ is convex, its image is convex. Using Lemma \ref{notclosed}, choose a net $\mu_\alpha \in \mathcal{P}_\mathrm{w}(X)$ that is weak-* convergent to a shift-invariant generalized measure $\mu$. By Lemma \ref{extension}, the image of this net $T(\mu_\alpha)$ converges weak-* to $T(\mu) \in E^{**}$. Since $E$ is reflexive, weak-* convergence is equivalent to weak convergence, and $E^{**} \cong E$, so $T(\mu_\alpha)$ converges weakly to $T(\mu) \in E$. Define $v_\alpha \equiv T(\mu_\alpha)$ as its image in $E$, and $v \equiv T(\mu)$ as its weak limit.

Using Mazur's Lemma for nets, construct $w_{\alpha, n}$ on the directed set defined by $(\alpha',n') \geq (\alpha,n)$ when $\alpha' \geq \alpha$ and $n' \geq n$, such that (i) each $w_{\alpha, n}$ is a finite convex combination of tails of $v_\alpha$, i.e.\@ from $\{v_\beta : \beta \geq \alpha\}$, and (ii) $\|w_{\alpha, n}-v\|\leq 1/n$, so $w_{\alpha, n}$ converges strongly to $v = T(\mu)$ in $E$. By assumption, this net and its (strong) limit lie within the norm-closed convex image of $\mathcal{P}_\mathrm{w}(X)$.

The affine inverse $T^{-1}$ defines a net $\nu_{\alpha,n} \equiv T^{-1}(w_{\alpha,n})$; these are probability measures since they are convex combinations of tails of $\mu_\alpha$. Now $T^{-1}$ is continuous from the norm topology on $E$ to the weak topology on $\mathcal{P}_\mathrm{w}(X)$, so the net $\nu_{\alpha,n}$ converges in $\mathcal{P}_\mathrm{w}(X)$ to $\nu \equiv T^{-1}(v)$; it also converges in the ambient weak-* topology on $\mathcal{M}(X)$. Since each $\nu_{\alpha,n}$ is built from tails of $\mu_\alpha$, both nets must converge to the same limit in this topology. Yet the limit of $\mu_\alpha$ is the shift-invariant generalized measure $\mu$, which cannot be a probability measure by Lemma \ref{notclosed}. This contradicts the assumption that the image of $\mathcal{P}_\mathrm{w}(X)$ in $E$ is norm-closed.

Note that reflexivity of $E$ is essential. Otherwise, the image $T(\mu)$ of the shift-invariant generalized measure $\mu$ might not lie within $E$; the extension of $T$ only ensures it lies in $E^{**}$, and it is merely a weak-* limit, not a weak limit. Moreover, Mazur's Lemma can ``upgrade'' weakly-convergent nets to norm-convergent nets using convex combinations, but this does not work for weak-*-convergent nets, so the key contradiction does not hold for nonreflexive $E$.
\end{proof}

As a corollary, we note that an embedding of $\mathcal{P}_\mathrm{w}(X)$ that is not closed can lead to pathological features such as including the \textit{zero vector} in the closure of the embedded image. Moreover, since the image is convex, including zero means that images of subnormalized positive measures with $\mu(X) < 1$ are also included. This is the case for the usual Kernel Mean Embedding of $\mathcal{P}_\mathrm{w}(X)$ for many widely-used kernels, such as the Gaussian, Laplacian, or inverse-multiquadric kernels on $\mathbb{R}^n$ \cite{Muandet2017}, because these kernels vanish at infinity:

\begin{corollary}
    For noncompact, locally-compact $X$, the zero vector is contained in the closure of the image of $\mathcal{P}_\mathrm{w}(X)$ under the Kernel Mean Embedding into any RKHS of bounded functions that vanish at infinity.
\end{corollary}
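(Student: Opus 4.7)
The plan is to exhibit explicit probability measures $\mu_n$ whose kernel mean embeddings $T(\mu_n)$ converge in norm to the zero vector, directly capturing the ``mass escaping to infinity'' pathology that this corollary is meant to describe. Since every $f \in \mathcal{H}$ vanishes at infinity, point-mass feature vectors $T(\delta_x) = k(\cdot, x)$ become asymptotically orthogonal as $x$ is pushed out of every compact set, so averaging many well-spread Diracs should drive the embedding norm to $0$.

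First I would collect two preliminary facts. Both $\mathcal{H}$ and $C_b(X)$ are complete, and the inclusion $\mathcal{H} \hookrightarrow C_b(X)$ has closed graph, so by the closed graph theorem there is a constant $C$ with $\|f\|_\infty \leq C\|f\|_\mathcal{H}$; evaluating gives $\sqrt{k(x,x)} = \|e_x\|_{\mathcal{H}^*} \leq C$, whence $M := \sup_x k(x,x) < \infty$. Also, by the reproducing property each kernel section $k_y := k(y,\cdot)$ lies in $\mathcal{H}$, so it vanishes at infinity.

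Next I would inductively select points $x_1, x_2, \ldots \in X$. Pick any $x_1$. Given $x_1,\ldots,x_{n-1}$, for each $i<n$ choose a compact $K_i \subset X$ with $|k(x_i,x)| < 1/n^2$ off $K_i$; the finite union $K_1 \cup \cdots \cup K_{n-1}$ is compact, and by noncompactness of $X$ some $x_n$ lies outside it, yielding $|k(x_i,x_n)| < 1/n^2$ for every $i<n$. Local compactness is not strictly needed to \emph{find} such an $x_n$, but is what makes the ``vanishing at infinity'' hypothesis unambiguous in the usual sense.

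Finally, for $\mu_n := \tfrac{1}{n}\sum_{i=1}^n \delta_{x_i} \in \mathcal{P}_\mathrm{w}(X)$ I would estimate
$$\|T(\mu_n)\|_\mathcal{H}^2 \;=\; \frac{1}{n^2}\sum_{i,j=1}^n k(x_i,x_j) \;\leq\; \frac{M}{n} \,+\, \frac{2}{n^2}\sum_{j=2}^n \frac{j-1}{j^2} \;\xrightarrow[n\to\infty]{}\; 0,$$
so $T(\mu_n) \to 0$ in norm and $0 \in \overline{T(\mathcal{P}_\mathrm{w}(X))}$. The main obstacle is the inductive construction, which rests entirely on kernel sections belonging to $\mathcal{H}$ (hence vanishing at infinity) and on noncompactness letting us place $x_n$ outside any prescribed compact set; both are immediate from the hypotheses, so the rest is bookkeeping.
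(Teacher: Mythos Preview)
Your argument is correct and more direct than the paper's. The paper does not compute a norm; instead it reuses Lemma~\ref{notclosed} and Lemma~\ref{extension} to produce a net $\mu_{n_\alpha}$ converging weak-* to a shift-invariant generalized measure, shows $\langle T(\mu_{n_\alpha}),f\rangle\to 0$ for every $f\in\mathcal H$ by letting mass escape any compact set, and then appeals to Hahn-Banach (weak closure of a convex set equals norm closure) to place $0$ in the norm closure. Your approach is entirely self-contained: the closed-graph argument gives $\sup_x k(x,x)<\infty$, the inductive choice of $x_n$ makes the off-diagonal Gram entries small, and a direct estimate shows $\|T(\mu_n)\|_{\mathcal H}\to 0$. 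What the paper's route buys is reuse of the machinery already built for Theorem~\ref{reflexiveincomplete}; what yours buys is an elementary, explicit sequence that avoids nets, weak-* limits, and the Hahn-Banach step altogether. One minor presentational point: the displayed inequality is cleanest if you bound $\|T(\mu_n)\|^2$ by $\tfrac{1}{n^2}\sum_{i,j}|k(x_i,x_j)|$ first, since individual off-diagonal terms need not be nonnegative.
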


\begin{proof}
    Recall that the Kernel Mean Embedding is the affine map $T : \mathcal{P}_\mathrm{w}(X) \to \mathcal{H}$ defined by $T(\delta_x) = K_x$ for the kernel function $K_x(\,\cdot\,) \equiv K(\,\cdot\, , x) \in \mathcal{H}$ \cite{Muandet2017}. By Lemma \ref{extension}, this map can be extended to all generalized measures, and by the argument in Theorem \ref{reflexiveincomplete} (SI Appendix, \ref{reflexiveincomplete-proof}), given any weak-* convergent net of probability measures $\mu_\alpha \to \mu$, the image of the limit $T(\mu)$ is contained in the norm closure of the image of $\mathcal{P}_\mathrm{w}(X)$.
    
    Using the construction in Lemma \ref{notclosed}, choose a weak-* convergent subnet $\mu_{n_\alpha}$ that converges to a shift-invariant generalized measure $\mu$ supported on the discrete set $S$. Choose any function $f \in \mathcal{H}$ and evaluate the inner product with $T(\mu_{n_\alpha})$ using the reproducing property of the kernel:
    \begin{align*}
        \lim_\alpha\, \big<T(\mu_{n_\alpha}),f\big> &= \lim_\alpha \int_X \big<T(\delta_x),f\big>\, d\mu_{n_\alpha}(x) \\
        &= \lim_\alpha\, \int_X \big<K_x,f\big>\, d\mu_{n_\alpha}(x) \\
        &= \lim_\alpha\, \int_X f(x)\, d\mu_{n_\alpha}(x)
    \end{align*}
    We show that this expression converges to zero for any $f \in \mathcal{H}$, which implies that the limit $T(\mu)$ must vanish. Choose $\epsilon>0$. Since $f$ vanishes at infinity, there is a compact set $C \subset X$ such that $|f(x)| < \epsilon/2$ for any $x \in X \backslash C$. Now recall that $\mu_{n_\alpha}$ is a subnet of the sequence $\mu_n$, and each $\mu_n$ is the uniform probability measure on the set $S_n = \{x_k : -n \leq k \leq n \}$. Let $N$ be the number of points in $S$ contained in $C$; it must be finite since $S$ has no accumulation points. The probability mass inside $C$ vanishes at the rate $\mu_n(C) \leq N /(2n+1)$, which gives the bound:
    $$ \left|\int_C f(x)\, d\mu_n(x)\right| \; \leq \; \frac{NM}{2n+1} $$
    for $M = \sup |f(x)|$, which is bounded for $f \in \mathcal{H}$. Choose $n$ so this integral is less than $\epsilon/2$; we already know that the integral of $|f|\, d\mu_n$ over $X \backslash C$ is less than $\epsilon/2$. So the sequence $\langle T(\mu_n),f\rangle$ converges to zero, and thus any subnet $\langle T(\mu_{n_\alpha}),f\rangle$ also converges to zero. Hence the weak limit (zero vector) is contained in the weak closure, and hence also in the norm closure, of the (convex) image of $\mathcal{P}_\mathrm{w}(X)$ (recall Hahn-Banach implies that weak- and norm-closures coincide). This reflects a failure of tightness---in the sense of Prokhorov's theorem---since for noncompact X probability measures can ``escape to infinity,'' which causes the expected value to vanish for any function that vanishes at infinity.
\end{proof}

\subsubsection{Theorem \ref{isomorphism}}\label{isomorphism-proof}
\emph{Coherent states of an $n$-mode quantum harmonic oscillator, with the Hilbert-Schmidt inner product, are isometric to kernel functions in the Gaussian RKHS over $\mathbb{C}^n$.}

We first review the well-known Segal-Bargmann-Fock correspondence. Unlike the usual Bargmann representation, which uses holomorphic functions, we follow Segal and choose anti-holomorphic functions \cite{Hall2000}:

\begin{lemma}\label{bargmann}
    The Hilbert space of an $n$-mode quantum harmonic oscillator is isometrically isomorphic to the RKHS of complex square-integrable functions over $\mathbb{C}^n$ of the form $f(\mathbf{z}) = \exp(-|\mathbf{z}|^2 / 2)\overline{F(\mathbf{z})}$, where $F$ is an entire function of $\mathbf{z}$.
\end{lemma}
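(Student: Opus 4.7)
The plan is to realize the isomorphism explicitly via coherent states, mapping each state $|\psi\rangle$ to its ``wavefunction'' $f_\psi(\mathbf{z}) = \langle \mathbf{z}|\psi\rangle$ on $\mathbb{C}^n$ with the flat measure $d\mu = \pi^{-n}d^{2n}\mathbf{z}$. I would verify in sequence: (i) every $f_\psi$ has the prescribed form $e^{-|\mathbf{z}|^2/2}\overline{F(\mathbf{z})}$ with $F$ entire; (ii) the map is an isometry into $L^2(\mathbb{C}^n, d\mu)$; (iii) it is surjective onto the stated subspace; and (iv) the inner product exhibits the reproducing property.

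For (i), expand the coherent state in the multi-mode Fock basis,
\[
|\mathbf{z}\rangle \;=\; e^{-|\mathbf{z}|^2/2}\sum_{\mathbf{k}}\frac{\mathbf{z}^{\mathbf{k}}}{\sqrt{\mathbf{k}!}}\,|\mathbf{k}\rangle,
\]
summed over multi-indices $\mathbf{k} = (k_1,\ldots,k_n)\in\mathbb{N}_0^n$ with the usual conventions $\mathbf{z}^{\mathbf{k}} = z_1^{k_1}\cdots z_n^{k_n}$ and $\mathbf{k}! = k_1!\cdots k_n!$, then pair with $|\psi\rangle = \sum c_{\mathbf{k}}|\mathbf{k}\rangle$ to obtain $f_\psi(\mathbf{z}) = e^{-|\mathbf{z}|^2/2}\,\overline{F_\psi(\mathbf{z})}$ with $F_\psi(\mathbf{z}) = \sum \overline{c_{\mathbf{k}}}\,\mathbf{z}^{\mathbf{k}}/\sqrt{\mathbf{k}!}$. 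Since $(c_{\mathbf{k}})\in\ell^2$, Cauchy--Schwarz together with the factorial denominators shows the series converges locally uniformly on $\mathbb{C}^n$, so $F_\psi$ is entire by Weierstrass. For (ii), I would invoke the coherent-state resolution of identity $\int |\mathbf{z}\rangle\langle\mathbf{z}|\,d\mu = \hat I$, itself a consequence of the Gaussian moments $\int e^{-|\mathbf{z}|^2}\overline{\mathbf{z}}^{\mathbf{j}}\mathbf{z}^{\mathbf{k}}\,d\mu = \mathbf{k}!\,\delta_{\mathbf{j}\mathbf{k}}$, yielding $\|f_\psi\|_{L^2}^2 = \langle\psi|\psi\rangle$ and hence injectivity.

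Surjectivity, claim (iii), is the main bookkeeping step. Given entire $F$ with $e^{-|\mathbf{z}|^2/2}\overline{F}\in L^2$, expand $F(\mathbf{z}) = \sum \alpha_{\mathbf{k}}\mathbf{z}^{\mathbf{k}}$ and use the same Gaussian orthogonality (interchanging sum and integral via monotone convergence on $|F|^2 e^{-|\mathbf{z}|^2}$) to obtain $\|f\|_{L^2}^2 = \sum|\alpha_{\mathbf{k}}|^2\,\mathbf{k}!$; then $c_{\mathbf{k}} = \overline{\alpha_{\mathbf{k}}}\sqrt{\mathbf{k}!}$ lies in $\ell^2$ and defines a preimage $|\psi\rangle\in\mathcal{H}$. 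For (iv), taking $|\phi\rangle = |\mathbf{w}\rangle$ in the isometry identity $\langle f_\phi,f_\psi\rangle_{L^2} = \langle\phi|\psi\rangle$ gives $f_\psi(\mathbf{w}) = \langle K_{\mathbf{w}},f_\psi\rangle_{L^2}$ with reproducing kernel $K_{\mathbf{w}}(\mathbf{z}) = \langle\mathbf{z}|\mathbf{w}\rangle$, precisely the coherent-state kernel $\exp\!\bigl[-\tfrac{1}{2}(|\mathbf{z}|^2+|\mathbf{w}|^2-2\overline{\mathbf{z}}\cdot\mathbf{w})\bigr]$ from the main text.

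The technical heart is part of (iii): to identify the image as an RKHS one must show that the target subspace is \emph{closed} in $L^2$, i.e., that an $L^2$-limit of functions $e^{-|\mathbf{z}|^2/2}\overline{F_k}$ with entire $F_k$ is again of that form. The needed tool is Cauchy's integral formula applied to the $F_k$ on small polydiscs, which converts a weighted-$L^2$ Cauchy condition on $F_k$ into locally uniform Cauchy convergence; the limit is then entire by Weierstrass's convergence theorem. All remaining estimates are routine Fock-space and Fubini/Tonelli bookkeeping.
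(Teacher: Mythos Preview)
Your proposal is correct and follows essentially the same route as the paper: the coherent-state map $|\psi\rangle\mapsto\langle\mathbf z|\psi\rangle$, the Gaussian monomial orthogonality for the isometry, power-series matching for surjectivity, and the observation that closedness of the target subspace in $L^2$ follows from local-uniform convergence of the weighted entire functions. The only cosmetic difference is that the paper cites Morera's theorem rather than Weierstrass for the entire limit, and it does not spell out the reproducing property (iv) as explicitly as you do.
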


\begin{proof}
    The Hamiltonian of the $n$-mode quantum harmonic oscillator has a complete set of orthonormal eigenstates known as Fock states indexed by $n$-dimensional vectors of nonnegative integers; we denote these states as $|\mathbf{n}\rangle \equiv |n_1, n_2, \ldots, n_n\rangle$. Each complex vector $\mathbf{z} \in \mathbb{C}^n$ defines a normalized coherent state \cite{Hall2000}:
    \begin{equation}\label{coherent-def}
        |\mathbf{z}\rangle = e^{-|\mathbf{z}|^2 / 2} \sum_{\mathbf{n}=\mathbf{0}}^\mathbf{\infty}(\mathbf{n}!)^{-1/2} \mathbf{z}^\mathbf{n} |\mathbf{n}\rangle 
    \end{equation}
    where $\mathbf{n}!=n_1!n_2!\ldots n_n!$ and $\mathbf{z}^\mathbf{n}=z_1^{n_1}z_2^{n_2}\ldots z_n^{n_n}$ are standard multi-index notation, and we sum over all $\mathbf n\in\mathbb N_0^n$. A calculation using the orthonormality of Fock states gives the inner product of two coherent states (and confirms $\langle\mathbf{z}|\mathbf{z}\rangle = 1$):
    \begin{equation}\label{inner-prod}
    \langle\mathbf{w}|\mathbf{z}\rangle = e^{- \frac{1}{2}(|\mathbf{w}|^2 + |\mathbf{z}|^2 - 2 \overline{\mathbf{w}}\mathbf{z}) }
    \end{equation}
    where $\overline{\mathbf{w}}\mathbf{z}$ is the inner product in $\mathbb{C}^n$. We denote the Hilbert space of the $n$-mode harmonic oscillator as $\mathcal{H}_F$; the coherent states are a total set in this space. This construction of the Fock space over $\mathbb{C}^n$ can be performed over any separable Hilbert space \cite{Hall2000}.

    Every vector $|\psi\rangle \in \mathcal{H}_F$ defines a unique function $\psi(\mathbf{z})$ on $\mathbb{C}^n$:
    $$\psi(\mathbf{z}) \equiv \langle \mathbf{z}|\psi\rangle = e^{-|\mathbf{z}|^2 / 2} \sum_\mathbf{n}(\mathbf{n}!)^{-1/2}  \langle\mathbf{n}|\psi\rangle\overline{\mathbf{z}}^\mathbf{n}$$
    where the power series converges for all $\mathbf{z}$ by boundedness of $|\langle\mathbf{n}|\psi\rangle| \leq \|\psi\|$ and rapid decrease of the prefactor $(\mathbf{n}!)^{-1/2}$. This function has the desired form: a Gaussian weight $\exp(-|\mathbf{z}|^2 / 2)$ times the complex conjugate of an entire function of $\mathbf{z}$. Uniqueness follows from uniqueness of the power series coefficients, which give the expansion of $|\psi\rangle$ in the basis of Fock states. Convergence of this power series ensures weak convergence in $\mathcal{H}_F$ of the power-series definition of the coherent state $|\mathbf{z}\rangle$, which in turn ensures strong convergence since these states are normalized to 1.
    
    The following identity (readily derived in polar coordinates) shows that the monomials $z^n$ are orthogonal when integrated over $\mathbb{C}$ with Gaussian weight:
    $$ \int e^{-|z|^2}z^n\overline{z}^m \,d^2z = \pi\,n!\,\delta_{n,m}$$
    This identity links the orthogonality of single-mode Fock states with orthogonality of the corresponding monomials. In $\mathbb C^n$ the integral factorizes, yielding $\pi^n\prod_k n_k!\,\delta_{\mathbf n,\mathbf m}$, so one can easily show that the inner product in $\mathcal{H}_F$ is given by an integral over the corresponding functions:
    \begin{equation}\label{bargmann-inner-prod}
        \langle\phi|\psi\rangle = \frac{1}{\pi^n}\int \overline{\phi(\mathbf{z})}\psi(\mathbf{z})\, d^{2n}\mathbf{z} = \frac{1}{\pi^n}\int \langle\phi|\mathbf{z}\rangle\langle\mathbf{z}|\psi\rangle\, d^{2n}\mathbf{z}
    \end{equation}
    where $d^{2n}\mathbf{z}$ is the $2n$-dimensional Lebesgue measure over the real and imaginary components of $\mathbf{z}$. This inner product defines an RKHS of functions on $\mathbb{C}^n$. Conversely, any function of the form $\exp(-|\mathbf{z}|^2 / 2)\overline{F(\mathbf{z})}$ is uniquely determined by the power-series coefficients of the entire function $F(\mathbf{z})$. Using the coherent-state expansion, one can extract the unique Fock-basis coefficients of the corresponding vector in $\mathcal{H}_F$. Thus we have an isometric bijection between $\mathcal{H}_F$ and the desired coherent state RKHS.

    The kernel for this RKHS is simply $k(\mathbf{w},\mathbf{z}) = \langle\mathbf{w}|\mathbf{z}\rangle$. This kernel is strictly positive definite (s.p.d.), which means that for any distinct $\mathbf{z}_i$, the matrix $M_{ij} = k(\mathbf{z}_i,\mathbf{z}_j)$ is an s.p.d.\@ matrix \cite{Muandet2017}. This s.p.d.\@ Gram matrix $\langle\mathbf{z}_i|\mathbf{z}_j\rangle$ implies linear independence for any distinct, finite set of coherent states.
    
    Although this RKHS may look like $L^2(\mathbb{C}^n)$, it only contains functions of the form $\exp(-|\mathbf{z}|^2 / 2)\overline{F(\mathbf{z})}$, with $F$ entire. This is a closed subspace of $L^2(\mathbb{C}^n)$ \cite{Hall2000}: briefly, $L^2$ convergence of Gaussian-weighted entire functions implies uniform convergence on compact sets, which yields a Gaussian-weighted entire limit by Morera's theorem. The expansion of the inner product in coherent states suggests the well-known resolution of the identity operator:
    \begin{equation}\label{resolution-ident}
        \hat{I} \simeq \frac{1}{\pi^n}\int |\mathbf{z}\rangle\langle\mathbf{z}|\, d^{2n}\mathbf{z} 
    \end{equation}
    where the notation $\simeq$ means equality in the sense of the weak operator topology, i.e., it is an equality when ``sandwiched'' between any pair of states $\langle\phi|\cdot|\psi\rangle$.
\end{proof}

    We now return to the main theorem.

\begin{proof}  (Theorem \ref{isomorphism}, SI Appendix, \ref{isomorphism-proof})  
    Lemma \ref{bargmann} connects the space $\mathcal{H}_F$ with the coherent state RKHS of square-integrable functions over $\mathbb{C}^n$. We now connect this coherent state RKHS with the Gaussian RKHS over $\mathbb{C}^n$. 
    
    Every RKHS is uniquely defined by its kernel (Theorem 2.5 in \cite{Muandet2017}); the Gaussian kernel is $K(\mathbf{w},\mathbf{z}) = \exp(-|\mathbf{w}-\mathbf{z}|^2)$. A short calculation using the expression for the inner product of coherent states (eq.\@ \ref{inner-prod} of Lemma \ref{bargmann}) shows that the Gaussian kernel is given by the square of the coherent state kernel:
    $$K(\mathbf{w},\mathbf{z}) = |k(\mathbf{w},\mathbf{z})|^2 = \overline{k(\mathbf{w},\mathbf{z})}k(\mathbf{w},\mathbf{z}) = \langle\mathbf{z}|\mathbf{w}\rangle\langle\mathbf{w}|\mathbf{z}\rangle $$
   As the coherent state kernel is strictly positive definite, so is the Gaussian kernel: its matrix is the Hadamard product of the individual kernel matrices, so strict positive definiteness follows by the Schur product theorem.

    Define the pure coherent states $\hat{\rho}_\mathbf{z} =|\mathbf{z}\rangle\langle\mathbf{z}| $. These rank-one operators belong to the Hilbert-Schmidt space $\mathcal{B}_2(\mathcal{H}_F)$ of operators with inner product $(\hat{A},\hat{B}) \equiv \mathrm{tr}(\hat{A}^\dagger\hat{B})$. The inner product between two coherent states is:
    $$ (\hat{\rho}_\mathbf{w},\hat{\rho}_\mathbf{z}) = \mathrm{tr}(\hat{\rho}_\mathbf{w}\hat{\rho}_\mathbf{z}) = \mathrm{tr}(|\mathbf{w}\rangle\langle\mathbf{w}|\mathbf{z}\rangle\langle\mathbf{z}|) = \langle\mathbf{z}|\mathbf{w}\rangle\langle\mathbf{w}|\mathbf{z}\rangle$$
    where the final equality is derived by taking the trace in an orthonormal basis that includes $|\mathbf{z}\rangle$ as one of its vectors. 
    
    Now recall that the Gaussian RKHS includes Gaussian kernel functions $K_\mathbf{z}(\,\cdot\,)=K(\,\cdot\, ,\mathbf{z})$. The reproducing property means that the inner product between two kernel functions is defined as:
    $$\langle K_\mathbf{w}(\,\cdot\,), K_\mathbf{z}(\,\cdot\,)\rangle =K(\mathbf{w},\mathbf{z}) = \langle\mathbf{z}|\mathbf{w}\rangle\langle\mathbf{w}|\mathbf{z}\rangle .$$
    
    This provides an isometric bijection between the kernel functions $K_\mathbf{z}(\,\cdot\,) = \exp(-|\,\cdot\, - \mathbf{z}|^2)$ (Gaussians centered at $\mathbf{z}$) and the pure coherent states $\hat{\rho}_\mathbf{z} =|\mathbf{z}\rangle\langle\mathbf{z}| $ of the quantum harmonic oscillator. Note that the kernel functions and the pure coherent states are \textit{nonlinear} images of $\mathbb{C}^n$ that sit as submanifolds of their respective (infinite-dimensional) Hilbert spaces. In addition, the Gaussian RKHS is often defined with a scale parameter $\lambda$, i.e.\@ $K(\mathbf{w},\mathbf{z}) = \exp(-|\mathbf{w}-\mathbf{z}|^2/\lambda)$. In the coherent state picture, this parameter can be absorbed by restoring physical dimensions to the phase space coordinates $\mathbf{z}$ and adjusting Planck's constant $\hbar$. Equivalently, one can merely rescale the coordinates as $\mathbf z\mapsto \mathbf z/\sqrt\lambda$ to achieve the same effect.   
\end{proof}

This isometry offers a precise characterization of the functions contained in the Gaussian RKHS over $\mathbb{C}^n$:

\begin{corollary}\label{gaussian-RKHS}
    Any function in the Gaussian RKHS over $\mathbb{C}^n$ can be written uniquely as $f(\mathbf{z}) = \langle\mathbf{z}|\hat{f}|\mathbf{z}\rangle$ for a Hilbert-Schmidt operator $\hat{f}$ on the state space of the $n$-mode quantum harmonic oscillator.
\end{corollary}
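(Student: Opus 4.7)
The plan is to upgrade Theorem \ref{isomorphism} from a pointwise correspondence between total sets into a full isometric isomorphism between Hilbert spaces, at which point the reproducing property reads off the desired formula. Concretely, the kernel functions $\{K_\mathbf{z}\}$ are total in the Gaussian RKHS by definition, and if the coherent state projectors $\{\hat{\rho}_\mathbf{z}\}$ are similarly total in $\mathcal{B}_2(\mathcal{H}_F)$, then the inner-product-preserving correspondence $K_\mathbf{z} \leftrightarrow \hat{\rho}_\mathbf{z}$ established in Theorem \ref{isomorphism} extends uniquely by linearity and continuity to an isometric isomorphism $U$ from the Gaussian RKHS onto $\mathcal{B}_2(\mathcal{H}_F)$.

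Given $f$ in the Gaussian RKHS, I would define $\hat{f} \equiv U(f) \in \mathcal{B}_2(\mathcal{H}_F)$. The reproducing property, combined with self-adjointness of $\hat{\rho}_\mathbf{z}$ and the cyclic property of the trace, then yields
$$ f(\mathbf{z}) \;=\; \langle K_\mathbf{z}, f\rangle_{\mathrm{RKHS}} \;=\; (\hat{\rho}_\mathbf{z}, \hat{f})_{\mathcal{B}_2} \;=\; \mathrm{tr}(\hat{\rho}_\mathbf{z}\hat{f}) \;=\; \langle\mathbf{z}|\hat{f}|\mathbf{z}\rangle, $$
which is the claimed formula. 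Uniqueness of the representing operator follows from injectivity of $U$, so the representation $\hat{f}$ is unambiguous.

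The main obstacle is therefore showing totality of $\{\hat{\rho}_\mathbf{z}\}$ in $\mathcal{B}_2(\mathcal{H}_F)$, which is equivalent to injectivity of the Husimi $Q$-symbol map $\hat{A} \mapsto Q_{\hat{A}}(\mathbf{z}) \equiv \langle\mathbf{z}|\hat{A}|\mathbf{z}\rangle$ on Hilbert-Schmidt operators. To establish this, I would expand $\hat{A} = \sum_{\mathbf{n},\mathbf{m}} a_{\mathbf{n}\mathbf{m}} |\mathbf{n}\rangle\langle\mathbf{m}|$ in the Fock basis and substitute the coherent state series (eq.~\ref{coherent-def}) to obtain
$$ e^{|\mathbf{z}|^2}\, Q_{\hat{A}}(\mathbf{z}) \;=\; \sum_{\mathbf{n},\mathbf{m}} \frac{a_{\mathbf{n}\mathbf{m}}}{\sqrt{\mathbf{n}!\,\mathbf{m}!}}\; \overline{\mathbf{z}}^{\,\mathbf{n}} \mathbf{z}^{\,\mathbf{m}}, $$
which converges absolutely for every $\mathbf{z} \in \mathbb{C}^n$ by Cauchy-Schwarz applied to the square-summable Fock coefficients against the exponentially-decaying factorial weights. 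If $Q_{\hat{A}} \equiv 0$, then this convergent power series in $\mathbf{z}$ and $\overline{\mathbf{z}}$ vanishes identically, and by the Gaussian-weighted monomial orthogonality identity used in Lemma \ref{bargmann} every coefficient $a_{\mathbf{n}\mathbf{m}}$ must vanish, giving $\hat{A} = 0$. This injectivity closes the proof, since it forces $\{\hat{\rho}_\mathbf{z}\}^{\perp} = \{0\}$ in the Hilbert space $\mathcal{B}_2(\mathcal{H}_F)$, which is precisely the totality needed to extend the isometry of Theorem \ref{isomorphism} to all of $\mathcal{B}_2(\mathcal{H}_F)$.
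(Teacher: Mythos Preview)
Your approach is essentially the same as the paper's: extend the isometric correspondence $K_\mathbf{z}\leftrightarrow\hat\rho_\mathbf{z}$ of Theorem~\ref{isomorphism} from the total sets to the full Hilbert spaces, then read off the formula $f(\mathbf z)=\langle\mathbf z|\hat f|\mathbf z\rangle$ via the reproducing property. The paper organises it slightly differently---it first verifies the formula on finite spans and passes to the Cauchy completion, deferring totality of $\{\hat\rho_\mathbf{z}\}$ (equivalently, injectivity of the $Q$-symbol) to the subsequent corollary and Lemma~\ref{Q-unique}---whereas you establish totality up front and obtain the full isomorphism in one stroke; this has the merit of making the ``uniqueness'' clause genuinely uniqueness among all Hilbert--Schmidt operators rather than just within the image of the map.

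One small technical wobble: the Gaussian-weighted orthogonality identity of Lemma~\ref{bargmann}, namely $\int e^{-|z|^2}z^n\overline z^{\,m}\,d^2z=\pi\,n!\,\delta_{nm}$, does not by itself separate the mixed monomials $\overline{\mathbf z}^{\,\mathbf n}\mathbf z^{\mathbf m}$, since $\overline z^{\,n}z^m$ and $\overline z^{\,n'}z^{m'}$ fail to be orthogonal whenever $n-m=n'-m'$. The clean route (and what the paper does in Lemma~\ref{Q-unique}) is to observe that your absolutely convergent series defines a real-analytic function of the $2n$ real coordinates, so identical vanishing forces all Taylor coefficients---hence all $a_{\mathbf n\mathbf m}$---to be zero.
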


\begin{proof}
    In any RKHS, the linear span of the kernel functions is a dense subspace of the full Hilbert space \cite{Muandet2017}. Indeed, the Gaussian RKHS can be constructed from the kernel functions $K_\mathbf{z}(\,\cdot\,)$ as follows. Any function in the linear span of $K_\mathbf{z}$ can be written as:
    $$ f(\,\cdot\,) = \sum_i b_i K_{\mathbf{z}_i}(\,\cdot\,) = \sum_i b_i K(\,\cdot\, , \mathbf{z}_i)$$
    for coefficients $b_i$ and vectors $\mathbf{z}_i \in \mathbb{C}^n$. This expansion is unique because the kernel functions are linearly independent (due to strict positive definiteness of the Gaussian kernel). As the inner product is a sesquilinear form, the inner product between such functions is:
    \begin{align*}
        \langle f(\,\cdot\,),g(\,\cdot\,)\rangle &= \left< \sum_i b_i K_{\mathbf{z}_i}(\,\cdot\,)\;,\; \sum_j c_j K_{\mathbf{z}_j}(\,\cdot\,) \right>   \\
        &=\sum_{i,j} \overline{b_i} c_j\langle K_{\mathbf{z}_i}(\,\cdot\,), K_{\mathbf{z}_j}(\,\cdot\,)\rangle 
                    =  \sum_{i,j} \overline{b_i} c_j K(\mathbf{z}_i, \mathbf{z}_j) .
    \end{align*}
    The reproducing property of the kernel means that:
    \begin{align*}
        f(\mathbf{w}) &= \langle K_\mathbf{w}(\,\cdot\,), f(\,\cdot\,)\rangle = \left< K_\mathbf{w}(\,\cdot\,)\;,\; \sum_i b_i K_{\mathbf{z}_i}(\,\cdot\,) \right> \\
        &= \sum_i b_i\langle K_{\mathbf{w}}(\,\cdot\,), K_{\mathbf{z}_i}(\,\cdot\,)\rangle = \sum_i b_i K(\mathbf{w} , \mathbf{z}_i) = f(\mathbf{w})
    \end{align*}
    as desired. The full Gaussian RKHS is defined as the Cauchy completion of these functions with respect to this inner product.

    Now we apply the isometric map $K_\mathbf{z}(\,\cdot\,) \to \hat{\rho}_\mathbf{z}$. By linear independence, every function $f$ in the linear span of $K_\mathbf{z}(\,\cdot\,)$ is mapped uniquely to an operator $\hat{f}$ in the linear span of $\hat{\rho}_\mathbf{z}$:

    $$ f(\,\cdot\,) = \sum_i b_i K_{\mathbf{z}_i}(\,\cdot\,) \; \to \; \hat{f} = \sum_i b_i \hat{\rho}_{\mathbf{z_i}}$$

    Apply the reproducing property of the kernel, and use the isometry of the inner product under the map $K_\mathbf{z}(\,\cdot\,) \to \hat{\rho}_\mathbf{z}$:
        \begin{align*}
        f(\mathbf{w}) &= \langle K_\mathbf{w}(\,\cdot\,), f(\,\cdot\,)\rangle = \left< K_\mathbf{w}(\,\cdot\,)\;,\; \sum_i b_i K_{\mathbf{z}_i}(\,\cdot\,) \right> \\
        &= \sum_i b_i\langle K_{\mathbf{w}}(\,\cdot\,), K_{\mathbf{z}_i}(\,\cdot\,) \rangle\\
        &= \sum_i b_i \,\mathrm{tr}(\hat{\rho}_\mathbf{w} \hat{\rho}_{\mathbf{z}_i} ) = \mathrm{tr}\left(\hat{\rho}_\mathbf{w} \sum_i b_i\hat{\rho}_{\mathbf{z}_i} \right)  \\
        &= \mathrm{tr}(\hat{\rho}_\mathbf{w} \hat{f} ) = \mathrm{tr}(|\mathbf{w}\rangle\langle\mathbf{w}| \hat{f} ) = \langle\mathbf{w}|\hat{f}|\mathbf{w}\rangle .
    \end{align*}
    Thus, for functions $f$ in the linear span of $K_\mathbf{z}(\,\cdot\,)$, we have a linear isometric map $f \to \hat{f}$ to a Hilbert-Schmidt operator $\hat{f} \in \mathcal{B}_2(\mathcal{H}_F)$. Since the inner product is the same in both spaces by Theorem \ref{isomorphism} (SI Appendix, \ref{isomorphism-proof}), the full Gaussian RKHS, which is  the Cauchy completion of the span of $K_\mathbf{z}(\,\cdot\,)$, is isometrically mapped to the Cauchy completion of the span of $\hat{\rho}_\mathbf{z}$ in $\mathcal{B}_2(\mathcal{H}_F)$.
\end{proof}

It turns out that this map $f \to \hat{f}$ is bijective, which establishes the equivalence of these Hilbert spaces:

\begin{corollary}
     There is an isometric isomorphism between the Hilbert-Schmidt space $\mathcal{B}_2(\mathcal{H}_F)$ and the Gaussian RKHS over $\mathbb{C}^n$.
\end{corollary}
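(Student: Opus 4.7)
By Corollary \ref{gaussian-RKHS}, the assignment $f \mapsto \hat{f}$ is already a well-defined linear isometry from the Gaussian RKHS over $\mathbb{C}^n$ into $\mathcal{B}_2(\mathcal{H}_F)$, whose image is the closed linear span of the pure coherent states $\{\hat{\rho}_\mathbf{z} : \mathbf{z} \in \mathbb{C}^n\}$ in the Hilbert-Schmidt inner product. Injectivity is automatic from being an isometry, so the only thing left is surjectivity, i.e.\ showing that $\overline{\operatorname{span}}\{\hat{\rho}_\mathbf{z}\} = \mathcal{B}_2(\mathcal{H}_F)$.

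The plan is the standard Hilbert-space orthogonal-complement argument. Suppose $\hat{A} \in \mathcal{B}_2(\mathcal{H}_F)$ is orthogonal (in the Hilbert-Schmidt inner product) to every $\hat{\rho}_\mathbf{z}$. Then for all $\mathbf{z} \in \mathbb{C}^n$,
$$0 = (\hat{\rho}_\mathbf{z}, \hat{A}) = \mathrm{tr}(\hat{\rho}_\mathbf{z} \hat{A}) = \langle \mathbf{z}|\hat{A}|\mathbf{z}\rangle.$$
It suffices to prove that the vanishing of this ``diagonal coherent-state symbol'' (the Husimi $Q$-function of $\hat{A}$) forces $\hat{A} = 0$.

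For this, expand $\hat{A} = \sum_{\mathbf{n},\mathbf{m}} A_{\mathbf{n},\mathbf{m}}\,|\mathbf{n}\rangle\langle\mathbf{m}|$ in the Fock basis, with $\sum |A_{\mathbf{n},\mathbf{m}}|^2 < \infty$ since $\hat A \in \mathcal{B}_2(\mathcal{H}_F)$. Plugging in the power-series expression (eq.~\ref{coherent-def}) for $|\mathbf{z}\rangle$ gives
$$\langle \mathbf{z}|\hat{A}|\mathbf{z}\rangle = e^{-|\mathbf{z}|^2}\sum_{\mathbf{n},\mathbf{m}} \frac{A_{\mathbf{n},\mathbf{m}}}{\sqrt{\mathbf{n}!\,\mathbf{m}!}}\,\overline{\mathbf{z}}^{\mathbf{n}} \mathbf{z}^{\mathbf{m}},$$
a jointly absolutely convergent power series in $\overline{\mathbf{z}}$ and $\mathbf{z}$ on all of $\mathbb{C}^n$. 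The monomials $\overline{\mathbf{z}}^{\mathbf{n}}\mathbf{z}^{\mathbf{m}}$ are linearly independent as functions on $\mathbb{C}^n$ (they can be isolated by applying $\partial^{\mathbf{n}}_{\overline{\mathbf{z}}}\partial^{\mathbf{m}}_{\mathbf{z}}$ at $\mathbf{z} = 0$, using the entire Segal--Bargmann structure from Lemma~\ref{bargmann}), so every coefficient vanishes and $A_{\mathbf{n},\mathbf{m}} = 0$ for all $\mathbf{n}, \mathbf{m}$, i.e., $\hat{A} = 0$. Thus the orthogonal complement of $\operatorname{span}\{\hat{\rho}_\mathbf{z}\}$ in $\mathcal{B}_2(\mathcal{H}_F)$ is trivial, so its closure is all of $\mathcal{B}_2(\mathcal{H}_F)$, and the isometry of Corollary~\ref{gaussian-RKHS} is surjective.

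The main obstacle is the ``$Q$-function determines the operator'' step. Everything else is bookkeeping on top of Corollary~\ref{gaussian-RKHS}: the genuine content is that the diagonal coherent-state matrix elements form an injective representation of Hilbert--Schmidt operators. I would handle this via the Fock-basis power series above because it keeps the argument entirely within what the paper has already set up; an alternative (worth mentioning for intuition) is to note that $(\mathbf{w},\mathbf{z}) \mapsto e^{\frac{1}{2}(|\mathbf{w}|^2+|\mathbf{z}|^2)}\langle \mathbf{w}|\hat{A}|\mathbf{z}\rangle$ is anti-holomorphic in $\mathbf{w}$ and holomorphic in $\mathbf{z}$, so polarization/analytic continuation from the diagonal $\mathbf{w} = \mathbf{z}$ recovers the full kernel and hence $\hat{A}$ itself.
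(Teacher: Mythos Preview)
Your proposal is correct and follows essentially the same route as the paper: both reduce surjectivity to the orthogonal-complement argument that $\langle\mathbf{z}|\hat{A}|\mathbf{z}\rangle = 0$ for all $\mathbf{z}$ forces $\hat{A}=0$, and both prove this via the Fock-basis power series in $\overline{\mathbf{z}},\mathbf{z}$. The only cosmetic difference is that the paper packages the ``$Q$-function determines the operator'' step into a separate lemma (Lemma~\ref{Q-unique}), whereas you inline the same argument directly.
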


\begin{proof}
    The isometric equivalence between $K_\mathbf{z}(\,\cdot\,)$ and $\hat{\rho}_\mathbf{z}$ ensures that the $\hat{\rho}_\mathbf{z}$ are linearly independent: the Gram matrix is identical, so it must be strictly positive definite. So any operator $\hat{f}$ in the span of $\hat{\rho}_\mathbf{z}$ has a unique preimage $f$ in the span of $K_\mathbf{z}(\,\cdot\,)$. Likewise, since the inner product is the same in both spaces, any $\hat{f}$ in the Cauchy completion of the span of $\hat{\rho}_\mathbf{z}$ has a unique preimage $f$ in the Gaussian RKHS. Thus the map $f \to \hat{f}$ is injective.

    Now we prove surjectivity: every Hilbert-Schmidt operator $\hat{A} \in \mathcal{B}_2(\mathcal{H}_F)$ is the image of some function $A(\,\cdot\,)$ in the Gaussian RKHS. Equivalently, the linear span of the coherent states $\hat{\rho}_\mathbf{z}$ is dense in the entire Hilbert-Schmidt space $\mathcal{B}_2(\mathcal{H}_F)$. Denote the closure of this linear span as $\mathcal{S}$. If $\mathcal{S}$ is a proper closed subspace of $\mathcal{B}_2(\mathcal{H}_F)$, there must be a nonzero operator $\hat{A}$ that is orthogonal to every operator in $\mathcal{S}$; i.e., $\mathrm{tr}(\hat{S} \hat{A} ) = 0$ for $\hat{S} \in \mathcal{{S}}$. Define the function $Q_A(\mathbf{z})  = \langle\mathbf{z}|\hat{A}|\mathbf{z}\rangle$, which is the Hilbert-Schmidt inner product $\mathrm{tr}(\hat{\rho}_\mathbf{z} \hat{A} )$ between $\hat{A}$ and the pure coherent state $\hat{\rho}_\mathbf{z}$. This is known as the $Q$-representation of the operator $\hat{A}$, which is unique for any $\hat{A}$ (see Lemma \ref{Q-unique} below). If $\hat{A} \perp \mathcal{{S}}$, then its $Q$-representation is zero since it is orthogonal to every coherent state in $\mathcal{S}$. By uniqueness of the $Q$-representation, however, this requires that $\hat{A}=0$. So the span of $\hat{\rho}_\mathbf{z}$ is dense in the Hilbert-Schmidt space. We conclude that the Gaussian RKHS over $\mathbb{C}^n$ is isometrically isomorphic to the Hilbert-Schmidt operators on $\mathcal{{H}_F}$.
\end{proof}

\begin{lemma}\label{Q-unique}
    For any bounded operator $\hat{A}$, the $Q$-representation $Q_A(\mathbf{z})  = \langle\mathbf{z}|\hat{A}|\mathbf{z}\rangle$ over the coherent states $|\mathbf{z}\rangle$ is a unique function of $\hat{A}$. Moreover, for self-adjoint $\hat{A}$, $\langle\mathbf{z}|\hat{A}|\mathbf{z}\rangle$ is a bounded real entire function of \textbf{both} variables $\overline{\mathbf{z}}$ and $\mathbf{z}$.
\end{lemma}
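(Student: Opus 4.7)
The plan is to expand $Q_A(\mathbf z)=\langle\mathbf z|\hat A|\mathbf z\rangle$ in the Fock basis, exploit the resulting factorial suppression to establish joint analyticity in $(\overline{\mathbf z},\mathbf z)$, and then read off both uniqueness and boundedness. Substituting the series eq.~\eqref{coherent-def} for $|\mathbf z\rangle$ yields
\begin{equation*}
Q_A(\mathbf z)=e^{-|\mathbf z|^2}\sum_{\mathbf m,\mathbf n\ge 0}\frac{A_{\mathbf m\mathbf n}}{\sqrt{\mathbf m!\,\mathbf n!}}\,\overline{\mathbf z}^{\mathbf m}\mathbf z^{\mathbf n},\qquad A_{\mathbf m\mathbf n}\equiv\langle\mathbf m|\hat A|\mathbf n\rangle.
\end{equation*}
Since $|A_{\mathbf m\mathbf n}|\le\|\hat A\|$ (as $|\mathbf m\rangle,|\mathbf n\rangle$ are unit vectors) and each monomial carries the factor $1/\sqrt{\mathbf m!\,\mathbf n!}$, replacing $\overline{\mathbf z}$ by an independent variable $\mathbf w\in\mathbb C^n$ gives a series that converges absolutely and uniformly on compact subsets of $\mathbb C^{2n}$. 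Hence $F(\mathbf w,\mathbf z)\equiv e^{-\mathbf w\cdot\mathbf z}\sum_{\mathbf m,\mathbf n}A_{\mathbf m\mathbf n}\mathbf w^{\mathbf m}\mathbf z^{\mathbf n}/\sqrt{\mathbf m!\,\mathbf n!}$ is jointly entire on $\mathbb C^{2n}$, and $F(\overline{\mathbf z},\mathbf z)=Q_A(\mathbf z)$, which establishes the ``entire in both variables'' claim.

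Boundedness is immediate from Cauchy--Schwarz: $|Q_A(\mathbf z)|\le\|\hat A\|\,\|\,|\mathbf z\rangle\|^2=\|\hat A\|$ since coherent states have unit norm. Reality for self-adjoint $\hat A$ follows from $\overline{\langle\mathbf z|\hat A|\mathbf z\rangle}=\langle\mathbf z|\hat A^\dagger|\mathbf z\rangle=\langle\mathbf z|\hat A|\mathbf z\rangle$. For uniqueness of the $Q$-representation, it suffices to show that $Q_A\equiv 0$ implies $\hat A=0$. If $Q_A$ vanishes identically, then $F(\overline{\mathbf z},\mathbf z)\equiv 0$ on the totally real diagonal $\{(\overline{\mathbf z},\mathbf z):\mathbf z\in\mathbb C^n\}\subset\mathbb C^{2n}$, a uniqueness set for jointly holomorphic functions. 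Equivalently, Wirtinger derivatives $\partial_{\overline z_j}^{m_j}\partial_{z_j}^{n_j}$ applied to $e^{|\mathbf z|^2}Q_A(\mathbf z)$ at $\mathbf z=0$ extract each coefficient $A_{\mathbf m\mathbf n}/\sqrt{\mathbf m!\,\mathbf n!}$; vanishing of all matrix elements in the orthonormal Fock basis then forces $\hat A=0$.

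The main obstacle is making the joint-holomorphic extension rigorous---confirming that the formal substitution $\overline{\mathbf z}\mapsto\mathbf w$ yields a genuinely convergent power series (not merely a formal one) and that vanishing on the real diagonal forces coefficient-wise vanishing. Both points are handled by the matrix-element bound $|A_{\mathbf m\mathbf n}|\le\|\hat A\|$ together with the factorial suppression; alternatively, one can appeal to the Bargmann picture from Lemma~\ref{bargmann}, in which $\langle\mathbf w|\hat A|\mathbf z\rangle\,e^{(|\mathbf w|^2+|\mathbf z|^2)/2}$ is the entire integral kernel of $\hat A$ on the Segal--Bargmann space, making both joint analyticity and uniqueness manifest.
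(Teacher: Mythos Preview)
Your proof is correct and follows essentially the same route as the paper: expand $Q_A$ in the Fock basis, use the bound $|A_{\mathbf m\mathbf n}|\le\|\hat A\|$ together with the factorial suppression to get an entire function in the independent variables $(\overline{\mathbf z},\mathbf z)$, and infer uniqueness from the fact that the power-series coefficients recover the Fock-basis matrix elements. The paper packages this via the sesquiholomorphic function $\tilde A(\overline{\mathbf w},\mathbf z)=\langle\mathbf w|\hat A|\mathbf z\rangle/\langle\mathbf w|\mathbf z\rangle$, whereas you substitute $\overline{\mathbf z}\mapsto\mathbf w$ directly and invoke the totally-real-diagonal uniqueness set (or Wirtinger derivatives); these are notational variants of the same argument.
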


\begin{proof}
    Define the function $\tilde{A}(\overline{\mathbf{w}},\mathbf{z})$ as:
    \begin{equation}\label{tilde-func}
        \tilde{A}(\overline{\mathbf{w}},\mathbf{z}) = \frac{\langle\mathbf{w}|\hat{A}|\mathbf{z}\rangle}{\langle\mathbf{w}|\mathbf{z}\rangle}
    \end{equation}
    so the Q-representation is $Q_A(\mathbf{z}) = \tilde{A}(\overline{\mathbf{z}},\mathbf{z})$. Expanding in the Fock basis using multi-index notation (eq.\@ \ref{coherent-def} in Lemma \ref{bargmann}) gives:
    
    \begin{align}\label{sesqui-series}
        \frac{\langle\mathbf{w}|\hat{A}|\mathbf{z}\rangle}{\langle\mathbf{w}|\mathbf{z}\rangle} &= \frac{e^{- \frac{1}{2}(|\mathbf{w}|^2 + |\mathbf{z}|^2)}\sum_{\mathbf{n},\mathbf{m}\geq\mathbf{0}}(\mathbf{n}!\mathbf{m}!)^{-1/2}\, \overline{\mathbf{w}}^\mathbf{n}\mathbf{z}^\mathbf{m} \langle\mathbf{n}|\hat{A}|\mathbf{m}\rangle}{e^{- \frac{1}{2}(|\mathbf{w}|^2 + |\mathbf{z}|^2 - 2 \overline{\mathbf{w}}\mathbf{z}) }} \notag \\
        &= e^{-\overline{\mathbf{w}}\mathbf{z}}\sum_{\mathbf{n},\mathbf{m}=\mathbf{0}}^\mathbf{\infty}\frac{\langle\mathbf{n}|\hat{A}|\mathbf{m}\rangle}{\sqrt{\mathbf{n}!\mathbf{m}!}}\, \overline{\mathbf{w}}^\mathbf{n}\mathbf{z}^\mathbf{m} 
    \end{align}
    which converges for all $\overline{\mathbf{w}},\mathbf{z}$ by boundedness of $\langle\mathbf{n}|\hat{A}|\mathbf{m}\rangle \leq \|\hat{A}\| $ and rapid decrease of $(\mathbf{n}!\mathbf{m}!)^{-1/2}$. Thus $\tilde{A}(\overline{\mathbf{w}},\mathbf{z})$ is a sesquiholomorphic entire function (holomorphic in $\mathbf{z}$, anti-holomorphic in $\mathbf{w}$) whose series coefficients are defined by the Fock-basis matrix elements of $\hat{A}$. Since the series expansion of an entire function is unique, and a bounded operator is uniquely determined by its matrix elements in a complete basis, the map $\hat{A} \to \tilde{A}(\overline{\mathbf{w}},\mathbf{z})$ is injective.

    The $Q$-representation is:
    \begin{equation}\label{Q-rep}
         Q_A(\mathbf{z}) = \tilde{A}(\overline{\mathbf{z}},\mathbf{z}) = e^{-\overline{\mathbf{z}}\mathbf{z}}\sum_{\mathbf{n},\mathbf{m}=\mathbf{0}}^\mathbf{\infty}\frac{\langle\mathbf{n}|\hat{A}|\mathbf{m}\rangle}{\sqrt{\mathbf{n}!\mathbf{m}!}}\, \overline{\mathbf{z}}^\mathbf{n}\mathbf{z}^\mathbf{m}
    \end{equation}
    which is an entire function of $\overline{\mathbf{z}},\mathbf{z}$ (treated as independent variables), and bounded since $\hat{A}$ is bounded. The map $\hat{A} \to \tilde{A}(\overline{\mathbf{z}},\mathbf{z})$ is injective by the same argument as above: uniqueness of series coefficients for an entire function, and uniqueness of matrix elements in a complete basis. Thus the $Q$-representation is unique. Finally, if $\hat{A}$ is self-adjoint, its Q-representation $\langle\mathbf{z}|\hat{A}|\mathbf{z}\rangle$ must be real by the Hermitian symmetry of diagonal matrix elements.
\end{proof}

\subsubsection{Theorem \ref{completeness}}\label{completeness-proof}
\emph{Every QPM completely metrizes $\mathcal{P}_\mathrm{w}(X)$.}
\begin{proof}
    We assume that we have a QPM on $\mathcal{P}_\mathrm{w}(X)$. As defined in the main text, this requires a function $\phi : X \to \mathcal{S}$ that maps points $x \in X$ to pure states $|x\rangle\langle x| \equiv \hat{\rho}_x$ on a Hilbert space. We require that $\phi$ is a closed embedding, so its image $\mathcal{C}$ is closed and homeomorphic to $X$, and also that its kernel $k(x,y) = \mathrm{tr}(\hat{\rho}_x\hat{\rho}_y)$ is characteristic. All of these properties are preserved if we normalize the states $\langle x|x\rangle=\mathrm{tr}(\hat{\rho}_x)=1$, so we assume normalized states without loss of generality; this gives a normalized kernel $k(x,x)=1$.

    The barycenter map is:
    $$ T(\mu) = \int_X \hat{\rho}_x\, d\mu(x) \equiv \hat{\mu} $$
    which is well-defined as a strong (Bochner) integral since the map $x\to \hat{\rho}_x$ is continuous from $X$ to the separable space $\mathcal{B}_1(\mathcal{H})$, and the trace norm $\|\hat{\rho}_x\|_1=1$. Since $\mu$ is a probability measure, it is straightforward to show that $\hat{\mu}$ is an element of the quantum state space $\mathcal{S}$: it is a positive operator with unit trace. By definition, the barycenter map is injective for a characteristic kernel.

    By assumption, the image $\phi(X) \equiv\mathcal{C}$ is a (norm-)closed subset of $\mathcal{S}$ that is homeomorphic to $X$. Labeling operators in $\mathcal{C}$ by their corresponding points in $X$ and viewing $\mu$ as the pushforward measure on $\mathcal{C}$, we can write the barycenter map as an integral on $\mathcal{C}$:
    $$ T(\mu) = \int_\mathcal{C} \hat{\rho}_x\, d\mu(x) \equiv \hat{\mu} $$
    which is now viewed as a measure on the closed set $\mathcal{C} \subset \mathrm{extr}\,\mathcal{S}$ (recall that pure states are the extreme points of the convex quantum state space $\mathcal{S}$). Measures on $\mathcal{C}$ inherit the weak topology from measures on $X$, and vice versa, via the homeomorphism $\phi$. The barycenter map $T(\mu)$ is continuous from $\mathcal{P}_\mathrm{w}(\mathcal{C})$ to $\mathcal{S}$ with the trace norm \cite{Shirokov2007}, it is convex (affine), and its image is contained in the closed convex hull of $\mathcal{C}$, denoted $\overline{\mathrm{co}}(\mathcal{C})$.

    Proposition 4 from \cite{Shirokov2007} states that the quantum state space $\mathcal{S}$ is a \textit{$\mu$-compact set}, which means that  the barycenter map on $\mathcal{S}$ is a \textit{proper map} (the preimage of a compact set is compact). This property extends to closed subsets such as $\mathcal{C}$: the preimage of any compact subset of $\overline{\mathrm{co}}(\mathcal{C})$ is a compact subset of $\mathcal{P}_\mathrm{w}(\mathcal{C})$. The key results of Choquet theory, which usually apply only to compact sets, extend to $\mu$-compact sets. For instance, Proposition 2 of \cite{Shirokov2007} shows that the barycenter map $T(\mu)$ is surjective onto $\overline{\mathrm{co}}(\mathcal{C})$: every element of $\overline{\mathrm{co}}(\mathcal{C})$ is the barycenter of some measure on $\mathcal{C}$. Since the barycenter map is injective (by assumption that the kernel $k(x,y) = \mathrm{tr}(\hat{\rho}_x\hat{\rho}_y)$ is characteristic), these measures are unique. Thus we have a continuous bijection from measures on $\mathcal{C}$ to quantum states in the closed convex set $\overline{\mathrm{co}}(\mathcal{C})$.

    To show that the barycenter map has a continuous inverse, note that any continuous proper map $f: X \to Y$ between metrizable spaces is closed: it maps closed sets to closed sets. (If $U\subseteq X$ is closed but $f(U)$ is not, pick $y_n \in f(U)$ converging to $y \notin f(U)$. The set $\{y_n\} \cup \{y\}$ is compact, so by properness, $\{x_n : f(x_n)=y_n\}$ has a convergent subsequence $x_{n_k} \to x \in U$. Continuity gives $f(x) = y$, contradicting $y \notin f(U)$. Thus $f(U)$ must be closed.) Since any continuous closed bijection is a homeomorphism, the barycenter map is a homeomorphism between the metrizable spaces $\mathcal{P}_\mathrm{w}(\mathcal{C})$ and $\overline{\mathrm{co}}(\mathcal{C})$. Note that this construction works for \textit{any} characteristic kernel $k(x,y) = \mathrm{tr}(\hat{\rho}_x\hat{\rho}_y)$; homeomorphism follows from the topological properties of the barycenter map on $\mathcal{S}$.

    Finally, since the embedding map $\phi$ is a homeomorphism from $X$ onto its image $\mathcal{C}$, the spaces $\mathcal{P}_\mathrm{w}(\mathcal{C})$ and $\mathcal{P}_\mathrm{w}(X)$ are likewise homeomorphic. Thus the barycenter map establishes a homeomorphism between $\mathcal{P}_\mathrm{w}(X)$ and $\overline{\mathrm{co}}(\mathcal{C})$. Since this is a closed set in the complete metric space $\mathcal{B}_1(\mathcal{H})$, its metric (the trace distance) completely metrizes $\mathcal{P}_\mathrm{w}(X)$.
\end{proof}

As a characteristic kernel is essential for this construction, the following two lemmas highlight important features of such kernels:

\begin{lemma}\label{characteristic}
    For any normalized kernel $k(x,y) = \mathrm{tr}(\hat{\rho}_x\hat{\rho}_y)$, $k$ is characteristic if and only if it is integrally strictly positive definite, i.e.\@ for all finite complex measures $m \neq0$ we have:
        $$ \iint_{X \times X} k(x,y)\,d\overline{m}(x)\,dm(y) > 0$$
\end{lemma}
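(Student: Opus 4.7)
The plan is to identify the double integral as the squared Hilbert--Schmidt norm of a linear extension of the barycenter map, then translate both conditions into injectivity statements for this extension on two different classes of measures, and finally use the normalization $\mathrm{tr}(\hat{\rho}_x)=1$ to bridge them.

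Concretely, I would first extend $T$ linearly to the space of finite complex Borel measures by setting $T(m)=\int_X\hat{\rho}_x\,dm(x)$, again a Bochner integral in $\mathcal{B}_1(\mathcal{H})$ since $\|\hat{\rho}_x\|_1=1$ and the map $x\mapsto\hat{\rho}_x$ is continuous. Using $\hat{\rho}_x^\dagger=\hat{\rho}_x$ (so $T(m)^\dagger=T(\overline m)$) together with Fubini inside the Hilbert--Schmidt inner product gives
\[ \iint_{X\times X}k(x,y)\,d\overline m(x)\,dm(y)=\mathrm{tr}\bigl(T(m)^\dagger T(m)\bigr)=\|T(m)\|_2^2. \]
Thus the integral is automatically a nonnegative real, and integral strict positive definiteness is equivalent to the injectivity of $T$ on the full space of finite complex measures.

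The $(\Leftarrow)$ direction is then immediate: injectivity of $T$ on all finite complex measures restricts to injectivity on probability measures, which is precisely the characteristic property. For the $(\Rightarrow)$ direction I would assume $k$ is characteristic and suppose $T(m)=0$ for some finite complex $m$. First split $m=m_r+im_i$ into real and imaginary signed parts; since $T(m_r)$ and $T(m_i)$ are self-adjoint, $T(m_r)+iT(m_i)=0$ forces both to vanish, reducing to the real case. For a real signed $m$ with $T(m)=0$, apply the Jordan decomposition $m=m_+-m_-$, giving $T(m_+)=T(m_-)$ as an equality of positive operators. Taking the trace and invoking the normalization $\mathrm{tr}(\hat{\rho}_x)=1$ yields $m_+(X)=m_-(X)=:c$. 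If $c=0$ the positive measures $m_\pm$ are identically zero; if $c>0$, the normalized probability measures $\mu_\pm=m_\pm/c$ satisfy $T(\mu_+)=T(\mu_-)$, and the characteristic hypothesis forces $\mu_+=\mu_-$, hence $m=0$.

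I do not expect a significant obstacle here: the argument is essentially algebraic bookkeeping around the linear extension of $T$, the real/imaginary split, and the Jordan decomposition. The one place where a hypothesis of the lemma is genuinely used (beyond linearity) is the trace-matching step, which relies crucially on $\mathrm{tr}(\hat{\rho}_x)=1$---equivalently, on the normalization $k(x,x)=1$ that the statement builds in. Once the rewriting $\iint k\,d\overline m\,dm=\|T(m)\|_2^2$ is in hand, the remainder is short.
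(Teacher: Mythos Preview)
Your proposal is correct and follows essentially the same approach as the paper: both identify the double integral as $\|T(m)\|_2^2$, reduce integral strict positive definiteness to injectivity of the extended barycenter map, and for the $(\Rightarrow)$ direction use the Jordan decomposition together with the normalization $\mathrm{tr}(\hat\rho_x)=1$ to match total masses and normalize to probability measures. Your handling of the real/imaginary split via self-adjointness of $T(m_r)$ and $T(m_i)$ is actually a bit more explicit than the paper, which passes directly from $T(m)=0$ to $T(\mu_1-\mu_2)=0$ without spelling out why the real and imaginary operator parts separately vanish.
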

\begin{proof}
    First we show that being integrally s.p.d.\ is equivalent to having an injective barycenter map for all complex measures. Suppose $\iint k(x,y)\,d\overline{m}(x)\,dm(y) = 0$ for some nonzero $m$. Choose an arbitrary pair of distinct measures $m_1, m_2$ with $m_1-m_2 = m$ and calculate the Hilbert-Schmidt distance between the barycenter embeddings of these measures (Bochner integrals):
    \begin{align*}
        \mathrm{tr}\big(|T(m_1)-T(m_2)|^2\big) &= \mathrm{tr}\left(\left| \int_X   \hat{\rho}_x\, dm_1(x) - \!\!\int_X   \hat{\rho}_x\, dm_2(x)\right|^2\right) \\
          &= \mathrm{tr}\left(\left| \int_X   \hat{\rho}_x\, dm(x)\right|^2\right) \\
          &= \mathrm{tr}\left( \int_X   \hat{\rho}_x\, d\overline{m}(x) \int_X   \hat{\rho}_y\, dm(y)\right) \\
          &= \iint_{X\times X}  \mathrm{tr}(\hat{\rho}_x\hat{\rho}_y)\, d\overline{m}(x)\, dm(y) = 0 
    \end{align*}
    so the barycenter map is not injective for $m_1, m_2$. (The trace and integral can be interchanged using Fubini-Tonelli by boundedness of $|\mathrm{tr}(\hat{\rho}_x\hat{\rho}_y)|$ and finite total variation of $|m|$). Conversely, if the barycenter map is not injective for some pair $m_1, m_2$, then the same calculation shows that for $m = m_1-m_2$ we must have $\iint k(x,y)\,d\overline{m}(x)\,dm(y) = 0$, so the kernel is not integrally s.p.d.

    Now we prove the lemma. Since the barycenter map with an integrally s.p.d.\ kernel is injective for all complex measures, it is injective for probability measures. Thus  integrally s.p.d.\@ implies characteristic. Although in general the converse does not hold \cite{Muandet2017}, it does hold for kernels of the form $k(x,y) = \mathrm{tr}(\hat{\rho}_x\hat{\rho}_y)$. Suppose we have two distinct complex measures $m_1, m_2$. Since the states are normalized, $\langle x|x\rangle=\mathrm{tr}(\hat{\rho}_x)=1$, the trace of the barycenter embedding of any measure is equal to its total measure $m(X)$:
    $$ \mathrm{tr}(T(m)) = \int_X \mathrm{tr}(\hat{\rho}_x)\,dm(x) = \int_X \,dm(x) = m(X)$$
    So the embedded measures will be distinct (different traces) unless they have the same total measure, $m_1(X) = m_2(X)$. Note that this argument is not available in the usual RKHS context: the trace of the barycenter embedding map is equivalent to integration against the constant witness function $\mathbf{1}$, which may not belong to the RKHS. As pointed out in the main text, for many commonly-used kernels such as the Gaussian or Laplacian kernels, all RKHS functions vanish at infinity so constant functions are not included. Using the trace embedding, on the other hand, the identity operator maps to the constant function $\mathbf{1}$, which is always included as a dual function (witness function).
    
    If $m_1(X) = m_2(X)$, define $m = m_1-m_2$ so that $m(X)=0$. Assume that the barycenter map is not injective for $m_1, m_2$, i.e.\@ $T(m_1)-T(m_2)= T(m)=0$. The complex measure $m$ has Jordan decomposition $m = (\mu_1-\mu_2) + i(\mu_3-\mu_4)$, and since $m(X)=0$, these positive measures must have total measure $\mu_1(X)=\mu_2(X)$ and $\mu_3(X)=\mu_4(X)$. Unless they vanish identically, each pair can be normalized to yield probability measures for which the barycenter map is injective. Thus, $T(\mu_1)-T(\mu_2)= T(\mu_1 - \mu_2)=0$ forces $\mu_1=\mu_2$, and likewise for $\mu_3=\mu_4$, which gives $m = m_1-m_2 = 0$, contradicting the assumption that $m_1, m_2$ are distinct. So the barycenter map is indeed injective for all complex measures, which means that the kernel $k(x,y) = \mathrm{tr}(\hat{\rho}_x\hat{\rho}_y)$ is integrally s.p.d.
\end{proof}

The next lemma gives a simple construction for kernels of the desired form starting with any integrally s.p.d.\ kernel $k'(x,y) = \langle x|y\rangle$ on the underlying Hilbert space: simply use the norm-squared kernel $k = |k'|^2 = \langle y|x \rangle\langle x|y \rangle = \mathrm{tr}(\hat{\rho}_x\hat{\rho}_y)$. We show that the resulting kernel is integrally strictly positive definite:

\begin{lemma}\label{schur-product}
    If the kernel $k'(x,y) = \langle x|y\rangle$ is integrally strictly positive definite, then so is the norm-squared kernel $k = |k'|^2 = \langle y|x \rangle\langle x|y \rangle = \mathrm{tr}(\hat{\rho}_x\hat{\rho}_y)$.
\end{lemma}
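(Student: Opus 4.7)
The plan is to recast integral strict positive definiteness of each kernel as an injectivity statement for a Bochner-integral map, and then to deduce injectivity for $k$ from injectivity for $k'$. First, using Fubini (valid by boundedness of $|k'|$ and finite total variation of $m$) together with self-adjointness of the rank-one projectors, I would rewrite
$$ \iint_{X\times X} |k'(x,y)|^2\, d\overline{m}(x)\, dm(y) = \mathrm{tr}(\hat\mu^\dagger \hat\mu) = \|\hat\mu\|_2^2, \qquad \hat\mu := \int_X \hat\rho_x\, dm(x), $$
so the target conclusion is equivalent to injectivity of the barycenter map $m\mapsto\hat\mu$ on finite complex measures. An analogous rearrangement of the hypothesis shows that integral strict positive definiteness of $k'$ is equivalent to injectivity of the simpler vector-valued map $m\mapsto |\xi_m\rangle := \int_X |x\rangle\, dm(x)$ into $\mathcal H$. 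This step reduces the lemma to a purely operator-theoretic statement: if $m\mapsto |\xi_m\rangle$ is injective, so is $m\mapsto \hat\mu$.

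Next, I would argue by contradiction: assume $\hat\mu = 0$ for some nonzero finite complex measure $m$. Testing against an arbitrary $|\psi\rangle \in \mathcal H$ gives $\hat\mu\,|\psi\rangle = \int \langle x|\psi\rangle\,|x\rangle\,dm(x) = |\xi_{m_\psi}\rangle = 0$, where $dm_\psi := \langle x|\psi\rangle\,dm$ is again a finite complex measure (the weight $\langle x|\psi\rangle$ is bounded by continuity of the embedding $x\mapsto |x\rangle$). Applying the hypothesized injectivity on $k'$ to $m_\psi$ forces $m_\psi = 0$, and the Radon-Nikodym polar decomposition $m = h\,d|m|$ with $|h|=1$ $|m|$-a.e.\ translates the measure-level vanishing into the pointwise statement $\langle x|\psi\rangle = 0$ for $|m|$-a.e.\ $x$.

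To finish, I would pick a countable dense set $\{|\psi_n\rangle\}$ in the separable closed linear span of $\{|x\rangle : x \in X\}$, take the countable union of the resulting $|m|$-null sets, and conclude that $|x\rangle = 0$ for $|m|$-a.e.\ $x$. Then the Bochner integral $|\xi_m\rangle = \int |x\rangle\, dm(x)$ vanishes and, by injectivity of $m\mapsto |\xi_m\rangle$, $m = 0$, contradicting the standing assumption. The main obstacle I anticipate is this final density step: going from "vanishing against countably many test vectors $|m|$-a.e." to "the vector $|x\rangle$ itself vanishes $|m|$-a.e." is standard but relies crucially on separability, so care is needed to restrict attention to the (automatically separable) subspace generated by the embedding's image; a secondary subtlety is the Radon-Nikodym passage from $f\,dm = 0$ as a measure to $f = 0$ pointwise $|m|$-a.e., which is what lets the countable null sets be combined in the first place.
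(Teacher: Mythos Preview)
Your argument is correct, but it takes a more laborious route than the paper's. Both proofs begin the same way—recasting integral s.p.d.\ of $k'$ and $k$ as injectivity of the maps $m\mapsto|\xi_m\rangle=\int|x\rangle\,dm$ and $m\mapsto\hat\mu=\int\hat\rho_x\,dm$, respectively—and both test $\hat\mu=0$ against vectors $|\psi\rangle$ to produce weighted measures $dm_\psi=\langle x|\psi\rangle\,dm$ whose $k'$-embedding vanishes. The divergence is in how the contradiction is extracted. You test against \emph{every} $|\psi\rangle$, invoke injectivity to get $m_\psi=0$ for all $\psi$, and then run a density/separability argument to force $|x\rangle=0$ $|m|$-a.e. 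The paper instead selects a \emph{single} witness $|w\rangle$ chosen so that $\langle\xi_{m_1}|w\rangle\neq\langle\xi_{m_2}|w\rangle$ (equivalently, $m_w$ has nonzero total mass when $m=m_1-m_2$); then $\hat\mu|w\rangle=0$ gives $|\xi_{m_w}\rangle=0$, hence $m_w=0$ by injectivity, contradicting the nonzero total mass directly. This one-vector trick sidesteps your Radon--Nikodym and countable-density machinery entirely.

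Two minor comments on your version. First, your final sentence is redundant: once you have $|x\rangle=0$ for $|m|$-a.e.\ $x$, the fact that $k'(x,x)=\langle x|x\rangle>0$ (integral s.p.d.\ applied to $\delta_x$) means $|x\rangle\neq0$ everywhere, so $|m|=0$ and $m=0$ immediately—no need to route back through $|\xi_m\rangle$ and injectivity. Second, your separability concern is legitimate but harmless here: Bochner integrability of $x\mapsto\hat\rho_x$ already forces essentially separable range, so the countable-dense-set step goes through.
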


\begin{proof}
    On the dual $\mathcal{H}^*$ of the underlying Hilbert space $\mathcal{H}$, the barycenter map $B(m)$ for a (generally complex) measure $m$ is:
    $$B(m) = \int_X \langle x|\,dm \equiv \langle m|$$
    where $\langle m|$ represents the embedded dual vector corresponding to the measure $m$. Embedding as dual vector $\langle m|\in\mathcal{H}^*$ instead of as a vector $|m\rangle$ is conventional because it allows the expected value of a function $|f\rangle$ in the RKHS to be calculated as $\langle m|f\rangle$.
    
    Following the same argument in Lemma \ref{characteristic}, since $k'(x,y) = \langle x|y\rangle$ is integrally s.p.d., $B$ is injective \cite{Muandet2017}. Thus, for any two distinct measures $m_1,m_2$ there is a witness vector $|w\rangle$ such that $\langle m_1|w\rangle \neq \langle m_2|w\rangle$. Since $\langle x|w\rangle$ is a continuous function of $x$, define new complex measures by setting $dM_1 \equiv \langle x|w\rangle dm_1$ and $dM_2 \equiv \langle x|w\rangle dm_2$ and note that they have different total measures:
    $$ \int_X dM_1 = \int_X  \langle x|w\rangle dm_1\, \neq \int_X \langle x|w\rangle dm_2 \, =\int_X dM_2$$
    so $M_1 \neq M_2$ (and both total measures are finite).

    Now assume for contradiction that the norm-squared kernel $k =|k'|^2 = \mathrm{tr}(\hat{\rho}_x\hat{\rho}_y)$ is not integrally s.p.d. By Lemma \ref{characteristic}, this means that its barycenter map
    $$ T(m) = \int_X \hat{\rho}_x\,dm(x) = \int_X |x\rangle\langle x|\,dm(x)$$
    fails to be injective, i.e.\@ for some distinct measures $m_1,m_2$ we have $T(m_1)=T(m_2)$. Following the argument above and applying the operator $T(m_1) - T(m_2)$ to the witness vector $|w\rangle$ gives:
    \begin{align*}
        0 &= \int_X |x\rangle\langle x|w\rangle\,dm_1(x) - \int_X |x\rangle\langle x|w\rangle\,dm_2(x) \\
        &= \int_X |x\rangle\,dM_1(x) - \int_X |x\rangle\,dM_2(x)
    \end{align*}
    Taking the dual, we have:
    $$0 = \int_X \langle x|\,d\overline{M_1}(x) - \int_X \langle x|\,d\overline{M_2}(x) = B(\overline{M_1}) - B(\overline{M_2}) $$
    Since $B$ is injective, this requires $M_1 = M_2$, which contradicts our earlier conclusion that $M_1 \neq M_2$. Hence $k$ must be integrally strictly positive definite.
\end{proof}
    Note that although the norm-squared kernel inherits the property of being integrally s.p.d.\ from its base kernel, the same is not true for being characteristic. As a counterexample, consider a two-element space $X = \{a,b\}$ and define the kernel
    $$ k'(x,y) = 
    1,  \text{ if } x=y, \qquad
    k'(x,y) =-1, \text{ if } x \neq y.
    $$
    For a probability measure $P = (P_a,P_b)$ with $P_a + P_b = 1$, the embedding induced by $k'$ is:
    $$B(P)(x) = \sum_{y\in\{a,b\}}k'(x,y)P_y\;\;\mathrm{for}\;x \in \{a,b\}.$$
    In particular,
    $$B(P)(a) = k'(a,a)P_a + k'(a,b)P_b = P_a-P_b$$
    $$B(P)(b) = k'(b,a)P_a + k'(b,b)P_b = P_b-P_a$$
    Thus, the embedding can be written as $B(P) = (P_a-P_b,P_b-P_a)$. Since the difference $P_a - P_b$ uniquely characterizes the measure, this embedding is injective and hence $k'$ is characteristic.

    Now, consider the squared kernel $k(x,y)=|k'(x,y)|^2$ which has all entries equal to 1. Its embedding maps any probability measure $P = (P_a,P_b)$ to:
    $$B(P) = (P_a + P_b, P_b + P_a) = (1,1).$$
    Since all measures are mapped to the same point, the squared kernel is not characteristic. In this case, neither kernel is injective on the entire space of finite measures (as opposed to probability measures). For instance, the kernel $k'$ is not injective---it gives the same embedding for measures $(1,1)$ and $(2,2)$.

    Finally, note that Lemma \ref{schur-product} can be viewed as a continuous, integral generalization of the Schur product theorem. Recall that this theorem states that the entrywise Hadamard product of two (strictly) positive definite matrices is (strictly) positive definite. As a special case, if a matrix $A_{ij}$ is s.p.d.\@ ($\langle v|A|v\rangle > 0$ for all nonzero $v$), then its Hadamard product with its own conjugate, $B_{ij} = |A_{ij}|^2$, is also s.p.d. In our setting, integral strict positive definiteness corresponds to positivity under integration against nonzero complex measures, rather than positivity under summation. Thus, the claim presented here is a generalization of this special case of the Schur product theorem, replacing finite sums with integrals and discrete vectors with measures. Indeed, one can extend this proof to the general case of two integrally s.p.d.\@ kernels $k_1, k_2$ and show that their product $k_1k_2$ is also integrally s.p.d.

\subsubsection{Theorem \ref{universality}}\label{universality-proof}
\emph{Every Polish space has a QPM.}
\begin{proof}
    Given a Polish space $X$---a separable, completely metrizable space---we seek a map $\phi : X \to \mathcal{S} $ that maps points $x \in X$ to pure states $|x\rangle\langle x| \equiv \hat{\rho}_x$ on a Hilbert space. We require $\phi$ to be a closed embedding with a characteristic kernel $k(x,y) = \mathrm{tr}(\hat{\rho}_x\hat{\rho}_y)$.

    First, choose a metric $d$ that completely metrizes $X$. Embed $(X, d)$ isometrically as a closed subset of the Urysohn universal metric space, which is homeomorphic to a separable Hilbert space $\mathcal{H}$ \cite{Uspenskij2004}. This closed embedding maps  $x \in X$ to vectors $\mathbf{x} \in \mathcal{H}$.

    Following the construction in Lemma \ref{bargmann}, extended to infinite dimensions as in Section 10 of \cite{Hall2000}, build the Fock space $\mathcal{H}_F$ over $\mathcal{H}$. Each vector $\mathbf{x}$ in the original Hilbert space is mapped to a coherent state vector $|\mathbf{x}\rangle$ in $\mathcal{H}_F$, with inner product:
    $$ \langle\mathbf{x}|\mathbf{y}\rangle = e^{- \frac{1}{2}(|\mathbf{x}|^2 + |\mathbf{y}|^2 - 2 (\mathbf{x},\mathbf{y}) ) }$$
    where $(\mathbf{x},\mathbf{y})$ is the inner product in $\mathcal{H}$.

    Define the map $\phi$ in terms of these coherent states as:
    $$ \phi : x \to |\mathbf{x}\rangle\langle\mathbf{x}| \equiv \hat{\rho}_x \in \mathcal{S}$$
    where $\mathcal{S}$ is the quantum state space over $\mathcal{H}_F$; this gives a closed embedding of $X$ as pure states in $\mathcal{S}$. The kernel is:
    $$ k(x,y) = \mathrm{tr}(\hat{\rho}_x\hat{\rho}_y) = \langle\mathbf{x}|\mathbf{y}\rangle\langle\mathbf{y}|\mathbf{x}\rangle = e^{- |\mathbf{x} - \mathbf{y}|^2 }$$
    which is the Gaussian kernel for embedded vectors $\mathbf{x},\mathbf{y} \in \mathcal{H}$. The Gaussian kernel on a separable Hilbert space is characteristic (Theorem 3.1 of \cite{ziegel2022}). Thus the map $\phi$ has all of the properties required for Theorem \ref{completeness} (SI Appendix \ref{completeness-proof}): it defines a QPM that completely metrizes the Polish space $X$. If one already has a closed embedding of $X$ into a vector space (as is often the case in machine learning), the Gaussian kernel---or, indeed, any integrally s.p.d.\@ kernel---immediately gives a QPM for this space.
\end{proof}

\subsubsection{Theorem \ref{uniform-approx}}\label{uniform-approx-proof}
\emph{Dual functions for the Fock (coherent state) QPM are dense in $BUC(\mathbb{R}^n)$: they can \textbf{uniformly} approximate any bounded, uniformly continuous function on $\mathbb{R}^{n}$.}

This proof is somewhat long and technical, so we first offer a short overview. For reasons that will be clear later, we consider functions on $\mathbb{R}^{2n}$, represented as $F(\mathbf{x},\mathbf{y})$ for $\mathbf{x},\mathbf{y} \in \mathbb{R}^n$; any function on $\mathbb{R}^m$ can trivially be embedded in $\mathbb{R}^{2n}$ for $2n\geq m$. The proof proceeds as follows.
\begin{enumerate}
    \item We start by considering real functions $F(\mathbf{x},\mathbf{y})$ that are bounded on $\mathbb{R}^{2n}$ and band-limited (i.e.\@ distributional Fourier transform has compact support). Using a Paley-Wiener type result (Lemma \ref{paley-wiener} below), such a function extends to an entire function of exponential type on $\mathbb{C}^{2n}$, with exponential growth depending only on the imaginary parts of its arguments. 
    \item Using the Q-representation (eq.\@ \ref{Q-rep} in Lemma \ref{Q-unique}), this (real) entire function $F(\mathbf{x},\mathbf{y})$ determines the unique Fock-basis matrix elements of a Hermitian operator $\hat{F}$ on the Fock space $\mathcal{H}_F$, with $F(\mathbf{x},\mathbf{y})=Q_F(\mathbf{x}+i\mathbf{y})=\langle\mathbf{x}+i\mathbf{y}|\hat{F}|\mathbf{x}+i\mathbf{y}\rangle$ (these are coherent-state vectors for complex $\mathbf{z}=\mathbf{x}+i\mathbf{y}$, with  $\mathbf{x},\mathbf{y} \in \mathbb{R}^n$).
    \item Using the Segal-Bargmann-Fock correspondence (Lemma \ref{bargmann}), the operator $\hat{F}$ on $\mathcal{H}_F$ is isometrically isomorphic to an integral operator on $L^2(\mathbb{C}^n)$ whose integral kernel is bounded by the exponential growth bounds for $F$.
    \item Using the Schur test \cite{Grafakos2014}, this integral kernel defines a \textit{bounded} operator on $L^2(\mathbb{C}^n)$, which implies that the operator $\hat{F}$ is indeed a bounded, self-adjoint operator on the Fock space $\mathcal{H}_F$. Proving boundedness is the key step in this theorem.
    \item Since the bounded self-adjoint operators are dual to the self-adjoint trace-class operators, every bounded, band-limited function is a dual function, with $F(\mathbf{x},\mathbf{y})=\langle\mathbf{x}+i\mathbf{y}|\hat{F}|\mathbf{x}+i\mathbf{y}\rangle$.
    \item Finally, using the results of \cite{Dryanov2003}, these dual functions can uniformly approximate any bounded, uniformly continuous function on $\mathbb{R}^{2n}$.
\end{enumerate}

We begin with a technical lemma of Paley-Wiener type:

\begin{lemma}\label{paley-wiener}
    Any band-limited real function $f(\mathbf{x})$ with $|f(\mathbf{x})|\leq M$ on $\mathbb{R}^n$ extends to an entire function $f(\mathbf{z})$ on $\mathbb{C}^n$ with:
    $$ |f(\mathbf{z})| \leq M \exp(\tau_1|\Im z_1|+\tau_2|\Im z_2|+\ldots+\tau_n|\Im z_n|)$$
    for positive constants $\tau_j$.
\end{lemma}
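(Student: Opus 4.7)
The plan is to build the entire extension from the Fourier side and then upgrade the resulting a priori bound to the sharp exponential estimate using a Plancherel--Polya / Phragm\'en--Lindel\"of argument applied coordinate by coordinate. Since $|f|\le M$ on $\mathbb{R}^n$, $f$ defines a tempered distribution whose Fourier transform $\hat f$ is, by the band-limited hypothesis, compactly supported. Since any compact set fits inside a box, I may choose positive constants $\tau_j$ so that $\mathrm{supp}\,\hat f \subset \prod_{j=1}^n[-\tau_j,\tau_j]$; these are the $\tau_j$ that will appear in the claim.

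First, I would write down the holomorphic extension via the Paley--Wiener--Schwartz formula
$$f(\mathbf z)=(2\pi)^{-n}\bigl\langle \hat f(\boldsymbol\xi),\, e^{i\mathbf z\cdot\boldsymbol\xi}\bigr\rangle,\qquad \mathbf z\in\mathbb C^n,$$
which is well defined because $\hat f$ is a compactly supported distribution and $\boldsymbol\xi\mapsto e^{i\mathbf z\cdot\boldsymbol\xi}$ is smooth. Holomorphy in each $z_j$ follows by differentiating under the distributional pairing (equivalently, by Morera), and Fourier inversion identifies this extension with the original $f$ on the real subspace. The Paley--Wiener--Schwartz theorem then supplies the a priori estimate
$$|f(\mathbf z)|\le C(1+|\mathbf z|)^N \exp\!\Bigl(\sum_{j=1}^n \tau_j|\Im z_j|\Bigr)$$
for constants $C,N$ determined by the order of $\hat f$. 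In particular, $f$ is entire of exponential type at most $\tau_j$ in $z_j$ whenever the remaining coordinates are held real.

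Second, to replace the polynomial prefactor by the sharp constant $M$, I would iterate the one-variable Plancherel--Polya inequality: if $g$ is entire of exponential type $\tau$ on $\mathbb C$ with $|g(x)|\le K$ for every real $x$, then $|g(x+iy)|\le K e^{\tau|y|}$. Fixing $x_2,\dots,x_n\in\mathbb R$, the slice $z_1\mapsto f(z_1,x_2,\dots,x_n)$ is entire of exponential type $\tau_1$ and bounded on the real line by $M$, so Plancherel--Polya yields $|f(z_1,x_2,\dots,x_n)|\le Me^{\tau_1|\Im z_1|}$. Freezing this (now complex) $z_1$ and $x_3,\dots,x_n$, the slice $z_2\mapsto f(z_1,z_2,x_3,\dots,x_n)$ is entire of exponential type $\tau_2$ with real-axis bound $Me^{\tau_1|\Im z_1|}$, and a second application picks up $e^{\tau_2|\Im z_2|}$. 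Iterating through all $n$ coordinates produces the claimed product bound.

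The main obstacle is the one-variable Plancherel--Polya step itself, which is a Phragm\'en--Lindel\"of argument applied to $g(z)e^{i\tau z}$ on the upper half plane (and symmetrically below). Because exponential type saturates the Phragm\'en--Lindel\"of threshold on a full half plane, the standard route splits the half plane into two quadrants, where exponential type is strictly sub-critical, and inserts a vanishing auxiliary factor such as $\exp(-\epsilon(-iz)^{1/2})$ to legitimize passing to the limit. The only bookkeeping subtlety in the iteration is verifying at each stage that the partially complexified function is bounded by a constant on the next real slice, which is exactly what the previous stage delivers. An alternative route is to mollify $f$ in Fourier space by multiplying $\hat f$ by a smooth cutoff equal to $1$ on its support, apply the classical $L^1\cap L^\infty$ Paley--Wiener theorem directly to the mollified function, and pass to the limit as the cutoff expands.
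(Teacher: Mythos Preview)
Your proposal is correct and follows essentially the same route as the paper: obtain the entire extension and an a priori exponential-type bound from Paley--Wiener--Schwartz, then iterate the one-variable bound $|g(x+iy)|\le K e^{\tau|y|}$ coordinate by coordinate, carrying the constant from each stage into the real-axis bound of the next. The paper cites Proposition~1.1 of \cite{Dryanov2003} for the one-variable step where you invoke Plancherel--Polya/Phragm\'en--Lindel\"of directly, and it phrases the a priori bound as ``exponential type in each coordinate with the others fixed'' rather than via the sharper box-type PWS estimate, but the logical skeleton is identical.
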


\begin{proof}
    Begin with the one-dimensional case. By the Paley-Wiener-Schwartz theorem, any real function $f(x)$ that is band-limited (distributional Fourier transform supported in $[-\tau, \tau]$) extends to an entire function of exponential type $\tau$. Then, by Proposition 1.1 in \cite{Dryanov2003}, if in addition $|f(x)|\leq M$ on the real axis, then for all $z \in \mathbb{C}$ we have $|f(z)| \leq M e^{\tau|\Im z|}$. (We adopt the Fourier convention $ \widehat f(\xi)=\int e^{-i x\xi}f(x)\,dx$.)

    For the multivariate case, if a function $f(\mathbf{x})$ on $\mathbb{R}^n$ has a compactly-supported distributional Fourier transform, it extends to an entire function $f(\mathbf{z})$ of exponential type on $\mathbb{C}^n$, i.e., there are positive constants $A, B$ with $|f(\mathbf{z})| \leq Ae^{B|\mathbf{z}|}$. Such a function is of exponential type in each complex coordinate alone, treating the other coordinates as fixed. That is, if $f(\mathbf{z})$ is of exponential type, then for $z_k$ alone we have:
    $$|f(z_1, \ldots, z_k,\ldots,z_n)| \leq A e^{B\left(\sqrt{|z_1|^2 + \ldots + |z_k^2| + \ldots + |z_n|^2}\right)}$$
    $$\leq A e^{B\left(\sqrt{|z_1|^2 + \ldots +|z_{k-1}|^2 + |z_{k+1}|^2 + \ldots + |z_n|^2} + |z_k|\right)}$$
    $$\leq A'\exp(B|z_k|)$$
    for a new constant $A'$ that depends on all coordinates except $z_k$, and these coordinates are held fixed. Thus $f(z_1, \ldots, z_k,\ldots,z_n)$, treated as a single-variable function of $z_k$, is of exponential type in that variable alone.

    Now, a function $f(\mathbf{x})$ as described in the lemma extends to an entire function of exponential type $f(\mathbf{z})$. Write each complex coordinate as $z_j = x_j+iy_j$. Since $f$ is bounded on $\mathbb{R}^n$, we have:
    $$ |f(x_1, x_2, x_3, \ldots, x_n)| \leq M .$$
    Treating $f$ as a function of the first coordinate alone, with $x_2,\ldots,x_n$ fixed, we have a function of exponential type that is bounded by $M$ on the real axis (i.e.\@ for all $x_1 \in \mathbb{R}$). Applying the single-variable result above, and extending the first coordinate to $x_1+iy_1 \in \mathbb{C}$, we have, for some constant $\tau_1$,
    $$ |f(x_1 + iy_1, x_2, x_3, \ldots,x_n)| \leq M\exp(\tau_1|y_1|).$$
    
    Now treat $f$ as a function of the second coordinate, with $x_1 + iy_1$ and $x_3,\ldots,x_n$ fixed; we have a function of exponential type that is bounded for all $x_2 \in \mathbb{R}$ by $M\exp(\tau_1|y_1|)$. Extending the second coordinate to $\mathbb{C}$ and applying the single-variable result gives:
    $$ |f(x_1 + iy_1, x_2 + iy_2, x_3, \ldots,x_n)| \leq M\exp(\tau_1|y_1|+\tau_2|y_2|)$$
    for some constant $\tau_2$. Iterating in this fashion proves the lemma. With additional effort (not needed for this proof), one can show that the constants $\tau_j$ define the support of the distributional Fourier transform as the box $[-\tau_1,\tau_1] \times [-\tau_2,\tau_2]\times\ldots\times[-\tau_n,\tau_n]$.
\end{proof}

We need one more lemma that connects the analytic continuation of $F(\mathbf{x},\mathbf{y})$ on $\mathbb{C}^{2n}$ with the normalized off-diagonal coherent-state matrix elements $\langle\mathbf{w}|\hat{F}|\mathbf{z}\rangle$ of the corresponding operator $\hat{F}$:

\begin{lemma}\label{conjugate-off-diag}
    Given a real entire function $F$ of exponential type, with $F(\mathbf{x},\mathbf{y})=Q_F(\mathbf{x}+i\mathbf{y})=\langle\mathbf{x}+i\mathbf{y}|\hat{F}|\mathbf{x}+i\mathbf{y}\rangle$ for a Hermitian operator $\hat{F}$, the extension of $F$ to complex-valued $\mathbf{x},\mathbf{y}$ is given by:
    \begin{equation}\label{off-diag-extension}
        F(\mathbf{x},\mathbf{y}) = \frac{\langle\overline{\mathbf{x}}+i\overline{\mathbf{y}}|\hat{F}|\mathbf{x}+i\mathbf{y}\rangle}{\langle\overline{\mathbf{x}}+i\overline{\mathbf{y}}|\mathbf{x}+i\mathbf{y}\rangle}
    \end{equation}
    where $\overline{\mathbf{x}},\overline{\mathbf{y}}$ represent the complex conjugates of $\mathbf{x},\mathbf{y} \in \mathbb{C}^n$.
\end{lemma}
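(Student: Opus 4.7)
My plan is to recognize that both sides of the claimed identity are entire functions on $\mathbb{C}^{2n}$ that agree on the real subspace $\mathbb{R}^{2n}$; by the identity theorem for several complex variables, they must then coincide everywhere. The real content is the bookkeeping that identifies the right-hand side, interpreted formally, with a holomorphic extension of $F$.

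First, I would reinterpret the right-hand side using Lemma \ref{Q-unique}. That lemma showed that the ratio $\langle\mathbf{w}|\hat{F}|\mathbf{z}\rangle/\langle\mathbf{w}|\mathbf{z}\rangle$ is given by the absolutely convergent series in eq.\@ \ref{sesqui-series}, which defines a jointly entire function $\tilde{F}(\mathbf{u},\mathbf{z})$ on $\mathbb{C}^n \times \mathbb{C}^n$ of two \emph{independent} complex arguments (with $\mathbf{u}$ in the slot formerly labelled $\overline{\mathbf{w}}$). The $Q$-representation is its restriction to the antidiagonal locus $\mathbf{u}=\overline{\mathbf{z}}$, so that for real $\mathbf{x},\mathbf{y}\in\mathbb{R}^n$ and $\mathbf{z}=\mathbf{x}+i\mathbf{y}$ we have $F(\mathbf{x},\mathbf{y}) = Q_F(\mathbf{z}) = \tilde{F}(\mathbf{x}-i\mathbf{y},\,\mathbf{x}+i\mathbf{y})$.

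Next, I would promote $\mathbf{x},\mathbf{y}$ to complex vectors in $\mathbb{C}^n$ and define $G(\mathbf{x},\mathbf{y}) := \tilde{F}(\mathbf{x}-i\mathbf{y},\,\mathbf{x}+i\mathbf{y})$. Because $\tilde{F}$ is jointly entire and the coordinate change $(\mathbf{x},\mathbf{y})\mapsto(\mathbf{x}-i\mathbf{y},\mathbf{x}+i\mathbf{y})$ is $\mathbb{C}$-linear from $\mathbb{C}^{2n}$ to $\mathbb{C}^{2n}$, the composition $G$ is entire on $\mathbb{C}^{2n}$. By the surrounding context---specifically, the Paley-Wiener extension guaranteed by Lemma \ref{paley-wiener}---$F$ likewise extends to an entire function on $\mathbb{C}^{2n}$. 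Since $G$ and the entire extension of $F$ agree on the totally real subspace $\mathbb{R}^{2n}$, which is a uniqueness set for holomorphic functions of several complex variables, they coincide on all of $\mathbb{C}^{2n}$.

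Finally, to recover the displayed formula I would substitute $\mathbf{w} = \overline{\mathbf{x}}+i\overline{\mathbf{y}}$ into the identity $\tilde{F}(\overline{\mathbf{w}},\mathbf{z}) = \langle\mathbf{w}|\hat{F}|\mathbf{z}\rangle/\langle\mathbf{w}|\mathbf{z}\rangle$ from Lemma \ref{Q-unique}: componentwise conjugation on $\mathbb{C}^n$ gives $\overline{\mathbf{w}} = \mathbf{x}-i\mathbf{y}$, and with $\mathbf{z}=\mathbf{x}+i\mathbf{y}$ the right-hand side of the lemma reduces exactly to $G(\mathbf{x},\mathbf{y})$. The only genuine obstacle is interpretive rather than computational: once $\mathbf{x},\mathbf{y}$ are complex, the symbol $\langle\overline{\mathbf{x}}+i\overline{\mathbf{y}}|$ is no longer the Hermitian adjoint of an honest coherent state in $\mathcal{H}_F$, so the bra-ket notation must be read as shorthand for the (still absolutely convergent) power series of eq.\@ \ref{sesqui-series} with its two arguments decoupled. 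Once this polarization is made explicit, the verification is essentially direct substitution.
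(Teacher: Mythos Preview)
Your proposal is correct and follows essentially the same route as the paper: both arguments use the sesquiholomorphic series of Lemma~\ref{Q-unique} to define $\tilde F$, perform the $\mathbb{C}$-linear change of variables $(\overline{\mathbf w},\mathbf z)\leftrightarrow(\mathbf x,\mathbf y)$, and identify the result with the entire extension of $F$. The only stylistic difference is that the paper writes out the power series explicitly and matches coefficients (eq.~\ref{wz-to-xy}), whereas you invoke the identity theorem on the totally real set $\mathbb{R}^{2n}$---these are equivalent, and your caveat that the bra-ket must be read as shorthand for the decoupled series is exactly the interpretive point the paper also makes.
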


\begin{proof}
    Recall from eq.\@ \ref{Q-rep} of Lemma \ref{Q-unique} that the Q-representation of the operator $\hat{F}$ has the absolutely convergent power series:
    \begin{equation}\label{later-Q-def}
        Q_F(\mathbf{z}) = \langle\mathbf{z}|\hat{F}|\mathbf{z}\rangle = e^{-\overline{\mathbf{z}}\mathbf{z}}\sum_{\mathbf{n},\mathbf{m}=\mathbf{0}}^\mathbf{\infty}\frac{\langle\mathbf{n}|\hat{F}|\mathbf{m}\rangle}{\sqrt{\mathbf{n}!\mathbf{m}!}}\, \overline{\mathbf{z}}^\mathbf{n}\mathbf{z}^\mathbf{m}
    \end{equation}
    From the usual perspective of complex analyticity, this is a peculiar power series because it is entire for both $\mathbf{z}$ and $\overline{\mathbf{z}}$ \textit{independently}. In some ways, this expression looks more natural if we express it in terms of the independent real and imaginary parts of $\mathbf{z}=\mathbf{x}+i\mathbf{y}$:
    $$ Q_F(\mathbf{x}+i\mathbf{y}) = e^{-(\mathbf{x}^2+\mathbf{y}^2)}\!\sum_{\mathbf{n},\mathbf{m}=\mathbf{0}}^\mathbf{\infty}\frac{\langle\mathbf{n}|\hat{F}|\mathbf{m}\rangle}{\sqrt{\mathbf{n}!\mathbf{m}!}}(\mathbf{x}-i\mathbf{y})^\mathbf{n}(\mathbf{x}+i\mathbf{y})^\mathbf{m}$$
    (we notate $\mathbf{x}^2 = \mathbf{x}\cdot\mathbf{x} = x_1^2 + \ldots + x_n^2$, and same for $\mathbf{y}^2$; these are holomorphic squares not norm-squares.) Written in this form, it is evident that the right-hand side is a real analytic function of $\mathbf{x}, \mathbf{y}$ that extends to an entire function for \textit{complex} $\mathbf{x}, \mathbf{y}$. But the left-hand side cannot be extended to complex arguments, because the expression $\mathbf{x}+i\mathbf{y}$ for complex $\mathbf{x}, \mathbf{y}$ does not capture the $4n$ real degrees of freedom contained in the two complex vectors $\mathbf{x}, \mathbf{y} \in \mathbb{C}^n$.

    Consider instead the normalized off-diagonal matrix elements defined in eq.\@ \ref{tilde-func} of Lemma \ref{Q-unique} as:
    $$\tilde{F}(\overline{\mathbf{w}},\mathbf{z}) = \frac{\langle\mathbf{w}|\hat{F}|\mathbf{z}\rangle}{\langle\mathbf{w}|\mathbf{z}\rangle}$$
    where we define the invertible map $\mathbf{w} = \overline{\mathbf{x}}+i\overline{\mathbf{y}},\;\;\mathbf{z} = \mathbf{x}+i\mathbf{y}$. Using eq.\@ \ref{sesqui-series} from Lemma \ref{Q-unique}, we obtain (note $\overline{\mathbf{w}} = \mathbf{x}-i\mathbf{y}$; we use multi-index notation $\mathbf{n}!=n_1!n_2!\ldots n_n!$ and $\mathbf{z}^\mathbf{n}=z_1^{n_1}z_2^{n_2}\ldots z_n^{n_n}$):
    \begin{align}\label{wz-to-xy}
        \frac{\langle\mathbf{w}|\hat{F}|\mathbf{z}\rangle}{\langle\mathbf{w}|\mathbf{z}\rangle} 
        &= e^{-\overline{\mathbf{w}}\mathbf{z}}\sum_{\mathbf{n},\mathbf{m}\geq\mathbf{0}}\frac{\langle\mathbf{n}|\hat{F}|\mathbf{m}\rangle}{\sqrt{\mathbf{n}!\mathbf{m}!}}\, \overline{\mathbf{w}}^\mathbf{n}\mathbf{z}^\mathbf{m} \notag \\
        \frac{\langle\overline{\mathbf{x}}+i\overline{\mathbf{y}}|\hat{F}|\mathbf{x}+i\mathbf{y}\rangle}{\langle\overline{\mathbf{x}}+i\overline{\mathbf{y}}|\mathbf{x}+i\mathbf{y}\rangle} 
        &= e^{-(\mathbf{x}^2+\mathbf{y}^2)}\sum_{\mathbf{n},\mathbf{m}}\!\frac{\langle\mathbf{n}|\hat{F}|\mathbf{m}\rangle}{\sqrt{\mathbf{n}!\mathbf{m}!}} (\mathbf{x}\!-\!i\mathbf{y})^\mathbf{n}(\mathbf{x}\!+\!i\mathbf{y})^\mathbf{m} \notag \\
        &= F(\mathbf{x},\mathbf{y})\;\;\mathrm{for}\;\mathbf{x}, \mathbf{y} \in \mathbb{C}^n   \\
        &= Q_F(\mathbf{x}+i\mathbf{y})\;\;\mathrm{for}\;\mathbf{x}, \mathbf{y} \in \mathbb{R}^n \notag
    \end{align}
    Note that the values of the entire function $F(\mathbf{x},\mathbf{y})$ for $\mathbf{x}, \mathbf{y} \in \mathbb{C}^n$ are determined by its values on the real subspace. Likewise, the matrix elements $\langle\mathbf{w}|\hat{F}|\mathbf{z}\rangle$ for any pair of coherent states are determined by their values on the diagonal subspace $\mathbf{w}=\mathbf{z}$. Moreover, since the coherent states are a total set that span the Fock space $\mathcal{H}_F$, these matrix elements completely define the operator $\hat{F}$.
\end{proof}

We now prove the main theorem.

\begin{proof} (Theorem \ref{uniform-approx}, SI Appendix, \ref{uniform-approx-proof})
    First, we show that, for the Fock embedding, the dual functions include any function $F(\mathbf{x},\mathbf{y})$ that is bounded on $\mathbb{R}^{2n}$ and band-limited (i.e.\@ its distributional Fourier transform has compact support). We start with $n=1$ for simplicity.

    Suppose we are given a real, band-limited function that is bounded, $F(x,y)\leq M$ for $x,y \in \mathbb{R}$. We seek an operator $\hat{F}$ on the Fock space $\mathcal{H}_F$ such that, for $z=x+iy$, we have:
    $$ tr(\hat{\rho}_z\hat{F}) = \langle z|\hat{F}|z\rangle = \langle x+iy|\hat{F}|x+iy\rangle=F(x,y). $$
    If such an operator is bounded and self-adjoint, then it defines a dual function as desired. Using Lemma \ref{paley-wiener}, we know that $F$ is an entire function with an everywhere-convergent power series in $x,y$. The product of two entire functions is entire, so the product $F(x,y)e^{(x^2+y^2)}$ has an everywhere-convergent power series with unique coefficients $A_{nm}$:
    $$ F(x,y)e^{(x^2+y^2)}=\sum_{n,m\geq0}A_{nm}x^ny^m$$
    Defining $z=x+iy$ and substituting $x=(z+\overline{z})/2$, $y=(z-\overline{z})/(2i)$ we get:
    $$ F(x,y)e^{\overline{z}z}=\sum_{n,m\geq0}A_{nm}\left[\frac{z+\overline{z}}{2}\right]^n\left[\frac{z-\overline{z}}{2i}\right]^m$$
    By expanding the binomials, rearranging terms, and adding a convenient combinatorial factor $(n!m!)^{-1/2}$, we can rewrite the power series in terms of $z, \overline{z}$ with new coefficients $F_{nm}$:
    $$ F(x,y)e^{\overline{z}z}=\sum_{n,m\geq0}\frac{F_{nm}}{\sqrt{n!m!}}\overline{z}^nz^m$$
    If we treat $z, \overline{z}$ as independent variables, this expression involves the composition of entire functions, i.e.\@ $F[x(z, \overline{z}),y(z, \overline{z})]e^{\overline{z}z}$, so the resulting power series is entire in $z, \overline{z}$ independently. Solving for $F(x,y)$ and comparing with the power-series expansion of $Q_F(z)=\langle z|\hat{F}|z\rangle$ in eq.\@ \ref{later-Q-def} of Lemma \ref{conjugate-off-diag}, we see:
    $$ F(x,y) = e^{-\overline{z}z}\sum_{n,m\geq0}\frac{F_{nm}}{\sqrt{n!m!}}\overline{z}^nz^m = \langle z|\hat{F}|z\rangle$$
    if and only if the Fock-basis matrix elements of $\hat{F}$ are equal to the power-series coefficients, $\langle n|\hat{F}|m\rangle = F_{nm}$; since $F(x,y)$ is real, these coefficients must be Hermitian, $F_{nm}=\overline{F_{mn}}$. Thus, we \textit{define} the operator $\hat{F}$ to have these matrix elements. At this stage, it is not at all clear that they define a \textit{bounded} operator. Nonetheless, $\hat{F}$ with these specified Fock-basis matrix elements will satisfy:
    $$F(x,y) = \langle z|\hat{F}|z\rangle = \langle x+iy|\hat{F}|x+iy\rangle\;\;\text{for}\;x,y\in\mathbb{R}.$$

    We now extend $F(x,y)$ to complex arguments. From Lemma \ref{paley-wiener}, this extension is an entire function of exponential type, with growth bounded by the imaginary parts of its arguments:
    $$|F(x,y)| \leq M \exp(\tau_x|\Im x|+\tau_y|\Im y|)\;\;\text{for}\;x,y\in\mathbb{C}$$
    From eq.\@ \ref{off-diag-extension} of Lemma \ref{conjugate-off-diag}, this complex extension is given by the normalized off-diagonal coherent-state matrix elements of the same operator $\hat{F}$, defining $w=\overline{x}+i\overline{y}$ and $z=x+iy$:
    \begin{equation}\label{F-off-diag}
        F(x,y)=\frac{\langle\overline{x}+i\overline{y}|\hat{F}|x+iy\rangle}{\langle\overline{x}+i\overline{y}|x+iy\rangle} = \frac{\langle w|\hat{F}|z\rangle}{\langle w|z\rangle}\;\;\text{for}\;x,y\in\mathbb{C}
    \end{equation}
    It is essential here that the proof of Lemma \ref{conjugate-off-diag} does not depend on $\hat{F}$ being a bounded operator, only that its Fock-basis matrix elements match the power-series coefficients $F_{nm}$ as required.

    To prove boundedness, we use the Schur test \cite{Grafakos2014}, in the following form. Given a Hilbert space $L^2(X,\mu)$, we define a linear operator $T$ that acts on $f\in L^2$ via the integral kernel $K(x,y)$ as:
    $$ T[f](x)  = \int_X K(x,y)f(y)\,d\mu(y)$$
    The Schur test states that this operator will be bounded (from $L^2(X,\mu)$ to itself) if there exist two constants $A,B$ such that:
    \begin{align*}
        \int_X|K(x,y)|\,d\mu(y) &\leq A\;\;\text{for all}\;x\in X \\
        \int_X|K(x,y)|\,d\mu(x) &\leq B\;\;\text{for all}\;y\in X
    \end{align*}
    Furthermore, the operator norm will satisfy $\| T\|\leq \sqrt{AB}$.

    In what follows, we will often convert between two sets of complex variables, so it may be helpful to have a table:

    \begin{table}[h!]
    \centering
    \begin{tabular}{|c|c|c|}
        \hline
        \textbf{Purpose} & \textbf{Variable} & \textbf{Conversion formulas} \\
        \hline\hline
        Originally defined on $\mathbb{R}$, & $x$ & $x=(z+\overline{w})/2$ \\
        \cline{2-3}
        but extended to $\mathbb{C}$ & $y$ & $y=(z-\overline{w})/(2i)$ \\
        \hline\hline
        Originally defined on $\mathbb{C}$ & $z$ & $z=x+iy$ \\
        \cline{2-3}
         & $w$ & $w=\overline{x}+i\overline{y}$ \\
        \hline
        \end{tabular}
    \end{table}   

    Note that these definitions are consistent with functions that are holomorphic in $(x,y)$ and sesqui-holomorphic in $(w,z)$, i.e., holomorphic in $\overline{w}$ and $z$. In particular, we have $\overline{w}=x-iy$.

    To use the Schur test, we apply the unitary Segal-Bargmann-Fock correspondence (eq.\@ \ref{bargmann-inner-prod} of Lemma \ref{bargmann}). Consider the action of an operator $\hat{A}$ on a vector $|\psi\rangle$ in the Fock space $\mathcal{H}_F$. Each such vector corresponds to a complex function $\psi(z) = \langle z|\psi\rangle$ belonging to $L^2(\mathbb{C})$ with the following squared norm:
    $$ \langle\psi|\psi\rangle = \frac{1}{\pi}\int \overline{\psi(z)}\psi(z)\, d^2 z = \frac{1}{\pi}\int \langle\psi|z\rangle\langle z|\psi\rangle\, d^2 z ,$$
    where we recognize the resolution of the identity operator (in the weak operator topology, see eq.\@ \ref{resolution-ident} of Lemma \ref{bargmann}):
    $$ \hat{I} \simeq \frac{1}{\pi}\int |z\rangle\langle z|\, d^2 z $$
    
    The action of $\hat{A}$ on $|\psi\rangle$ is given by $\hat{A}|\psi\rangle$. Taking the inner product with the coherent-state vector $\langle w|$ and inserting the resolution of the identity, we get:
    $$ \langle w|\hat{A}|\psi\rangle = \frac{1}{\pi}\int\langle w|\hat{A}|z\rangle\langle z|\psi\rangle d^2z .$$
    On the left, we have a complex function of $w$ that corresponds to the Segal-Bargmann-Fock representation of the vector $\hat{A}|\psi\rangle$; let's label this function as $T_A[\psi](w)$. Inside the integral, we recognize that $\langle z|\psi\rangle$ is a complex function of $z$ that corresponds to the Segal-Bargmann-Fock representation of the vector $|\psi\rangle$; we already identified this function with $\psi(z)$. Furthermore, we identify the integral kernel $K_A(w,z)=\langle w|\hat{A}|z\rangle$. These substitutions give:
    $$ T_A[\psi](w) = \frac{1}{\pi}\int K_A(w,z)\psi(z) d^2z .$$
    This derivation shows that the coherent-state matrix elements $\langle w|\hat{A}|z\rangle$ define the values of the integral kernel $K_A(w,z)$ for the Segal-Bargmann-Fock representation of the operator $T_A$ corresponding to the linear operator $\hat{A}$ on the Fock space. 
    
    Thus, as expected, we have an isometric isomorphism between vectors $|\psi\rangle$ in the Fock space and functions $\psi(z)$ in $L^2(\mathbb{C})$. Likewise, we have an isometric isomorphism between linear operators $\hat{A}$ on the Fock space and operators $T_A$ defined by integral kernels $K_A(w,z)=\langle w|\hat{A}|z\rangle$ on $L^2(\mathbb{C})$. Moreover, from eq.\@ \ref{wz-to-xy} of Lemma \ref{conjugate-off-diag}, we know that: 
    $$K_A(w,z) = \langle w|\hat{A}|z\rangle = A(x,y)\langle w|z\rangle.$$ 
    
    With this isomorphism, we can apply the Schur test to show that the operator with integral kernel $K_F(w,z)=\langle w|\hat{F}|z\rangle$ is bounded on $L^2(\mathbb{C})$; this implies that the isometric operator $\hat{F}$ is bounded on the Fock space. The Schur test requires the magnitude of the kernel, $|K_F(w,z)|$, which we calculate as:
    \begin{align*}
        |K_F(w,z)| &= |\langle w|\hat{F}|z\rangle|  \\
         &= |F(x,y)\langle w|z\rangle|\qquad\text{(eq. \ref{F-off-diag})}  \\
         &= |F(x,y)||\langle w|z\rangle|  \\
         &= |F(x,y)|\sqrt{\langle w|z\rangle \langle z|w\rangle}  \\
         &= |F(x,y)|\sqrt{\exp(-|z-w|^2)}  \\
         &= |F(x,y)|\exp(-\frac{1}{2}|z-w|^2)
    \end{align*}
    Now note that $z-w = (x+iy)-(\overline{x}+i\overline{y}) = 2i(\Im x) - 2(\Im y)$, so $|z-w|^2 = 4(\Im x)^2 + 4(\Im y)^2$.  Putting it all together, we have:
    $$  |K_F(w,z)| = |F(x,y)|e^{-2(\Im x)^2-2(\Im y)^2} .$$

    To apply the Schur test, we first seek a constant $A$ that bounds the following integral $I(w)$ for all $w \in \mathbb{C}$:
    $$ I(w) \equiv \frac{1}{\pi}\int |K_F(w,z)|\,d^2z \leq A$$
    Since the complex Lebesgue measure $d^2z = d\Re z\,d\Im z$ is translation-invariant, and we are integrating over all of $\mathbb{C}$, for fixed $w$ we can shift the measure and write this integral as:
    $$ I(w) =\frac{1}{\pi}\int |K_F(w,z)|\,d^2(z-w) = \frac{4}{\pi}\int |K_F(w,z)|\,d\Im x\,d\Im y $$
    where we use $ z-w = 2i(\Im x)-2(\Im y)$, and the Jacobian gives $|\det J|=4$. Substitute our earlier expression for $|K_F(w,z)|$:
    $$ I(w) = \frac{4}{\pi}\int |F(x,y)|e^{-2(\Im x)^2-2(\Im y)^2}\,d\Im x\,d\Im y$$
    and apply Lemma \ref{paley-wiener}, $|F(x,y)| \leq M \exp(\tau_x|\Im x|+\tau_y|\Im y|)$, to obtain:
    $$ I(w) \leq \frac{4}{\pi}\int Me^{\tau_x|\Im x|+\tau_y|\Im y|}e^{-2(\Im x)^2-2(\Im y)^2}\,d\Im x\,d\Im y.$$
    for constants $\tau_x$ and $\tau_y$ that represent half of the Fourier bandwidth in each coordinate. We now separate variables and rewrite these integrals in standard form over two half-lines:
    $$ I(w) \leq \frac{4M}{\pi}\left[2\int_0^\infty e^{\tau_x r - 2r^2}\,dr\right]\left[2\int_0^\infty e^{\tau_y r - 2r^2}\,dr\right].$$
    These Gaussian integrals can be evaluated in closed form by completing the square to obtain:
    $$ 2\int_0^\infty e^{\tau_j r - 2r^2}\,dr = \sqrt{\frac{\pi}{2}}e^{\tau_j^2/8}\left[1+\text{erf}\left(\frac{\tau_j}{2\sqrt{2}}\right)\right]$$
    These calculations show convergence for any $\tau_j$, so we have achieved our goal of bounding the integral $I(w)$ by a constant $A$. Since $\mathrm{erf}(x)<1, \forall x$, we can simplify the final expression at the expense of a slightly looser bound by simply setting $\text{erf}(x)=1$:
    $$ I(w) \leq 8M e^{(\tau_x^2+\tau_y^2)/8}$$

    For the other part of the Schur test, we seek a constant B that bounds the following integral for all $z \in \mathbb{C}$:
    $$ \frac{1}{\pi}\int |K_F(w,z)|\,d^2w \leq B$$
    Since the kernel $K_F(w,z)=\langle w|\hat{F}|z\rangle$ is Hermitian, we have $|K_F(w,z)|=|K_F(z,w)|$ and thus the bound $B$ is the same as the earlier bound $A$; note the Schur test gives $\|T_F\| \leq \sqrt{AB}$.

    Putting this all together, we bound the operator norm of the integral operator $T_F$ with kernel $K_F(w,z)$ as:
    $$\|T_F\| \leq 8M e^{(\tau_x^2+\tau_y^2)/8}$$
    Since the Segal-Bargmann-Fock transform is isometric, this is also the operator norm of the operator $\hat{F}$ defined on the Fock space through the matrix elements in its power-series expansion. Thus $\hat{F}$ is a bounded, self-adjoint operator as desired, and the function $F(x,y)=\langle x+iy|\hat{F}|x+iy\rangle$ defines a dual function in the QPM.

    Extension to higher dimensions is straightforward. Given a real, bounded function $F(\mathbf{x},\mathbf{y})\leq M$ on $\mathbb{R}^{2n}$ that is band-limited, the entire argument follows by replacing the complex variables $x,y,w,z \in \mathbb{C}$ with vectors $\mathbf{x},\mathbf{y},\mathbf{w},\mathbf{z}\in \mathbb{C}^n$, and indices $n,m$ with multi-indices $\mathbf{n},\mathbf{m}$. From eq.\@ \ref{resolution-ident} of Lemma \ref{bargmann}, the resolution of the identity in $\mathbb{C}^n$ has a numerical factor of $1/\pi^n$. In calculating the bound for the Schur test on $L^2(\mathbb{C}^n)$, the Jacobian contributes a factor $4^n$, and the product of $2n$ Gaussian integrals gives a factor of $(2\pi)^n$ (using the looser bound that sets $\mathrm{erf}(x)=1$).  The final bound for the norm of $\hat{F}$, for  $F(\mathbf{x},\mathbf{y}) = \langle\mathbf{x}+i\mathbf{y}|\hat{F}|\mathbf{x}+i\mathbf{y}\rangle$ on $\mathbb{R}^{2n}$, is:
    $$ \|\hat{F}\| \leq 8^nM \exp\left[ \frac{1}{8} \sum_{j=1}^{2n}\tau_j^2\right]$$
    where we recall from Lemma \ref{paley-wiener} that the $\tau_j$'s define the  support of the Fourier transform as the box $[-\tau_1,\tau_1] \times [-\tau_2,\tau_2]\times\ldots\times[-\tau_{2n},\tau_{2n}]$.

    This is a loose bound: the constant function $F(\mathbf{x},\mathbf{y})= M$, with Fourier transform supported at the origin, is represented by a multiple of the identity operator, $\hat{F} = M\hat{I}$, which has operator norm $|M|$, not $8^n|M|$. Nonetheless, any bound is sufficient to prove that $\hat{F}$ is a bounded operator. And this bound may yield some intuition about the IPM witness functions for the Fock embedding; these functions must have operator representations with norm $\|\hat{F}\|\leq 1$. Using the Schur bound, we see that these witness functions can have arbitrary support on $\mathbb{R}^{2n}$, but their Fourier bandwidth may be limited as the operator norm could grow exponentially with the square of the bandwidth.

    We complete the proof by using Theorem 3.1 from \cite{Dryanov2003}: any bounded, uniformly continuous function on $\mathbb{R}$ can be uniformly approximated by a bounded real entire function of exponential type. The cited proof is a straightforward exercise in analytic interpolation, and extends readily to functions on $\mathbb{R}^n$. And the class of bounded real entire functions of exponential type are precisely the bounded, band-limited functions---which are all contained as dual functions in the Fock embedding QPM. Thus the dual functions are dense in $BUC(\mathbb{R}^n)$: they can uniformly approximate any bounded, uniformly continuous function on $\mathbb{R}^n$.
\end{proof}
    It is instructive to compare the Schur bound for the QPM, which uses the operator norm, with the bound for MMD using the Gaussian RKHS. Corollary \ref{gaussian-RKHS} shows that the RKHS norm is simply the Hilbert-Schmidt norm of the operator embedding in the Fock space $\mathcal{H}_F$. The space of Hilbert-Schmidt operators is much smaller than the space of bounded operators. Every Hilbert-Schmidt operator must be compact, so even the identity operator is not Hilbert-Schmidt. For the Fock space on $\mathbb{C}$, if we have a self-adjoint operator $\hat{f}$ that corresponds to a function $f(z)= \langle z|\hat{f}|z\rangle$, the squared Hilbert-Schmidt norm is given by:
    \begin{align*}
        \|\hat{f}\|_2^2&=\text{tr}(\hat{f}^\dagger\hat{f}) \\
        &= \frac{1}{\pi}\int\langle z|\hat{f}^\dagger\hat{f}|z\rangle\,d^2z \\
        &= \frac{1}{\pi^2}\iint\langle z|\hat{f}^\dagger|w\rangle\langle w|\hat{f}|z\rangle\,d^2w\,d^2z \\
        &= \frac{1}{\pi^2}\iint|K_f(w,z)|^2\,d^2w\,d^2z
    \end{align*}
    where the first line expresses the trace in terms of coherent states, the second inserts the resolution of the identity, and the third identifies the kernel $K_f(w,z)=\langle w|\hat{f}|z\rangle$. The operator $\hat{f}$ is Hilbert-Schmidt (and defines a function in the Gaussian RKHS) if and only if this kernel is square-integrable over all $w$ and $z$. In contrast, the Schur test requires only that the kernel be absolutely integrable over $z$ for each fixed $w$, and vice versa---a much weaker requirement. As a result, even on a compact space, the IPM witness functions for the QPM are a \emph{much} broader class of functions compared with the RKHS functions used in MMD. For instance, while all functions in the Gaussian RKHS must vanish at infinity, we have shown that all real, bounded, band-limited functions on $\mathbb{R}^n$ are dual functions in the QPM defined by the Fock embedding.

\subsection{Notational conventions}\label{notation}

The ``Dirac notation'' for linear algebra on a Hilbert space is quite useful for the mathematical manipulations that arise in quantum mechanics. Vectors in $\mathcal{H}$ are represented as \emph{kets}, notated $|\cdot\rangle$. The label inside the ket could represent an arbitrary vector $|\psi\rangle$, an index like $|3\rangle$ or $|n\rangle$, a coherent-state label $|z\rangle$ or $|\alpha\rangle$, or even an experimental outcome like $|\text{up}\rangle$ or $|\text{alive}\rangle$. For instance, instead of denoting an orthonormal basis as $e_n$ for indices $n\in\mathbb{N}$, one might simply write $|n\rangle$.

Dual vectors, the linear functionals in the dual space $\mathcal{H}^*$, are represented by \emph{bras}, notated $\langle\cdot|$. Every ket has a dual bra, and vice versa, so we can write $\langle\psi|$, $\langle 3|$, $\langle n|$, $\langle z|$, $\langle\alpha|$, $\langle\text{up}|$ or $\langle\text{alive}|$. The dual relationship between vectors and linear functionals is emphasized in this notation. When a vector (ket) is multiplied by a scalar, the dual of $\lambda|\psi\rangle$ is its Hermitian adjoint $\overline{\lambda}\langle\psi|$. Scalar multiplication with bras or kets is commutative: $\overline{\lambda}\langle\psi| = \langle\psi|\overline{\lambda}$.

The inner product between two vectors $(\phi,\psi)$ is written as the linear functional (bra) $\langle\phi|$ acting on the vector (ket) $|\psi\rangle$ giving the bracket (bra--ket) $\langle\phi|\psi\rangle$. The inner-product notation should be consistent with scalar multiplication of the kets, i.e.\@ $\langle\phi|\lambda|\psi\rangle = \lambda\langle\phi|\psi\rangle$ (since $\lambda$ multiplies the object to its right). This requires the inner product to be linear in the \emph{second} argument and anti-linear in the \emph{first}. That is, we would like the inner product notation to mirror the bra--ket notation:
$$(\phi,\alpha\psi+\beta\chi)=\langle\phi|\big(\alpha|\psi\rangle+\beta|\chi\rangle\big) =\alpha\langle\phi|\psi\rangle+\beta\langle\phi|\chi\rangle.$$ This is different from the usual math convention that places the dual object in the second slot, giving an inner product that is linear in the \emph{first} argument. If you have ever wondered why physicists adopt the opposite convention, it is precisely to retain consistency with Dirac notation in this fashion. In practice, writing inner products as bra--kets avoids any ambiguity about which slot is linear: scalars written next to any bra or ket simply follow the usual (commutative and associative) rules for scalar multiplication.

Linear operators are often written with ``hats'' like $\hat{A}$ or $\hat{n}$. They act on vectors to the right: $\hat{A}|\psi\rangle$ represents the action of $\hat{A}$ on the vector $|\psi\rangle$. Of course, $\hat{A}|\psi\rangle$ is also a vector, so one can act on it using a linear functional $\langle\phi|$ to give $\langle\phi|\hat{A}|\psi\rangle$. A virtue of this notation is that one can equally view this expression as the linear functional $\langle\phi|\hat{A}$ acting on the vector $|\psi\rangle$. The linear functional $\langle\phi|\hat{A}$ is the bra corresponding to the ket $\hat{A}^\dagger|\phi\rangle$, where $\hat{A}^\dagger$ represents the operator adjoint (Hermitian conjugate) of $\hat{A}$. Scalar multiplication is consistent with the action of scalar multiples of the identity operator, e.g.\@ $\langle\phi|\lambda|\psi\rangle = \langle\phi|(\lambda\hat{I})|\psi\rangle = \lambda\langle\phi|\psi\rangle$.

Dirac notation is especially useful for representing rank-1 projection operators, which represent pure states in quantum mechanics. Given a normalized vector $|\psi\rangle$, the projection operator $\hat{P}_\psi$ onto its one-dimensional subspace is simply $|\psi\rangle\langle\psi|$. This dyadic expression can be inserted anywhere you would insert the projector $\hat{P}_\psi$: for instance, the projection of a vector $|\phi\rangle$ onto this subspace is simply $\hat{P}_\psi|\phi\rangle=|\psi\rangle\langle\psi|\phi\rangle$. This notation eliminates any confusion about whether the projection of a vector $v$ along $w$ is given by $(v,w)w$ or $(w,v)w$.

This notation also shines in representing operators as sums over these dyads. For instance, given an orthonormal basis $|n\rangle$ for a separable Hilbert space, the identity operator can be written as:
$$\hat{I}=\sum_{n\in\mathbb{N}}|n\rangle\langle n|.$$
Likewise, a compact self-adjoint operator $\hat{T}$ with eigenvectors $|\phi_n\rangle$ and real eigenvalues $\lambda_n$ can be written in spectral form as:
$$\hat{T}=\sum_{n\in\mathbb{N}}\lambda_n|\phi_n\rangle\langle \phi_n|.$$
These dyadic sums yield a well-defined operator $\hat{T}$ as long as they converge between any pair of states $\langle\phi|\hat{T}|\psi\rangle$ (that is, in the weak operator topology) and are bounded as a sesquilinear form, i.e.\@ $\langle\phi|\hat{T}|\psi\rangle \leq M \|\phi\| \|\psi\|$ for all $|\phi\rangle$, $|\psi\rangle$, and some finite $M$.

Given a trace-class operator $\hat{S}$, the trace can be written using Dirac notation as:
$$\mathrm{tr}(\hat{S})=\sum_{n\in\mathbb{N}}\langle n|\hat{S}|n\rangle$$
for any orthonormal basis $|n\rangle$. The trace of a dyad $|\phi\rangle\langle\psi|$ is simply $\mathrm{tr}(|\phi\rangle\langle\psi|)=\langle\psi|\phi\rangle$, as can be derived using Dirac notation:
\begin{align*}
    \mathrm{tr}(|\phi\rangle\langle\psi|) &= \sum_{n\in\mathbb{N}}\langle n|\phi\rangle\langle\psi|n\rangle \\
    &= \sum_{n\in\mathbb{N}}\langle\psi|n\rangle\langle n|\phi\rangle  = \langle\psi|\hat{I}|\phi\rangle = \langle\psi|\phi\rangle
\end{align*}
where we recognize the expansion of the identity operator as a sum over orthonormal projectors; this is a lovely example of the utility of Dirac notation. We use this trace identity extensively. For instance, in defining the dual functions for a QPM, given an embedding that maps $x$ to the projector $\hat{\rho}_x=|x\rangle\langle x|$, and a bounded operator $\hat{F}$, the dual function is $F(x)=\mathrm{tr}(\hat{\rho}_x\hat{F})=\mathrm{tr}(|x\rangle\langle x|\hat{F})=\langle x|\hat{F}|x\rangle$. In quantum mechanics, we would describe this as the expected value of the operator (observable) $\hat{F}$ in the pure state $|x\rangle$.

Indeed, the QPM embedding is at its heart an embedding of probability measures as expected values of rank-1 projection operators. For instance, the standard Fock embedding over $\mathbb{C}$ using pure coherent states $\hat{\rho}_z=|z\rangle\langle z|$ is given by the barycenter map:
$$ T(\mu) = \int_\mathbb{C} \hat{\rho}_z\, d\mu(z)=\int_\mathbb{C} |z\rangle\langle z|\, d\mu(z) \equiv \hat{\mu} $$
which is well-defined as a strong (Bochner) integral on the Banach space $\mathcal{B}_1(\mathcal{H})$ of trace-class operators since the map $z\to \hat{\rho}_z$ is continuous from $\mathbb{C}$ to the separable space $\mathcal{B}_1(\mathcal{H})$, and the trace norm of the integrand is $\|\hat{\rho}_z\|_1=1$.

If we have a dual function $F(z)$ represented by a bounded operator $\hat{F}$, the expected value of $F(z)=\langle z|\hat{F}|z\rangle$ is given by:
$$\mathrm{tr}(\hat{F}\hat{\mu}) = \mathrm{tr}\left[\hat{F} \left(\int_\mathbb{C} |z\rangle\langle z|\, d\mu(z)\right)\right]$$
Let's work this out carefully as an exercise in Dirac notation. The Banach dual of $\mathcal{B}_1(\mathcal{H})$ is the space of bounded operators, $\mathcal{B}(\mathcal{H})$. The dual pairing is given by the trace map, $(\hat{F},\hat{\mu}) =\mathrm{tr}(\hat{F}\hat{\mu})$. Since the definition of $\hat{\mu}$ is given by a Bochner integral, it is also valid as a Pettis (weak) integral, which means that the dual pairing of $\hat{\mu} \in \mathcal{B}_1(\mathcal{H})$ with $\hat{F} \in \mathcal{B}(\mathcal{H})$ via the trace map can be brought inside the integral, i.e.:
\begin{align*}
    \mathrm{tr}\left[\hat{F} \left(\int_\mathbb{C} |z\rangle\langle z|\, d\mu(z)\right)\right]
    &= \int_\mathbb{C} \mathrm{tr}(\hat{F}|z\rangle\langle z|)\, d\mu(z)  \\
    &= \int_\mathbb{C} \langle z|\hat{F}|z\rangle\, d\mu(z) \\
    &= \int_\mathbb{C} F(z)\, d\mu(z) = \mathbb{E}_{z\sim\mu}\left[F(z)\right]
\end{align*}

Finally, just as the expansion of the identity operator in terms of an orthonormal basis shows the strength of Dirac notation, we use a similar expansion for the identity operator in terms of coherent states on $\mathbb{C}^n$ (eq.\@ \ref{resolution-ident} in Lemma \ref{bargmann}):
$$ \hat{I} \simeq \frac{1}{\pi^n}\int |\mathbf{z}\rangle\langle\mathbf{z}|\, d^{2n}\mathbf{z} $$
Here we are careful not to write this as an operator equality; it holds only in the sense of the weak operator topology when ``sandwiched'' between any pair of states $\langle\phi|\cdot|\psi\rangle$, i.e.:
$$ \langle\phi|\hat{I}|\psi\rangle = \langle\phi|\psi\rangle = \frac{1}{\pi^n}\int \langle\phi|\mathbf{z}\rangle\langle\mathbf{z}|\psi\rangle\, d^{2n}\mathbf{z} $$
by virtue of the Segal-Bargmann-Fock correspondence. This is in contrast to the infinite sums found in operator expansions in terms of rank-1 dyads, which represent a convergent series of operators in various topologies (including the weak operator topology). Here the integral representation of the identity in terms of coherent states does \emph{not} converge as an operator-valued Dunford or Pettis integral. Thus it is only well-defined when inserted between a pair of states---and this is precisely how we use it in our proofs.

\subsection{Replacing MMD with a QPM}\label{mmd2qpm}
Typical applications of MMD involve choosing $n$ i.i.d.\@ samples from a probability measure $\mu$, and $m$ samples from another measure $\nu$, and calculating the MMD between the resulting empirical measures. More generally, let
$$ \mathbb P=\sum_{k=1}^{n} p_k\delta_{x_k}, \qquad
  \mathbb Q=\sum_{\ell=1}^{m} q_\ell\delta_{y_\ell},$$
be probability measures with finite support $\{x_k\}_{k=1}^{n}$ and $\{y_\ell\}_{\ell=1}^{m}$. We wish to calculate the MMD between these discrete measures.

Let $\phi:X\to\mathcal H$ be the feature map of a characteristic kernel $k'$, written $\phi(x)=|x\rangle$ with inner product $\langle x|y\rangle = k'(x,y)$. We assume without loss of generality that the kernel is normalized, $\langle x|x\rangle = k'(x,x)=1$; if not, multiply $k'(x,y)$ by $(k'(x,x)k'(y,y))^{-1/2}$. Embed the measures by their barycenters,
$$ T(\mathbb{P}) = |\mathbb{P}\rangle=\sum_{k=1}^n p_k|x_k\rangle,\qquad T(\mathbb{Q}) = |\mathbb{Q}\rangle=\sum_{\ell=1}^m q_\ell|y_\ell\rangle$$
so MMD is the Hilbert space distance between these vectors:
$$ \text{MMD}[\mathcal{H}](\mathbb{P},\mathbb{Q})=\big\| |\mathbb{P}\rangle - |\mathbb{Q}\rangle \big\|_\mathcal{H}$$
(If $(\mathbb{P}, \mathbb{Q})$ are sampled i.i.d.\@ from $(\mu, \nu)$, this is a biased estimate of $\text{MMD}(\mu,\nu)$, i.e. V-statistic.) Writing the difference vector $|D\rangle=|\mathbb{P}\rangle - |\mathbb{Q}\rangle$ as:
\begin{equation}\label{diff-vector}
       |D\rangle = \sum_{i=1}^{N} c_i |x_i\rangle
\end{equation}
where we define $N = n+m$ and:
\begin{align}\label{diff-coeff-indices}
    c_i &= \begin{cases}
    p_i & 1\leq i\leq n,\\
    -q_{(i-n)} & n+1\leq i\leq N \end{cases} \\
    x_i &= \begin{cases}\label{diff-vector-indices}
    x_i & 1\leq i\leq n,\\
    y_{(i-n)} & n+1\leq i\leq N, \end{cases} 
\end{align}
Now MMD is the norm of this difference vector, which we can expand using Dirac notation:
\begin{align*}
    \text{MMD}[\mathcal{H}](\mathbb{P},\mathbb{Q}) &= \big\| |D\rangle \big\|_\mathcal{H} 
    = \sqrt{\langle D|D\rangle}  \\
    &= \left[\left(\sum_{i=1}^{N}c_i\langle x_i|\right)\left(\sum_{j=1}^{N}c_j|x_j\rangle\right)\right]^{1/2}  \\
    &= \left[\sum_{i,j=1}^N c_i c_j\langle x_i|x_j\rangle\right]^{1/2}  \\
    &= \left[\sum_{i,j=1}^N c_i c_j k'(x_i,x_j)\right]^{1/2}
\end{align*}
Calculating this MMD requires evaluating the kernel between every pair of points in the joint support of the measures $\mathbb{P}$, $\mathbb{Q}$, for $O(N^2)$ kernel evaluations. The matrix $G_{ij}=\langle x_i|x_j\rangle$ is simply the Gram matrix for those embedded vectors.

Now suppose that we want the norm-squared kernel instead, i.e., we seek the MMD for the kernel $k(x,y) = |k'(x,y)|^2$. With a different kernel, we obtain a different RKHS. As discussed in the proof of Theorem \ref{isomorphism}, SI Appendix, \ref{isomorphism-proof}, the RKHS for the norm-squared kernel is the Hilbert-Schmidt space $\mathcal{B}_2(\mathcal{H})$, where we define the inner product between two Hilbert-Schmidt operators as $(\hat{A},\hat{B})=\mathrm{tr}(\hat{A}^\dagger\hat{B})$. In this space, the kernel evaluates to: 
$$k(x,y) = |k'(x,y)|^2  = (\hat{\rho}_x,\hat{\rho}_y)=\mathrm{tr}(\hat{\rho}_x\hat{\rho}_y)=\langle y|x \rangle\langle x|y \rangle,$$
with the self-adjoint pure states (rank-1 projection operators) defined as $\hat{\rho}_x=|x\rangle\langle x|$ and $\hat{\rho}_y=|y\rangle\langle y|$.

In this space, instead of embedding probability measures as barycenters of embedded vectors in $\mathcal{H}$, we embed as barycenters of embedded operators (pure states) in $\mathcal{B}_2(\mathcal{H})$:
$$ T(\mathbb{P}) = \hat{\mathbb{P}}=\sum_{k=1}^n p_k|x_k\rangle\langle x_k|\,,\qquad T(\mathbb{Q}) = \hat{\mathbb{Q}}=\sum_{\ell=1}^m q_\ell|y_\ell\rangle\langle y_\ell|\,,$$
and define the MMD using the Hilbert-Schmidt norm:
$$ \text{MMD}[\mathcal{B}_2(\mathcal{H})](\mathbb{P},\mathbb{Q})=\| \hat{\mathbb{P}} - \hat{\mathbb{Q}} \|_2\, .$$
Following eq.\@ \ref{diff-vector}, we define the difference operator $\hat{D}$ as:
\begin{equation}\label{diff-op}
    \hat{D}=\hat{\mathbb{P}} - \hat{\mathbb{Q}}=\sum_{i=1}^{N} c_i |x_i\rangle\langle x_i|
\end{equation} 
using the same notation as eq.\@ \ref{diff-coeff-indices}-\ref{diff-vector-indices}; note $\mathrm{tr}(\hat{D}) = \sum_i c_i = 0$. An MMD calculation using the Hilbert-Schmidt inner product yields:
\begin{align*}
    \text{MMD}[\mathcal{B}_2(\mathcal{H})](\mathbb{P},\mathbb{Q}) &= \| \hat{D}\|_2  = \sqrt{\mathrm{tr}(\hat{D}^\dagger\hat{D})}  \\
    &= \left[\mathrm{tr}\left(\sum_{i,j=1}^{N}\Big[c_i|x_i\rangle\langle x_i|\Big]\Big[c_j|x_j\rangle\langle x_j|\Big]\right)\right]^{1/2}  \\
    &= \left[\sum_{i,j=1}^N c_i c_j\mathrm{tr}\big(|x_i\rangle\langle x_i|x_j\rangle\langle x_j|\big)\right]^{1/2}  \\
    &= \left[\sum_{i,j=1}^N c_i c_j\langle x_j|x_i\rangle\langle x_i|x_j\rangle\right]^{1/2}  \\
    &= \left[\sum_{i,j=1}^N c_i c_j |k'(x_i,x_j)|^2\right]^{1/2}   \\
    &= \left[\sum_{i,j=1}^N c_i c_j k(x_i,x_j)\right]^{1/2} 
\end{align*}
which is identical to the MMD calculated using the norm-squared kernel $k(x,y) = |k'(x,y)|^2$.
Equivalently, if the eigenvalues of the self-adjoint operator $\hat{D}$ are given by $\lambda_i$, the MMD is given by:
$$ \text{MMD}[\mathcal{B}_2(\mathcal{H})](\mathbb{P},\mathbb{Q}) = \| \hat{D}\|_2 = \left[\sum_{i=1}^N \lambda_i^2 \right]^{1/2} $$
We calculate the QPM using the same embedding and the same difference operator $\hat{D}$ (eq.\@ \ref{diff-op}) but with trace distance instead of Hilbert-Schmidt distance. With the conventional normalization factor of $1/2$ (giving distances in [0,1]), we have:
\begin{align*}
    \text{QPM}(\mathbb{P},\mathbb{Q}) &= \tfrac{1}{2}\| \hat{\mathbb{P}} - \hat{\mathbb{Q}} \|_1 = \tfrac{1}{2}\|\hat{D}\|_1 \\
    &= \tfrac{1}{2}\mathrm{tr}(|\hat{D}|)  \\
    &= \tfrac{1}{2}\mathrm{tr}\left[(\hat{D}^2)^{1/2}\right] 
\end{align*}
where $|\hat{D}|\equiv (\hat{D}^2)^{1/2}$ represents the (operator) square root of the nonnegative operator $\hat{D}^2$. Here we see the central computational challenge in using QPM. Unlike MMD, which can be calculated using $O(N^2)$ kernel evaluations, the operator square root required for QPM needs eigenvalues of the operator $\hat{D}$, with typical complexity $O(N^3)$ for an $N\times N$ matrix. Recall that the eigenvalues of $|\hat{D}|$ are the absolute values of the eigenvalues of $\hat{D}$, and the trace of an operator is the sum of its eigenvalues. Thus, if the eigenvalues of $\hat{D}$ are given by $\lambda_i$, the QPM is:
$$ \text{QPM}(\mathbb{P},\mathbb{Q}) = \tfrac{1}{2}\| \hat{D}\|_1 = \tfrac{1}{2}\sum_{i=1}^N |\lambda_i| . $$

Now we need the eigenvalues of the difference operator $\hat{D}=\sum_{i=1}^{N} c_i |x_i\rangle\langle x_i|$. This is not directly amenable to matrix calculation, as the embedded vectors $|x_i\rangle$ do not define an orthonormal basis, and the only quantities we can easily calculate are kernel evaluations $k'(x,y)=\langle x|y\rangle$. However, we can use a bit of linear algebra to help. Choose a set of $N$ orthonormal basis vectors $|e_i\rangle \in \mathcal{H}$, whose linear span includes the span of the $|x_i\rangle$, and define the operators $\hat{V}= \sum_{i=1}^N |x_i\rangle\langle e_i|$ and $\hat{C}=\sum_{j=1}^N c_j|e_j\rangle\langle e_j|$ so that the difference operator can be expressed as:
\begin{align*}
    \hat{D}=\hat{V}\hat{C}\hat{V}^\dagger&=\left[\sum_{i=1}^N |x_i\rangle\langle e_i|\right]\left[\sum_{j=1}^N c_j|e_j\rangle\langle e_j|\right]\left[\sum_{k=1}^N |e_k\rangle\langle x_k|\right]  \\
    &= \sum_{i=1}^N c_i|x_i\rangle\langle x_i| = \hat{D}
\end{align*}
using $\langle e_j|e_i\rangle=\delta_{ij}$. As operators on $\mathrm{span}\{|e_i\rangle\}$, their eigenvalues are preserved under cyclic permutations, so the eigenvalues of $\hat{D}=\hat{V}\hat{C}\hat{V}^\dagger$ are the same as those of $\hat{C}\hat{V}^\dagger\hat{V}$. Now observe that the operator $\hat{G}=\hat{V}^\dagger\hat{V}$ defines the Gram matrix in the $|e_i\rangle$ basis:
$$ \hat{G}=\hat{V}^\dagger\hat{V}=\left[\sum_{i=1}^N |e_i\rangle\langle x_i|\right]\left[\sum_{j=1}^N |x_j\rangle\langle e_j|\right]=\sum_{i,j=1}^N|e_i\rangle\langle x_i|x_j\rangle\langle e_j|$$
i.e.\@ $G_{ij}=\langle e_i|\hat{G}|e_j\rangle=\langle x_i|x_j\rangle$. So we can equivalently find the eigenvalues of the operator $\hat{C}\hat{G}$, whose matrix elements are given by the matrix product $M=CG$, i.e.\@ $M_{ij}=c_iG_{ij}=c_i\langle x_i|x_j\rangle$. Although this is a non-Hermitian matrix, its eigenvalues will be real since they are identical to those of the Hermitian operator $\hat{D}$. They can be calculated using a general eigenvalue solver; any imaginary parts that arise due to rounding errors should be discarded to preserve the essential relationships $\mathrm{tr}(\hat{D})=\sum_i \lambda_i =\sum_i c_i = 0$.

Nonetheless, calculating eigenvalues of non-Hermitian matrices can be computationally costly and numerically unstable. For better numerics, we factor the Gram matrix as $G=HH^\dagger$, e.g.\@ using Cholesky factorization or the matrix square root. Using invariance under cyclic permutation, the eigenvalues of $CG= CHH^\dagger$ are the same as those of $H^\dagger CH$, which is a Hermitian matrix with numerically stable (real) eigenvalues. These are the eigenvalues we want---they are the eigenvalues of the difference operator $\hat{D}$.

We now have a simple drop-in ``recipe'' for replacing MMD with QPM in any calculation that uses finite discrete measures (e.g.\@ empirical measures). MMD already requires the Gram matrix $G_{ij}=\langle x_i|x_j\rangle=k'(x_i,x_j)$. Instead of calculating MMD using the quadratic form $c_i c_j G_{ij}$, factor the Gram matrix into $G=HH^\dagger$, define the coefficient matrix $C=\mathrm{diag}(c_i)$, and find the eigenvalues $\lambda_i$ of the Hermitian matrix $H^\dagger CH$. The QPM will be $\tfrac{1}{2}\sum_{i=1}^N|\lambda_i|$.

There is one subtle point, however. The original MMD used the kernel $k'(x,y)=\langle x|y\rangle$, while the QPM calculation effectively uses the norm-squared kernel $k(x,y)=|k'(x,y)|^2=\langle y|x\rangle\langle x|y\rangle$. The difference is clear when we consider the distance between two Dirac measures (point masses), $\mathbb{P}=\delta_x$ and $\mathbb{Q}=\delta_y$, embedded as vectors in a Hilbert space $\mathcal{H}$: $\delta_x \to |x\rangle$ and $\delta_y \to |y\rangle$. Assuming (as we have throughout) that all embedded vectors are normalized, $\langle x|x\rangle = \langle y|y\rangle = 1$, the MMD in $\mathcal{H}$ is given by:
\begin{align*}
    \text{MMD}[\mathcal{H}](\delta_x,\delta_y) &= \big\| |x\rangle-|y\rangle \big\|_\mathcal{H}  \\
    &= \bigg[\big(\langle x| - \langle y|\big)\big(|x\rangle - |y\rangle\big)\bigg]^{1/2}  \\
    &= \big[ 2 - \langle x|y\rangle - \langle y|x\rangle \big]^{1/2}  \\
    &= \big[ 2 - k'(x,y) - k'(y,x) \big]^{1/2}
\end{align*}
In contrast, we obtain the norm-squared kernel if we embed the Dirac measures as operators in the Hilbert-Schmidt space $\mathcal{B}_2(\mathcal{H})$: $\delta_x \to \hat{\delta}_x = |x\rangle\langle x|$ and $\delta_y \to \hat{\delta}_y = |y\rangle\langle y|$. The MMD is given by the norm of the difference operator $\hat{D}=\hat{\delta}_x - \hat{\delta}_y=|x\rangle\langle x| - |y\rangle\langle y|$:
\begin{align*}
    \text{MMD}[\mathcal{B}_2(\mathcal{H})](\delta_x,\delta_y) &= \big\|  |x\rangle\langle x| - |y\rangle\langle y|  \big\|_2  \\
    &= \big[ 2 - 2|\langle x|y\rangle|^2 \big]^{1/2}  \\
    &= \sqrt{2}\big[ 1 - |k'(x,y)|^2 \big]^{1/2}  \\
    &= \sqrt{2}\big[ 1 - k(x,y) \big]^{1/2} 
\end{align*}
The QPM for these measures can be calculated by diagonalizing the difference operator $\hat{D}=|x\rangle\langle x| - |y\rangle\langle y|$. The eigenvalues of this operator are: $\lambda_\pm = \pm\sqrt{1-|\langle x|y\rangle|^2}$, so we calculate:
\begin{align*}
    \text{QPM}(\delta_x,\delta_y) &= \tfrac{1}{2}\big\|  |x\rangle\langle x| - |y\rangle\langle y|  \big\|_1 \\
    &= \tfrac{1}{2}\sum_{i=1}^N|\lambda_i| = \big[ 1 - |\langle x|y\rangle|^2 \big]^{1/2}  \\
    &= \big[ 1 - |k'(x,y)|^2 \big]^{1/2}  \\
    &= \big[ 1 - k(x,y) \big]^{1/2} \\
    & = \frac{1}{\sqrt{2}} \mathrm{MMD}[\mathcal{B}_2(\mathcal{H})](\delta_x,\delta_y)
\end{align*}
Up to a factor of $\sqrt{2}$, therefore, the QPM between point masses is identical to the MMD calculated using the Hilbert-Schmidt metric, i.e.\@ with the norm-squared kernel. Conversely, MMD using a kernel $K$ will give the same point-mass metric as QPM using the square-root kernel $\sqrt{K}$. This is an important detail when replacing MMD with QPM.

Thus, given a calculation that uses MMD, one should first ask if the MMD kernel can be written as the norm-square of another kernel. For many common kernels, including the Gaussian, Laplacian, and inverse-multiquadric kernels, the square root of each of these kernels is also a strictly positive definite kernel. In that case, the best recipe for replacing it with QPM is as follows:
\begin{enumerate}
    \item We are given an MMD calculation between two probability measures with finite support, i.e., 
    $$ \mathbb P=\sum_{k=1}^{n} p_k\delta_{x_k}, \qquad
    \mathbb Q=\sum_{\ell=1}^{m} q_\ell\delta_{y_\ell},$$

    \item For convenience, use a single set of indices from $1$ to $N=n+m$ with coefficients and points of support defined as:
    $$c_i = \begin{cases}
    p_i & 1\leq i\leq n,\\
    -q_{(i-n)} & n+1\leq i\leq N \end{cases} $$
    $$ x_i = \begin{cases}
    x_i & 1\leq i\leq n,\\
    y_{(i-n)} & n+1\leq i\leq N, \end{cases} $$

    \item Where MMD was calculated from the Gram matrix $\langle x_i|x_j\rangle = K(x_i,x_j)$, calculate a new Gram matrix using the square root of the kernel, i.e.\@ $\langle x_i|x_j\rangle=\sqrt{K(x_i,x_j)}$. One must ensure here that $\sqrt{K}$ is also a valid (positive definite) kernel.
    
    \item Factor this new Gram matrix $G=HH^\dagger$ using Cholesky factorization. Define $M=H^\dagger CH$ where $C=\mathrm{diag}(c_i)$, and find the eigenvalues $\lambda_i$ of this Hermitian $M$. If some points are coincident, $x_i=x_j$ for $i\neq j$, Cholesky will fail; one could eliminate these coincidences to restore a positive-definite Gram matrix or use a general eigensolver and extract the (real) eigenvalues $\lambda_i$ of the non-Hermitian matrix $CG$.
    
    \item Then $\text{QPM}(\mathbb{P},\mathbb{Q})=\tfrac{1}{2}\sum_{i=1}^N |\lambda_i|$ with conventional normalization. If isometry between the MMD and QPM metrics for point masses is desired instead, multiply the QPM by $\sqrt{2}$.
\end{enumerate}

In case the kernel $K$ used in the existing MMD calculation cannot be written as the norm-square of another kernel, there are two reasonable options. One could proceed using the kernel $K$ directly, in which case the QPM base metric effectively uses $|K|^2$ as its kernel. This could have undesirable effects: for instance, the effective bandwidth of $|K|^2$ will be smaller than that of $K$. 

Alternately, one could find a kernel whose norm-square closely approximates the original kernel over the relevant range. We use the second approach in the GMMN example below: the original kernel was defined as a sum of Gaussians with a range of widths, giving a ``fat-tailed'' kernel that was effective in that particular application. This sum of Gaussians cannot be written as the square of any kernel, but the fat-tailed kernel can be matched with an inverse-multiquadric kernel, which does have a valid positive-definite kernel square root. In practice, this second approach will tend to follow the original geometry more closely.

\subsection{QPMs in Generative Moment Matching Networks (GMMNs)}\label{GMMN-application}
As an example of the ``drop-in'' replacement of MMD with QPM, we turn to a classic generative modeling task that relies on MMD to measure the distance between probability distributions: the Generative Moment Matching Network (GMMN) \cite{Li2015GMMN}. We reproduce this result and then replace the MMD metric with QPM, keeping all else constant (architecture, optimizer, training schedule, etc.).

The original GMMN implementation used a 5-layer fully-connected feed-forward network to map a random latent vector $z$ into a data space, such as the space of MNIST images \cite{lecun1998gradient}. This transforms a fixed probability distribution $P_z$ on the latent space into a distribution on the data space, and training seeks to match that distribution to that of the target dataset, using the MMD metric as its objective. This is a conceptually simple and appealing idea: the distribution of generated images should match as closely as possible to the actual distribution. In the specific example from \cite{Li2015GMMN}, they begin with a uniform distribution on $[-1,1]^{10}$ in a 10-dimensional latent space and pass it through 5 layers using ReLU activation with 64, 256, 256, and 1024 hidden dimensions respectively, ending with sigmoid activation on 784 dimensions to match the $28 \times 28$ grayscale MNIST images with pixel values in $[0,1]$. The net was trained by choosing random batches of 1000 images from the dataset, generating 1000 images from the network, and backpropagating to minimize MMD. The code is available from the original author at \url{https://github.com/yujiali/gmmn}.

We first re-implemented this network architecture using PyTorch and reproduced the results from the paper. Then, from this baseline, we swapped QPM in place of MMD as the objective function to run a side-by-side comparison of these two probability metrics. This presented two key challenges. First, the kernel used for MMD in the original paper was an equal mixture of 6 Gaussians with a range of values for $\sigma^2=\{2,5,10,20,40,80\}$. (Note that the original author's code uses the variable name {\tt sigma} for these values but careful examination shows that they actually represent $\sigma^2$ in the usual parameterization of the Gaussian kernel.) As discussed at the end of SI Appendix, \ref{mmd2qpm}, although the sum of Gaussians is a valid kernel, its square root is not. Thus, to preserve the geometry in the embedding space, we created a kernel that closely matched the original mixture of Gaussians over the data space. We used a generalized inverse-multiquadric kernel with two parameters: a length scale $\lambda$ and exponent $\alpha$, yielding:
$$ k(\mathbf{x},\mathbf{y}) = \left[1+\frac{\|\mathbf{x}-\mathbf{y}\|^2}{2\alpha \lambda^2}\right]^{-\alpha}$$
With the best-fit parameters, this kernel yielded an excellent match to the ``fat-tailed'' mixture of Gaussians and reproduced the original results nicely. Now we could use the (valid) square root of this kernel as the basis for the QPM, ensuring that the fundamental embedding geometry is preserved.

The second challenge was related to the MNIST dataset, which has many pairs of images that are very similar to one another, with Euclidean distance $\|\mathbf{x}-\mathbf{y}\|$ near zero, or, equivalently, a kernel inner product $k(\mathbf{x},\mathbf{y})$ very close to 1 (our kernels are normalized to $k(\mathbf{x},\mathbf{x})=1$). As a result, even if the Gram matrix for 1000 samples from the dataset is not truly singular, rounding error can cause Cholesky factorization to fail. We tried two approaches that yielded identical results: (1) numerical regularization by adding a small multiple of the identity ($\epsilon=10^{-4}$) to the Gram matrix, and (2) using a generalized eigensolver directly with the non-Hermitian matrix $CG$ ($C = \mathrm{diag}(c_i)$; see the end of SI Appendix, \ref{mmd2qpm}, for notation). The first approach is faster and is probably superior for most applications: the stochastic nature of the sampling and generation process is likely to outweigh any small errors in calculating the ``true'' eigenvalues (and their gradients). Nonetheless, for a robust implementation of QPM, the generalized eigensolver offers a backup in the event that Cholesky fails.

To generate the images in Figure \ref{fig:generated}, we trained for 200 epochs on the MNIST training set (50,000 images) using batches of 1000 and the PyTorch \texttt{AdamW} optimizer with a learning rate of 0.005 and default parameters. (See \url{https://doi.org/10.5281/zenodo.17024967} for results and code.) A single training loop trained two models together with the only difference being the objective function (QPM versus MMD). The MMD results are qualitatively similar to those from \cite{Li2015GMMN}, with many visible artifacts that do not improve beyond 200 epochs. In contrast, the images generated using QPM appear much more similar to the MNIST samples.

After training, we used a kernel two-sample test \cite{Muandet2017} to assess whether the generated images were similar to images from the dataset. We compared a batch of 1000 real images (from a held-out test set) with a batch of 1000 generated images. The baseline distance between these batches was measured using the same metrics as for training (MMD or QPM). Then, to determine if this distance was statistically significant, we randomly permuted all 2000 images into two new batches of 1000 each. We repeated this shuffling process 1000 times, calculating the distance for each random split. If the real vs. generated images were from the same distribution (null hypothesis), then the baseline distance should be typical of these random distances. This would imply that, based on that metric, the trained network met its goal of matching generated images to the true distribution.

Given $N=1000$ permutations, if $r$ of them exceed the baseline distance, the continuity-corrected $p$-value is $(r+1)/(N+1)$. Repeating each two-sample test for 30 distinct batches and taking the average, we find $p \approx 0.006$ for the MMD images and $p \leq 0.001$ for the QPM images. In both cases, these metrics can detect that the generated images are not drawn from the same distribution as the MNIST dataset---confirming our visual inspection.

For a more challenging test, we trained a GMMN on the CelebA dataset \cite{liu2015faceattributes} (\url{https://mmlab.ie.cuhk.edu.hk/projects/CelebA.html}), pre-processed to yield $64 \times 64$ RGB images---a data space of 12,288 dimensions. This dataset contains over 200,000 images with faces in various orientations, lighting conditions, and backgrounds. In place of the MLP used for MNIST, we adopted the convolutional generator {\tt DCGAN\_G} from \url{https://github.com/martinarjovsky/WassersteinGAN}. (Note there is nothing ``adversarial'' about our approach; there is no discriminator network.) The {\tt DCGAN\_G} network starts with a multivariate standard normal distribution on a 100-dimensional latent space and uses 5 convolutional layers with BatchNorm and ReLU after each hidden layer and tanh activation for the final layer (with pixel values scaled to $[-1,1]$). Two models were trained together (training set of 180,000 images, 50 epochs, batches of 1000), comparing QPM with MMD as objective functions. We used a simple Gaussian kernel to avoid the ``kernel square root'' problem (the square root of a Gaussian kernel is also a valid kernel), and found that the diversity of CelebA images ensured that Cholesky factorization typically succeeded without regularization.

Examining Figure \ref{fig:generated}, the images generated using these two different metrics appear quite different. Many of the MMD images have bizarre artifacts, while all the QPM images show clear facial structure. More strikingly, the two-sample test for the MMD-generated images, using the same metric, yielded $p \approx 0.23$ (average of 30 runs); this metric could not reject the null hypothesis that the generated images were drawn from the same distribution as CelebA. This result shows the limitations of MMD in high-dimensional spaces: the training has optimized the MMD-distance between the data and the generated images, and statistically the two sets of images are similar based on this metric, but the images are not even remotely similar. In contrast, QPM readily distinguishes its own generated images or the MMD-generated images from the CelebA data, with $p \leq 0.001$ in both cases. We note that the MMD metric became similarly saturated (unlike QPM) for Gaussian kernels with $\sigma$ ranging from 15 up to 80 (the median distance between pairs of image vectors drawn from the dataset); the results in Figure \ref{fig:generated} took $\sigma$ as half this median distance.

While training the CelebA generator using QPM alone doesn't generate photorealistic images, at least this metric can detect that the generated images are different from the dataset (using the two-sample test). In contrast, optimizing for MMD yielded very poor images that were nonetheless viewed by that metric as being statistically indistinguishable from the training data. This simple experiment shows that MMD can be unreliable in high dimensions where QPM is robust. The improved performance of QPM is most likely due to its rich class of dual functions (Theorem \ref{uniform-approx}, SI Appendix, \ref{uniform-approx-proof}). In contrast, the dual functions for MMD must vanish at infinity, so its witness functions (unit ball in the RKHS) effectively have limited support, even on compact domains. It is likely that these witness functions can only detect differences along a limited number of dimensions, leaving large parts of the high-dimensional space effectively unmonitored.

\end{document}